\newtheorem{assumption}{Assumption}
\newtheorem{theorem}{Theorem}
\newtheorem{lemma}{Lemma}
\newtheorem{definition}{Definition}
\newtheorem{corollary}[theorem]{Corollary}
\title{Incentivizing Truthful Language Models via Peer Elicitation Games}
\author{%
  Baiting Chen\thanks{Equal contribution.} \\
  UCLA \\
  \texttt{brantchen@ucla.edu} \\
  \And
  Tong Zhu\footnotemark[1] \\
  UCLA \\
  \texttt{toz015@ucla.edu} \\
  \And
  Jiale Han \\
  UCLA \\
  \texttt{jialehan@ucla.edu} \\
  \And
  Lexin Li \\
  UC Berkeley \\
  \texttt{lexinli@berkeley.edu} \\
  \And
  Gang Li \\
  UCLA \\
  \texttt{vli@ucla.edu} \\
  \And
  Xiaowu Dai\thanks{\textit{Address for correspondence:} Xiaowu Dai, Department of Statistics and Data Science, UCLA, 8125 Math Sciences Bldg \#951554, Los Angeles, CA 90095, USA. Email: daix@ucla.edu.} \\
  UCLA \\
  \texttt{daix@ucla.edu} \\
}
\begin{document}

\maketitle

\begin{abstract}
Large Language Models (LLMs) have demonstrated strong generative capabilities but remain prone to inconsistencies and hallucinations. We introduce Peer Elicitation Games (PEG), a training-free, game-theoretic framework for aligning LLMs through a peer elicitation mechanism involving a generator and multiple discriminators instantiated from distinct base models. Discriminators interact in a peer evaluation setting, where utilities are computed using a determinant-based mutual information score that provably incentivizes truthful reporting without requiring ground-truth labels. We establish theoretical guarantees showing that each agent, via online learning, achieves sublinear regret in the sense their cumulative performance approaches that of the best fixed truthful strategy in hindsight. Moreover, we prove last-iterate convergence to a truthful Nash equilibrium, ensuring that the actual policies used by agents converge to stable and truthful behavior over time. Empirical evaluations across multiple benchmarks demonstrate significant improvements in factual accuracy. These results position PEG as a practical approach for eliciting truthful behavior from LLMs without supervision or fine-tuning. 
\end{abstract}

\section{Introduction}
LLMs have achieved remarkable progress in natural language generation, reasoning, and few-shot learning \citep{wei2022chain, achiam2023gpt, lightman2023let, koike2024outfox}. Despite these advances, they remain fundamentally limited by hallucination—the generation of outputs that are inconsistent with previous responses or factually incorrect \citep{mitchell2022enhancing, azaria2023internal, huang2024survey, huang2025survey}. A common failure mode is inconsistency: LLMs may produce different answers to semantically equivalent prompts due to stochastic decoding or sensitivity to minor variations in phrasing \citep{wang2022self, mitchell2022enhancing, hanprototypical, chen2023close}.Another major challenge is lack of truthfulness, where outputs are not semantically accurate or aligned with verifiable facts, even when the relevant knowledge is implicitly encoded in the model \citep{azaria2023internal, elazar2021measuring}.  These limitations are particularly concerning in high-stakes domains such as scientific discovery, education, and decision-making, where outputs must be both consistent and truthful \citep{wang2025adaptive, huang2025survey}.  This motivates a central question: \emph{How can we reliably elicit consistent and truthful behavior from LLMs?}

To address this question, a growing line of work has explored post-training alignment techniques, such as supervised fine-tuning and reinforcement learning with human feedback \citep{bai2024hallucination, sun2023aligning, tonmoy2024comprehensive}. While these methods can be effective, they are typically computationally intensive, require extensive human annotation, and lack theoretical guarantees of truthful behavior \citep{kaplan2020scaling, rafailov2023direct, sun2023aligning}. Their dependence on model internals also limits their scalability and transferability across different LLMs \citep{casperopen, wang2022self}.

An alternative line of research draws on game-theoretic frameworks that aim to improve LLM reliability through structured multi-agent interactions \citep{cheng2024self, jacob2023consensus}. These approaches offer the advantage of training-free alignment using only black-box access to LLMs. For example, the \emph{consensus game} aligns a generator and a discriminator by rewarding agreement between their outputs \citep{jacob2023consensus}. However, because the objective is based on mutual agreement, it can lead to uninformative or collusive equilibria, where agents reinforce each other’s responses even when those responses are factually incorrect.

We propose \emph{Peer Elicitation Games} (PEG), a training-free, game-theoretic framework for aligning LLMs through structured peer evaluation. In PEG, a generator produces candidate responses to prompts, and multiple independently instantiated LLMs act as discriminators. Each discriminator assesses the generator’s output and is, in turn, evaluated by the other discriminators—who serve as peer referees. This mutual evaluation mechanism assigns utilities based on the level of agreement among discriminators, encouraging truthful reporting without relying on ground-truth labels. This incentive structure ensures that truthful behavior by each discriminator constitutes a Nash equilibrium, while discouraging collusion or uninformative consensus. To implement the framework, we apply the online mirror descent algorithm to iteratively update each discriminator’s policy, enabling the system to converge toward equilibrium through repeated, utility-driven interaction.  The majority vote among discriminators is then returned to the generator, serving as a feedback signal for improving future generations. The design of PEG—introducing multiple LLM discriminators and rewarding them through peer evaluation—is inspired by renowned concepts in biology, including cognitive synergy \citep{luppi2022synergistic, tononi1994measure} and collective intelligence \citep{woolley2010evidence, carrillo2021consensus}, where diverse agents, each holding partial or noisy information, can collectively arrive at judgments that surpass what any individual could achieve alone. PEG enables language models to self-organize toward truthful and stable outputs using only local incentives. This collective dynamic offers a scalable and supervision-free approach to building more trustworthy LLM systems.

To summarize, the main contribution of our work is as follows:
\begin{itemize}
\item We propose PEG, a training-free framework for eliciting truthful behavior from LLMs, without relying on ground-truth labels or model fine-tuning. The framework casts the interaction between a generator and multiple heterogeneous discriminators as a multi-agent peer evaluation game as depicted in Figure \ref{fig:peer prediction}.
\item We provide theoretical guarantees showing that PEG promotes truthful reporting as a Nash equilibrium. Furthermore, when each discriminator updates its policy using online mirror descent, the system achieves sublinear regret and converges to a truthful Nash equilibrium over repeated interactions.
\item Through experiments on a range of benchmarks, including ARC, MMLU, and GPQA, we demonstrate that PEG improves factual accuracy by over 10\% compared to existing methods. Additionally, our results show that smaller models (e.g., 7B parameters) using PEG can match or outperform much larger models (e.g., 65B).
\end{itemize}

\begin{figure}[!ht]
    \centering
\includegraphics[width=1.0\textwidth]
{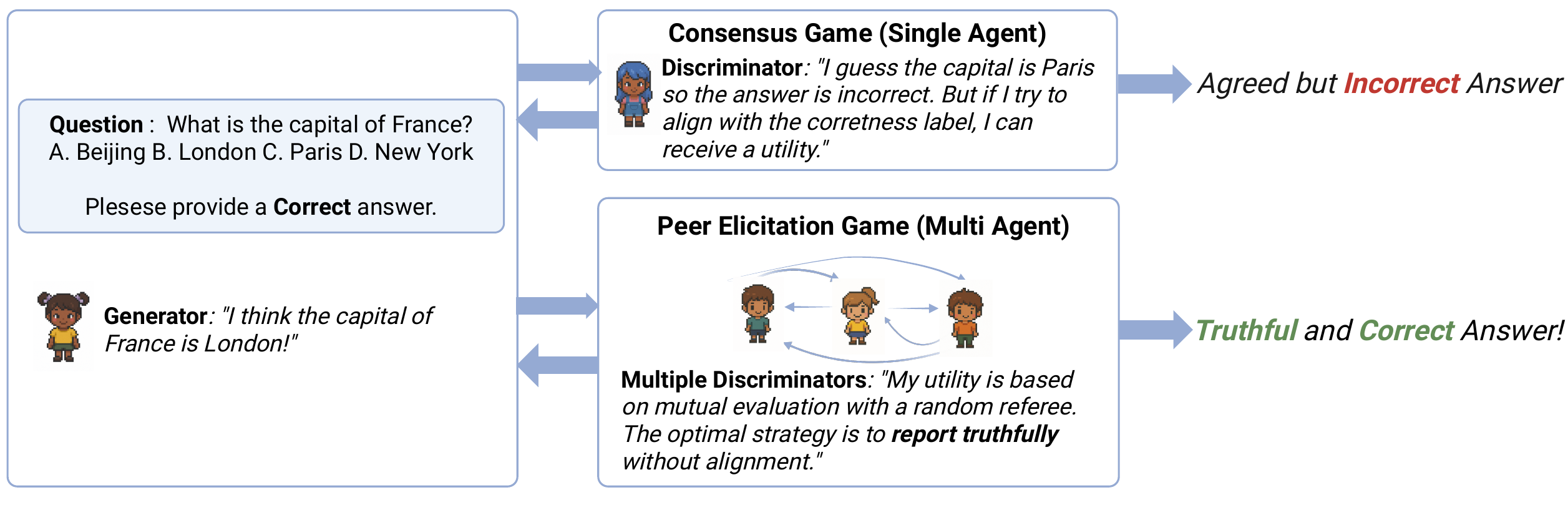}
\vspace{-0.1in}
\caption{Comparison of the consensus game and PEG: when multiple discriminator LLMs independently evaluate the generator's output and are rewarded based on mutual agreement, their collective judgment aligns more closely with true answers.} 
    \label{fig:peer prediction}
\end{figure}

\textbf{Related Work.}
This work relates to several lines of research. 
\emph{First}, game-theoretic frameworks have been explored to improve reasoning and alignment in LLMs through structured interactions \citep{cheng2024self}. Most relevant is the Consensus Game framework \citep{jacob2023consensus}, which promotes self-consistency by reconciling the outputs of a generator LLM and a discriminator LLM through game-theoretic interaction. Follow-up studies extend this perspective to decoding \citep{chen2024decoding}, federated learning with competing interests \citep{yoon2025multiplayer}, and embodied tasks like vision-language navigation \citep{yu2024vln}. Our work builds on this foundation by introducing a multi-agent formulation with explicit peer interaction and a utility mechanism that promotes truthful reporting rather than mere agreement. 
\emph{Second}, a growing body of research demonstrates that multi-agent systems of LLMs can collaborate through debate and cooperation to enhance factuality and task performance \citep{liang-etal-2024-encouraging, du2023improving, qian-etal-2024-chatdev}. Our approach draws on this idea by coordinating multiple LLMs through structured peer evaluation. 
\emph{Third}, our work connects to the literature on learning Nash equilibria in multi-agent systems, particularly no-regret learning in general-sum games \citep{hu2003nash, wang2002reinforcement, casgrain2022deep} and preference-based learning frameworks such as Nash learning from human feedback \citep{munos2023nash} and direct preference optimization \citep{wang2024magnetic}. We contribute to this literature by showing how no-regret learning dynamics converge to truthful equilibria in a structured, incentive-compatible setting. 
\emph{Finally}, PEG builds on peer prediction mechanisms \citep{prelec2004bayesian, miller2005eliciting}, which have been extended to handle complex settings \citep{witkowski2012robust, radanovic2014incentives, zhang2014elicitability, schoenebeck2023two}. We adapt these ideas to LLM agents by extending multi-task peer prediction mechanisms \citep{agarwal2020peer, kong2020dominantly} to a setting where multiple LLMs jointly evaluate responses via mutual scoring and incentive-aligned interaction.


\section{Methodology}

This section introduces the key components of PEG including a formal definition of truthfulness and an overview of the interactions between the generator and multiple discriminator agents.


\subsection{Truthfulness as Incentive Compatibility}
We adopt the concept of \emph{incentive compatibility} (IC) from mechanism design \cite{myerson1979incentive} to define the truthfulness.  IC ensures that each agent maximizes its utility by reporting its true private information. Specifically, let \( c_i \) denote the true private information of agent \( i \), and \( r_i \) a possible report. In our setting, the agents are discriminators, each independently evaluating a generated response. The true private information \( c_i \in \{0,1\} \) refers to the agent's truthful judgment on whether the response is factually correct (\(c_i = 1\)) or incorrect (\(c_i = 0\)). The report \( r_i \in \{0,1\} \) is the label that the agent chooses to submit to the system based on (or possibly deviating from) this truthful judgment. Let \( c_{-i} \) represent the truthful information of all other agents, The mechanism assigns utility \( u_i(r_i, r_{-i}) \) to agent \(i\) based on its own report and the reports of all other agents. A mechanism is said to be \textit{incentive compatible (IC)} if, for all agents \( i \), all \( c_i \), \( c_{-i} \), and all possible reports \( r_i \),
\begin{equation}\label{eq:IC}
u_i(c_i,c_{-i}) \geq u_i(r_i,c_{-i}).
\end{equation}

The consensus game framework \citep{jacob2023consensus} involves one LLM generator and one LLM discriminator, aiming to improve consistency by rewarding agreement between agents. However, this structure does not guarantee truthfulness, as agents are rewarded only for agreement rather than accuracy and may benefit from reporting false but mutually agreeable outputs. For example, the generator produces an incorrect answer to a question when tasked with generating a correct one. The discriminator, whose truthful evaluation would be to mark the answer as incorrect (\( c_i = 0 \)), may instead receive a higher utility for agreeing with the generator by reporting it as correct (\( r_i = 1 \)). In such a scenario, the utility of truthful reporting is lower than that of misreporting, i.e., \( u_i(c_i, c_{-i}) < u_i(r_i, c_{-i}) \), which creates a clear incentive for the discriminator to misreport. This violates the IC property defined in Eq.~\eqref{eq:IC}, highlighting a fundamental limitation of consensus-based approaches.. 

In contrast, recent studies have shown that multi-agent debates can better integrate the diverse perspectives of multiple models, leading to more accurate and reliable outputs \citep{chan2023chateval, du2023improving}.
Motivated by these findings, we propose PEG in which multiple discriminator agents independently evaluate each generated response. In this setup, each discriminator is rewarded based on agreement with its peer discriminators. We show that when all other discriminators report truthfully, the best response for any individual discriminator is also to report truthfully—thus, truthful reporting becomes a Nash equilibrium. That is, PEG achieves IC property defined in \eqref{eq:IC}, where no agent can improve its utility by deviating unilaterally.
Figure~\ref{fig:peer prediction} illustrates a key distinction between our PEG and the consensus game, where the generator and a single discriminator are incentivized to align with each other, which can lead to consistent but incorrect outputs. In contrast, PEG relies on independent mutual evaluations from multiple discriminators, promoting outputs that are not only consistent but also correct outputs.

\subsection{Peer Elicitation Games (PEG)}
\label{sec:PEG}
We consider a generator \( G \) and a set of \( n \) discriminators \( D_1, D_2, \ldots, D_n \). Each generator and discriminator maintains a probabilistic policy. At each round \( t \in \{1, 2, \ldots, T\} \), the system assigns a set of tasks indexed by \( k_t = 1, 2, \ldots, K_t \), where \( K_t \) is the number of tasks in round \( t \).
For each task \( (t, k_t) \), the generator receives an input question \( X_{t,k_t} \) along with a correctness label \( V_{t,k_t} \in \{0,1\} \), and produces a response \( Y_{t,k_t} \) according to its probabilistic policy. The generator’s policy is a conditional distribution over responses given the input and target label:
\[
\pi_G(Y_{t,k_t} \mid X_{t,k_t}, V_{t,k_t}) = \mathbb{P}(Y_{t,k_t} \mid X_{t,k_t}, V_{t,k_t}; \theta_G),
\]
where \( \theta_G \) denotes the generator’s parameters.

Each discriminator \( D_i \) observes \( X_{t,k_t} \) and the corresponding generator response \( Y_{t,k_t} \), and outputs a predicted label \( V_{i,t,k_t} \in \{0,1\} \) according to its policy:
\[
\pi_{D_i}(V_{i,t,k_t} \mid X_{t,k_t}, Y_{t,k_t}) = \mathbb{P}(V_{i,t,k_t} \mid X_{t,k_t}, Y_{t,k_t}; \phi_i),
\]
where \( \phi_i \) denotes the parameters of discriminator \( D_i \).

The goal is to generate responses that are both consistent and truthful. Our method achieves this by aligning the generator's output with the majority judgment of discriminators, while incentivizing truthful evaluation from discriminators through peer evaluation.
The overview of our method is illustrated in Figure~\ref{fig:overview}. The left branch (in red) corresponds to the supervised alignment goal of imitating majority vote labels, while the right branch (in blue) depicts the PEG that ensures incentive compatibility among discriminators.

\begin{figure}[ht]
    \centering
    \includegraphics[width=\textwidth]{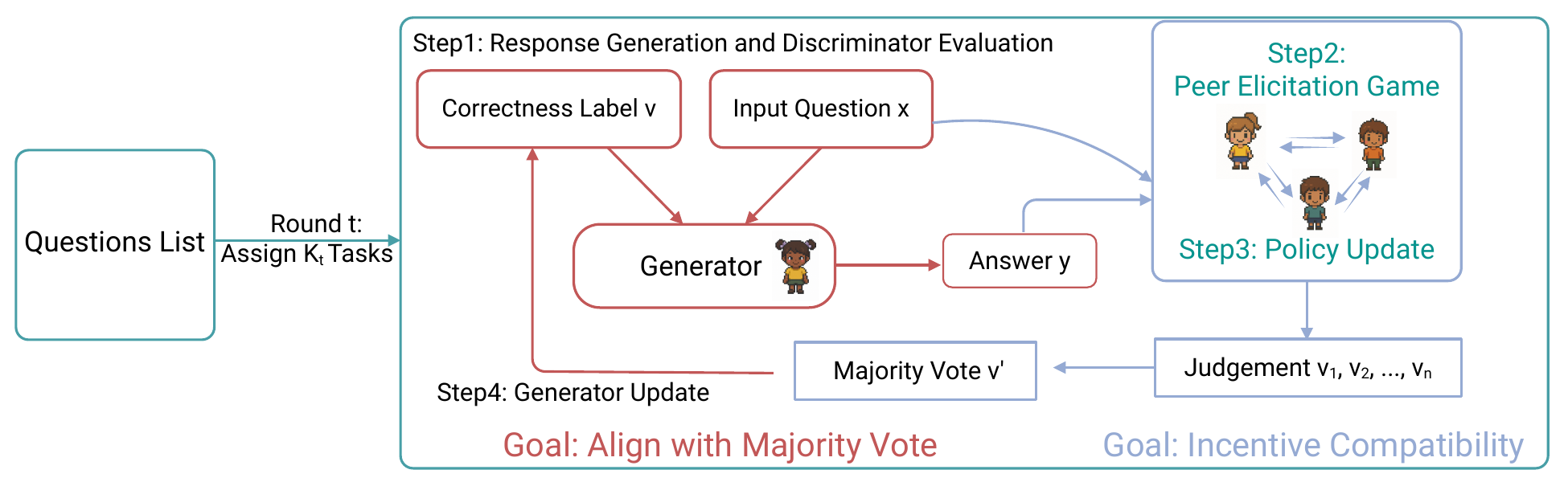}
    \vspace{-0.1in}
    \caption{Overview of our method: multiple discriminators independently evaluate the response provided by the generator, while each discriminator is rewarded based on mutual agreement with peers via PEG. This setup incentivizes truthful reporting for discriminators and aligns the generator without requiring ground-truth labels.}
\label{fig:overview}
\end{figure}

\textbf{Step 1: Response Generation and Discriminator Evaluation.} 
At each round, given an input question \( X_{t,k_t} \) and a correctness label \( V_{t,k_t} \), the generator produces a response 
\(
Y_{t,k_t} \sim \pi_G(\cdot| X_{t,k_t}, V_{t,k_t}).
\)
Each discriminator independently evaluates the response based on the question and outputs a correctness report
\(
V_{i,t,k_t} \sim \pi_{D_i}(\cdot |X_{t,k_t}, Y_{t,k_t}).
\)

\textbf{Step 2: Peer Elicitation Games.} To incentivize truthful reporting, the discriminator agents engage in a peer elicitation game, where the utility of each agent is based on a mutual evaluation. Importantly, this utility requires only reports from discriminators and does not rely on access to ground truth. 
Assume that in round \( t \), all discriminators are assigned \( K_t \) tasks. For each task \( k_t \in [K_t] \), discriminator \( i \) privately observes a signal \( C_{i,t,k_t} \in \{0,1\} \) and submits a report \( V_{i,t,k_t} \in \{0,1\} \). The task set \( \{1, \ldots, K_t\} \) is arbitrarily partitioned into two disjoint subsets \( \mathcal{K}_{1,t} \) and \( \mathcal{K}_{2,t} \). For every pair of distinct agents \( i \neq j \in [n] \), we construct two \( 2 \times 2 \) co-report matrices \( \mathbf{M}_{1,t}^{ij} \) and \( \mathbf{M}_{2,t}^{ij} \), one for each subset \( \mathcal{K}_{\ell,t} \) where \( \ell = 1,2 \). Each entry of \( \mathbf{M}_{\ell,t}^{ij}(c, c') \) counts the number of tasks \( k_t \in \mathcal{K}_{\ell,t} \) for which agents \( i \) and \( j \) reported the pair \( (c, c') \), i.e.,
\[
\mathbf{M}_{\ell,t}^{ij}(c, c') := \sum_{k_t \in \mathcal{K}_{\ell,t}} \mathds{1}\big((V_{i,t,k_t}, V_{j,t,k_t}) = (c, c')\big).
\]
The total payment to agent \( i \) in round \( t \) is defined as the sum over all other agents of the product of the determinants of these matrices:
\[
p_{i,t} := \sum_{j \neq i} \det(\mathbf{M}_{1,t}^{ij}) \cdot \det(\mathbf{M}_{2,t}^{ij}).
\]
In our setting, the discriminators are rewarded based on mutual evaluation. Meanwhile, the generator is incentivized to produce responses that align with the majority consensus among high-confidence discriminators. The utility functions are computed over a batch of \( K_t \) tasks in round \( t \). The overall utility functions are defined as:
\begin{equation}
\label{eq:rewards}
\begin{aligned}
u_{D_i}^{(t)} &= \mathbb{E}_{V_{i,t,k_t} \sim \pi_{D_i}( \cdot \mid X_{t,k_t}, Y_{t,k_t})} \left[ p_{i,t} \right], \quad \forall i \in \{1,\dots,n\}, \\
u_G^{(t)} &= \mathbb{E}_{Y_{t,k_t} \sim \pi_G( \cdot \mid X_{t,k_t}, V_{t,k_t})} \left[ \mathds{1}(V_{t,k_t} = \hat{V}_{t,k_t}) \right],
\end{aligned}
\end{equation}
where \( k_t \in \{1, \ldots, K_t\} \) indexes the tasks in round \( t \), and \( \hat{V}_{t,k_t} \) denotes the majority vote label aggregated from the discriminator reports on task \( (t, k_t) \).

\textbf{Step 3: Policy Update.}
We assume that the policy space consists of conditional probability distributions over inputs, and focus on a localized subset around a reference policy \(\pi^*\) (e.g., a truthful reporting strategy) \citep{schulman2015trust}. This constraint reflects the intuition that although policies may adapt to optimize for high utility, they should not obviate too far from truthful behavior to maintain semantic consistency and interpretability. In practical terms, this can be enforced by initializing the agent with a fine-tuned model that embodies \( \pi^* \), and constraining updates to remain within a trust region \cite{qu2020scalable,zhang2023global}. Specifically, we define the local policy neighborhood as:
\begin{equation}
\Pi_{\text{local}} = \left\{ \pi(\cdot \mid \text{input}; \theta) \in \Pi \,\middle|\, \|\pi(\cdot \mid \text{input}; \theta) - \pi^*(\cdot \mid \text{input})\| < \delta \right\}.
\end{equation}
Our goal is to iteratively learn the policy that maximizes the utility function, aligning with the online learning framework that sequentially optimizes an objective function by searching for its critical point \citep{hoi2021online,shalev2012online}. Since each policy is a probability distribution, we adopt the Online Mirror Descent (OMD) algorithm with negative entropy as the Bregman divergence, which naturally preserves the probabilistic structure of the policy and regularizes each update by penalizing large deviations from the previous policy \citep{duvocelle2023multiagent}. The update rules for the discriminators and generator are:
\begin{equation}
\begin{aligned}
\pi_{D_i}^{(t'+1)}(V_{i,t,k_t} \mid X_{t,k_t}, Y_{t,k_t}) &\propto \pi_{D_i}^{(t')}(V_{i,t,k_t} \mid X_{t,k_t}, Y_{t,k_t}) \exp\left[\eta_t^{D_i} \nabla_{\pi_{D_i}^{(t')}(V_{i,t,k_t} \mid X_{t,k_t}, Y_{t,k_t})} u_{D_i}^{(t')}\right], \\
\pi_{G}^{(t'+1)}(Y_{t,k_t} \mid X_{t,k_t}, V_{t,k_t}) &\propto \pi_{G}^{(t')}(Y_{t,k_t} \mid X_{t,k_t}, V_{t,k_t}) \exp\left[\eta_t^G \nabla_{\pi_{G}^{(t')}(Y_{t,k_t} \mid X_{t,k_t}, V_{t,k_t})} u_{G}^{(t')}\right].
\end{aligned}
\label{eq:policy update}
\end{equation}
Intuitively, these updates guide each agent to adjust its policy to direction yields higher utility. After updating, the individual judgments \( \{V_{1,t,k_t}, \ldots, V_{n,t,k_t}\} \) from all discriminators on task \( (t, k_t) \) are aggregated via majority vote to form a consensus label \( \hat{V}_{t,k_t} \), which serves as a proxy for correctness.

\textbf{Step 4: Generator Update.} The utility of the generator is determined by whether its output aligns with the consensus label derived from the discriminators. Specifically, the generator receives utilies when its generated response matches the consensus label \( \hat{V}_{t,k_t} \). This design enables learning without requiring supervised fine-tuning or access to explicit ground-truth correctness labels.




\section{Theoretical Guarantees}
\label{sec:theory}
In this section, we present three main theoretical results: (i) the mechanism incentivizes dominantly truthful reporting in Section~\ref{sec:dominantly truthful}; (ii) both the generator and discriminators achieve sublinear regret under online learning dynamics in Section~\ref{sec:no regret}; and (iii) last-iterate converges to a truthful Nash equilibrium in Section~\ref{sec:convergence}. Here, last-iterate convergence means that the policy used in the final iteration converges to the equilibrium, rather than requiring averaging over past iterations \citep{cai2023doubly}.

\subsection{Dominant Truthfulness}
\label{sec:dominantly truthful}


PEG satisfies the dominant truthfulness property, meaning that truthful reporting is a dominant strategy for each discriminator, as it yields the highest expected utility regardless of the strategies chosen by other agents. This ensures that truthful behavior is consistently incentivized. We now formally state this property of PEG. 
\begin{lemma}\label{lemma:reward}
   Let \( n \) be the number of agents (e.g., discriminators) and \( K_t \) be the number of tasks assigned in round \( t \) as defined in Section~\ref{sec:PEG}. When \( n \geq 2 \) and \( K_t \geq 4 \), under mild assumptions, PEG is dominantly truthful and satisfies IC in Eq.~\eqref{eq:IC}. That is, for every agent \( i \), the truthful reporting strategy maximizes their expected payment regardless of the strategies chosen by other agents.
\end{lemma}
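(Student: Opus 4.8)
The plan is to reduce Lemma~\ref{lemma:reward} to the known incentive properties of determinant-based multi-task peer-prediction mechanisms \citep{kong2020dominantly, agarwal2020peer}, adapted to the binary-signal, $n$-discriminator setting here. Recall that for a fixed pair $i \neq j$, the payment term $\det(\mathbf{M}_{1,t}^{ij}) \cdot \det(\mathbf{M}_{2,t}^{ij})$ is (up to normalization by $|\mathcal{K}_{1,t}|\,|\mathcal{K}_{2,t}|$) an empirical estimate of the product of determinants of two independent copies of the joint report distribution $\mathbf{Q}^{ij}$ on $\{0,1\}^2$. Since the two subsets $\mathcal{K}_{1,t}$ and $\mathcal{K}_{2,t}$ are disjoint and tasks are (conditionally) i.i.d., $\mathbb{E}[\det(\mathbf{M}_{1,t}^{ij})\det(\mathbf{M}_{2,t}^{ij})] = \mathbb{E}[\det(\mathbf{M}_{1,t}^{ij})]\cdot\mathbb{E}[\det(\mathbf{M}_{2,t}^{ij})] = c\,(\det \mathbf{Q}^{ij})^2 \geq 0$, so the expected pairwise payment equals (a positive multiple of) the squared determinant of the co-report matrix. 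The quantity $\det\mathbf{Q}^{ij} = Q^{ij}_{00}Q^{ij}_{11} - Q^{ij}_{01}Q^{ij}_{10}$ is exactly a measure of the statistical dependence between agent $i$'s and agent $j$'s reports, and it is the determinantal mutual information score whose data-processing / garbling behavior is the crux.

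The key steps, in order: (1) State the ``mild assumptions'' precisely — namely that, conditional on the latent correctness $V_{t,k_t}$, the agents' truthful signals $C_{i,t,k_t}$ are independent across $i$, that each agent's signal is informative (the channel $\mathbb{P}(C_i \mid V)$ is not rank-deficient, equivalently $\mathbb{P}(C_i=1\mid V=1)\neq\mathbb{P}(C_i=1\mid V=0)$), and that each agent uses a stationary reporting strategy across tasks within a round. (2) Fix an agent $i$ and fix arbitrary (possibly untruthful, possibly correlated-with-signal) strategies for all $j \neq i$; show that agent $i$'s report, under any strategy, can be written as passing its true signal $C_i$ through a stochastic ``report channel'' (a row-stochastic $2\times2$ matrix $\mathbf{R}_i$), and that this post-processing factors the co-report matrix as $\mathbf{Q}^{ij} = \mathbf{R}_i^\top \widetilde{\mathbf{Q}}^{ij}$ where $\widetilde{\mathbf{Q}}^{ij}$ is the co-report matrix under truthful $i$. (3) Invoke multiplicativity of the determinant: $\det\mathbf{Q}^{ij} = \det(\mathbf{R}_i)\cdot\det(\widetilde{\mathbf{Q}}^{ij})$, hence $(\det\mathbf{Q}^{ij})^2 = (\det\mathbf{R}_i)^2(\det\widetilde{\mathbf{Q}}^{ij})^2 \leq (\det\widetilde{\mathbf{Q}}^{ij})^2$ because $|\det\mathbf{R}_i| \leq 1$ for any row-stochastic $2\times2$ matrix, with equality iff $\mathbf{R}_i$ is a permutation. (4) Sum over $j \neq i$ (each term is individually maximized by truthfulness, and the truthful strategy simultaneously maximizes every term), conclude that truthful reporting maximizes $\mathbb{E}[p_{i,t}]$, hence $u_{D_i}^{(t)}$, regardless of others' strategies. (5) Observe that the condition $n \geq 2$ guarantees at least one peer $j$ exists so the sum is nonempty, and $K_t \geq 4$ guarantees each of $\mathcal{K}_{1,t}, \mathcal{K}_{2,t}$ can be made nonempty (indeed of size $\geq 2$, so the matrices are not forced to be rank-deficient by sample size); then note that uniqueness of the truthful optimum (strictness, i.e.\ ruling out the relabeling permutation) follows from breaking the $C_i \leftrightarrow 1-C_i$ symmetry, e.g.\ by the informativeness/orientation assumption that agents are at least weakly correlated with the ground truth. (6) Finally, translate ``truthful maximizes expected payment for all opponent strategies'' into the IC inequality \eqref{eq:IC} by specializing the opponents to truthful play $r_{-i}=c_{-i}$.

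The main obstacle I expect is step (2)–(3): carefully justifying the factorization $\mathbf{Q}^{ij} = \mathbf{R}_i^\top\widetilde{\mathbf{Q}}^{ij}$ when the \emph{other} agents may also be playing arbitrary, signal-dependent strategies. The point is that agent $i$'s report channel $\mathbf{R}_i$ acts only on $i$'s coordinate and commutes with whatever $j$ is doing — so the left-multiplication structure is preserved — but making this rigorous requires the conditional-independence assumption in step (1) to ensure the joint law factors through the latent $V$ (or at least through $C_i$) in the right way, and requires being careful that $\widetilde{\mathbf{Q}}^{ij}$ (truthful $i$ vs.\ arbitrary $j$) is a fixed matrix not depending on $\mathbf{R}_i$. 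A secondary subtlety is the strict-versus-weak direction of the inequality and handling the degenerate case $\det\widetilde{\mathbf{Q}}^{ij}=0$ (uninformative pairing), which is precisely what the informativeness assumption excludes; I would state that if every pair is informative then truthfulness is \emph{strictly} optimal, and otherwise it remains weakly optimal, which still suffices for \eqref{eq:IC}. The finite-sample expectation computation in the preamble (that $\mathbb{E}$ of the empirical determinant is proportional to the population determinant, using disjointness of $\mathcal{K}_{1,t},\mathcal{K}_{2,t}$ for the product to split) is routine and I would not grind through it.
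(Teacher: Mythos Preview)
Your proposal is correct and follows essentially the same route as the paper's own proof. The paper packages your steps (2)--(3) as an appeal to the ``strict information monotonicity'' of the determinant mutual information $\mathrm{DMI}(X;Y)=|\det \mathbf{U}_{X,Y}|$ (their Lemma~\ref{lem:info_mono}, citing \cite{kong2020dominantly}), and packages the finite-sample expectation computation as their Lemma~\ref{lem:unbiased_dmi}; you have simply unpacked the first of these by writing the factorization $\mathbf{Q}^{ij}=\mathbf{R}_i^\top\widetilde{\mathbf{Q}}^{ij}$ and invoking $|\det\mathbf{R}_i|\le 1$ directly, which is exactly the content of that monotonicity lemma in the binary case. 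Your treatment is, if anything, more explicit than the paper's about why $K_t\ge 4$ is required (so that each half has at least two samples and the proportionality constant $|\mathcal{K}_{\ell,t}|(|\mathcal{K}_{\ell,t}|-1)$ is nonzero) and about the weak-versus-strict IC distinction when some pair is uninformative.
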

The proof of this lemma, which leverages ideas from \cite{kong2020dominantly}, is provided in Appendix~\ref{proof:incentive}. The design of PEG is grounded in three principles to ensure this incentive guarantee. First, the use of determinant-based utility ensures information-monotonicity: the determinant achieves its maximum value when agents report truthfully, making truthful reporting the most rewarding strategy. Second, the utility function acts as an unbiased estimator of the joint distribution of agent reports, allowing the mechanism to approximate the true distribution without estimation error. Finally, to prevent negative payments, the set of tasks is divided into two disjoint subsets, and payments are computed using the product of determinants from these subsets. As both subsets serve as unbiased estimators of the distribution of agent reports, their determinants are expected to have the same sign. As a result, the payment is always non-negative.

\subsection{Regret Analysis}
\label{sec:no regret}
Next, we show that both the generator and the discriminators can progressively improve their behavior such that their cumulative performance asymptotically approaches that of the best  truthful policy by performing online policy updates as defined in  Eq.~\eqref{eq:policy update} in Section~\ref{sec:PEG}. This is formalized via the standard notion of \textit{no-regret learning}, which measures the cumulative difference between the utility obtained by the learned policy over time and the utility that would have been achieved by the best fixed policy in hindsight \cite{cai2023doubly, gao2018online}.
Since the generator and discriminators are updated independently, and their optimal strategy is to report truthfully regardless of others, we follow the setting in \cite{jacob2023consensus} and define the regret for each discriminator and the generator as:
\begin{equation*}
\begin{aligned}
\text{Regret}_{D_i}(T) &= \sum_{t'=1}^{T} u_{D_i}^{(t,t')}(\pi_{D_i}^{(t')}) 
- \max_{\pi_{D_i}^*} \sum_{t=1}^{T} u_{D_i}^{(t,t')}(\pi_{D_i}^*), \\
\text{Regret}_G(T) &= \sum_{t=1}^{T} \sum_{k_t=1}^{K_t} u_G^{(t,t',k_t)}(\pi_G^{(t')}) 
- \max_{\pi_G^*} \sum_{t'=1}^{T} \sum_{k_t=1}^{K_t} u_G^{(t,t',k_t)}(\pi_G^*),
\end{aligned}
\end{equation*}
where \( \pi_{D_i}^{(t')} \) and \( \pi_G^{(t')} \) denote the policies of discriminator \( D_i \) and the generator at iteration \( t' \), and \( \pi_{D_i}^* \), \( \pi_G^* \) are their respective best fixed policies in hindsight. Each \( u_G^{(t,k_t)}(\cdot) \) denotes the generator's utility on task \( k_t \) in round \( t \).

We introduce the following Assumption \ref{ass:no regret} for theoretical guarantees. \begin{assumption}
\label{ass:no regret}
The utility function \( u_{D_i}(\pi_{D_i})\) and  \( u_{G}(\pi_{G})\)  satisfies the following:

(Part 1: Local concavity). There exists a neighborhood \( \mathcal{N} \) around a truthful reference policy such that \( u_{D_i}(\pi_{D_i})\) and  \( u_{G}(\pi_{G})\)  is concave in \( \pi_{D_i} \) and \( \pi_{G} \) for all \( \pi_{D_i}, \pi_{G} \in \mathcal{N} \).

(Part 2: Gradient boundedness). There exist constants \( M_1, M_2 > 0 \) such that for all \( \pi_{D_i} \) and \( \pi_G \), the gradients are bounded in $\ell_2$-norm:
$\|\nabla_{\pi_{D_i}} u_{D_i}(\pi_{D_i})\|_2 \leq M_1$, and $\|\nabla_{\pi_G} u_G(\pi_G)\|_2 \leq M_2$.
\end{assumption}
Part 1 of Assumption~\ref{ass:no regret} assumes that the utility function is locally concave with respect to the policy within a restricted neighborhood around the truthful policy \( \pi^* \), as defined in Section~\ref{sec:PEG}. This is justified because \( \pi^* \) is designed to be utility-maximizing under PEG, and policy updates are constrained to stay close to \( \pi^* \) in practice via deploying a fine-tuned model. Similar locality and curvature assumptions are standard in trust-region and online learning methods \citep{schulman2015trust,hazan2016introduction}.
Part 2 of Assumption  \ref{ass:no regret} requires that the gradients of the utility function are bounded. These assumptions are standard in the convex optimization literature such as \citep{cai2023doubly, duvocelle2023multiagent} which also employ gradient-based optimization methods.

We now formalize the no-regret property of PEG. The following theorem shows that both the generator and each discriminator achieve sublinear regret when updated via mirror descent with appropriately chosen learning rates.

\begin{theorem}\label{theo: no regret}
Under Assumption~\ref{ass:no regret}, and set the learning rate in Eq.~\eqref{eq:policy update} for the generator as \( \eta^G := \sqrt{2 D_{\mathrm{KL}}(\pi_G^* \| \pi_G^{(1)}) / (M_1^2 t_l)} \), and for each discriminator \( D_i \) as \( \eta^{D_i} := \sqrt{2 D_{\mathrm{KL}}(\pi_{D_i}^* \| \pi_{D_i}^{(1)}) / (M_2^2 t_l)} \) for \(2^{t_l} \leq T <2^{t_l+1}\). Then, the regrets of the generator and each discriminator are bounded by:
$\mathrm{Regret}_G(T) \leq \frac{\sqrt{2}}{\sqrt{2}-1} M_1 \sqrt{2 K_T D_{\mathrm{KL}}(\pi_G^* \| \pi_G^{(1)}) \cdot T}$, and $
\mathrm{Regret}_{D_i}(T) \leq \frac{\sqrt{2}}{\sqrt{2}-1} M_2 \sqrt{2 K_T D_{\mathrm{KL}}(\pi_{D_i}^* \| \pi_{D_i}^{(1)}) \cdot T}$, respectively,
where \( D_{\mathrm{KL}}(\cdot \| \cdot) \) denotes the KL divergence between the optimal policy \( \pi^* \) and the initial policy \( \pi^{(1)} \), and \( K_T \) is the number of tasks at \(T\) iteration.
\end{theorem}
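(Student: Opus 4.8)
The plan is to prove this as a standard application of the Online Mirror Descent regret bound with negative entropy as the mirror map, combined with a doubling trick to handle the fact that the learning rate in Theorem~\ref{theo: no regret} depends on the (a priori unknown) horizon $T$ only through the dyadic block index $t_l$. First I would recall the textbook OMD guarantee: when the mirror map is negative entropy on the simplex, the induced Bregman divergence is the KL divergence, the update in Eq.~\eqref{eq:policy update} is exactly the exponentiated-gradient step, and for a sequence of concave utilities (equivalently, convex losses $-u$) with gradients bounded by $M$ in $\ell_2$-norm, running OMD with a fixed step size $\eta$ for $\tau$ rounds from initialization $\pi^{(1)}$ yields
\begin{equation*}
\sum_{s=1}^{\tau} u(\pi^*) - \sum_{s=1}^{\tau} u(\pi^{(s)}) \;\leq\; \frac{D_{\mathrm{KL}}(\pi^*\|\pi^{(1)})}{\eta} + \frac{\eta M^2 \tau}{2}.
\end{equation*}
Here Part~1 of Assumption~\ref{ass:no regret} (local concavity in the trust-region neighborhood $\mathcal N$) licenses the convexity step that converts pointwise gradient inequalities into the regret bound, and Part~2 supplies the gradient bound $M$ (which is $M_2$ for a discriminator and $M_1$ for the generator, matching the theorem's indexing). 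Choosing $\eta = \sqrt{2 D_{\mathrm{KL}}(\pi^*\|\pi^{(1)})/(M^2\tau)}$ balances the two terms and gives the single-phase bound $\sqrt{2 D_{\mathrm{KL}}(\pi^*\|\pi^{(1)}) M^2 \tau}$.

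Next I would implement the doubling trick to remove the dependence on a fixed horizon. Partition the rounds into consecutive blocks $B_\ell$ of lengths $2^0, 2^1, \dots, 2^{t_l}$, so that block $\ell$ has $2^\ell$ rounds and the $t_l$ for which $2^{t_l}\le T < 2^{t_l+1}$ indexes the last (possibly truncated) block; within block $\ell$ run OMD with the constant step size prescribed in the theorem, namely $\eta \propto \sqrt{1/(M^2 2^\ell)}$ restarted (or simply continued, since the per-block analysis only needs the divergence from the block's initialization, which is bounded by $D_{\mathrm{KL}}(\pi^*\|\pi^{(1)})$ under the trust-region constraint that keeps all iterates in $\mathcal N$). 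The per-block regret is then at most $\sqrt{2 D_{\mathrm{KL}}(\pi^*\|\pi^{(1)}) M^2 2^\ell}$, and summing the geometric series
\begin{equation*}
\sum_{\ell=0}^{t_l} \sqrt{2 D_{\mathrm{KL}}(\pi^*\|\pi^{(1)}) M^2 2^\ell} \;=\; \sqrt{2 D_{\mathrm{KL}}(\pi^*\|\pi^{(1)}) M^2}\cdot \frac{\sqrt{2}^{\,t_l+1}-1}{\sqrt{2}-1} \;\leq\; \frac{\sqrt{2}}{\sqrt{2}-1}\sqrt{2 D_{\mathrm{KL}}(\pi^*\|\pi^{(1)}) M^2 T}
\end{equation*}
since $\sqrt{2}^{\,t_l+1} = \sqrt{2\cdot 2^{t_l}} \le \sqrt{2T}$, which is exactly the stated constant $\sqrt{2}/(\sqrt{2}-1)$ in front. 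For the generator one repeats the argument with utility summed additionally over the $K_t$ tasks per round; bounding $K_t \le K_T$ (or treating the per-task utilities as a vector-valued objective whose gradient norm scales with $\sqrt{K_T}$) produces the extra $\sqrt{K_T}$ factor, and the analogous step for each discriminator yields the symmetric bound with $K_T$ likewise appearing (this matches the $\sqrt{K_T}$ in both displayed inequalities). Finally I would note that the maximization over $\pi^*$ in the definition of regret is handled by the fact that the single-phase bound holds for every fixed comparator simultaneously, so in particular for the best one in hindsight.

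The main obstacle I anticipate is not the OMD mechanics but the justification that the iterates stay within the neighborhood $\mathcal N$ where Assumption~\ref{ass:no regret} Part~1 holds, and relatedly the bookkeeping that ties the abstract divergence term to $D_{\mathrm{KL}}(\pi^*\|\pi^{(1)})$ uniformly across all $t_l+1$ blocks rather than to the divergence from each block's own starting iterate. The clean way to dispatch this is to invoke the trust-region / local-policy construction $\Pi_{\mathrm{local}}$ from Section~\ref{sec:PEG}: the projection step of mirror descent (or the explicit constraint $\|\pi-\pi^*\|<\delta$) keeps every $\pi^{(t')}$ in $\Pi_{\mathrm{local}}\subseteq\mathcal N$, and since $\pi^*$ is the truthful reference policy, $D_{\mathrm{KL}}(\pi^*\|\pi^{(s)})$ is uniformly controlled, so each block contributes at most the claimed amount. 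A secondary subtlety is making precise what the "gradient of the utility with respect to the policy" means when the utility in Eq.~\eqref{eq:rewards} is an expectation of the determinant-based payment $p_{i,t}$; I would simply observe that $u_{D_i}^{(t)}$ is linear (hence, trivially, concave) in $\pi_{D_i}$ evaluated pointwise, that its gradient is a bounded vector of conditional payments, and that Assumption~\ref{ass:no regret} Part~2 is exactly the hypothesis that packages this boundedness — so no further computation is needed beyond citing the assumption.
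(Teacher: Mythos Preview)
Your proposal is correct and follows essentially the same route as the paper: establish the single–step OMD inequality via the three-point (proximal) identity, use strong convexity of negative entropy to get the standard $\tfrac{1}{\eta}D_{\mathrm{KL}} + \tfrac{\eta}{2}\sum\|\nabla\|^2$ bound, then apply the doubling trick over dyadic blocks to obtain the $\sqrt{2}/(\sqrt{2}-1)$ constant, and finally append the $\sqrt{K_T}$ factor by aggregating $K_T$ independently updated task policies. The paper packages the first part as a sequence of lemmas culminating in its Lemma~\ref{lemma:no regret} and cites classical results for the $\sqrt{K}$ factor in Lemma~\ref{lemma: tasks}; your sketch matches this structure, and your explicit attention to why the iterates remain in $\mathcal{N}$ (via the trust-region constraint $\Pi_{\mathrm{local}}$) is in fact more careful than what the paper spells out.
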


Theorem~\ref{theo: no regret} guarantees that both the generator and each discriminator achieve a regret bound of \( O(\sqrt{T}) \), which is consistent with standard results in online convex optimization and learning theory \citep[e.g.,][]{cai2023doubly,jones2023efficient}. This sublinear regret implies that the policy update defined in Eq.~\eqref{eq:policy update} is \emph{Hannan consistent}: their average regret vanishes as \( T \to \infty \), meaning that each agent's average performance converges to that of the best fixed policy \cite{jafari2001no}.
More specifically, the regret bound depends on three key factors: (1) the bound on the gradient norm of the utility functions; (2) the number of tasks \( K \); and (3) the KL divergence between the initial policy and the optimal one, which measures how far the agent's starting point is from the target behavior. A smaller KL divergence leads to a tighter regret bound, as the learning trajectory begins closer to the optimal policy.
Importantly, regret analysis does not imply convergence to the optimal policy, but ensures that the average utility gap to the best fixed policy vanishes as \( T \to \infty \). While regret guarantees are well studied in online learning, applying them to a peer evaluation setting with multiple interdependent LLMs is novel. In this setting, agents influence each other only indirectly through their reported outputs, making this a theoretically grounded and promising direction for aligning LLMs.

\subsection{Last-Iterate Convergence}
\label{sec:convergence}
Our third main theoretical result concerns the convergence behavior of agents to a Nash equilibrium. Specifically, we establish last-iterate convergence, a stronger guarantee that ensures that the actual sequence of policies converges to a fixed point \cite{cai2023doubly}.

In our PEG setup, each discriminator interacts with others repeatedly, aiming to report truthful evaluations based on shared signals. These interactions can be naturally modeled as a continuous multi-agent game \( \mathcal{G} = (\mathcal{N}, \Pi, \{ u_i \}_{i \in \mathcal{N}}) \), where \( \mathcal{N} = \{1, \dots, n\} \) denotes the set of agents, \( \Pi = \prod_{i=1}^n \Pi_i \) is the joint space of stochastic policies, and \( u_i: \Pi \to \mathbb{R} \) is the utility function for agent \( i \), reflecting agreement with peers. We assume each \( u_i \) is continuous, differentiable in its own argument, and has Lipschitz continuous gradients.

A Nash equilibrium in this setting corresponds to a stable configuration of policies where no discriminator has an incentive to unilaterally deviate \cite{holt2004nash,kreps1989nash}. Formally, a joint policy \( \pi^* = (\pi_1^*, \dots, \pi_n^*) \in \Pi \) is a Nash equilibrium if for every \( i \in \mathcal{N} \) and any alternative policy \( \hat{\pi}_i \in \Pi_i \),
\begin{equation}
\label{eq:nash equilibrium}
u_i(\pi_i^*, \pi_{-i}^*) \geq u_i(\hat{\pi}_i, \pi_{-i}^*).
\end{equation}

\begin{theorem}[Last-iterate Convergence to Nash]
Suppose each discriminator \( i \in \mathcal{N} \) updates its policy using Eq.~\eqref{eq:policy update} with a decaying learning rate of order \( O(1/t^p) \), for some \( p \in (\frac{1}{2}, 1) \). Then, the sequence of joint policies \( \pi_t \) converges almost surely to the unique Nash equilibrium \( \pi^* \) of PEG.
\label{Theo: convergence}
\end{theorem}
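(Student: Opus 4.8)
The plan is to recognize the discriminator dynamics as stochastic mirror descent on a monotone game and invoke a Robbins--Monro style almost-sure convergence argument. First I would set up the game-theoretic structure: by Lemma~\ref{lemma:reward}, truthful reporting is a dominant strategy for each discriminator, so the truthful joint policy $\pi^*$ is a Nash equilibrium in the sense of Eq.~\eqref{eq:nash equilibrium}; I would additionally argue it is the \emph{unique} one by using the strict information-monotonicity of the determinant-based reward (the determinant is uniquely maximized at truthful reporting), which forces the pseudo-gradient operator $F(\pi) = (-\nabla_{\pi_i} u_i(\pi))_{i\in\mathcal{N}}$ to be strictly monotone in the local neighborhood $\Pi_{\text{local}}$, with $\pi^*$ its unique zero. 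Combined with the local concavity from Assumption~\ref{ass:no regret}(Part~1), this gives a variational-inequality characterization: $\langle F(\pi^*), \pi - \pi^*\rangle \geq 0$ for all $\pi \in \Pi_{\text{local}}$.

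Next I would write the OMD update in Eq.~\eqref{eq:policy update} as a stochastic approximation recursion in the mirror (dual) space. With negative-entropy regularizer, the update is the standard entropic mirror-descent step $\pi_{t+1} = \arg\max_{\pi}\{\eta_t \langle \hat{g}_t, \pi\rangle - D_{\mathrm{KL}}(\pi \| \pi_t)\}$, where $\hat{g}_t$ is an unbiased stochastic estimate of $\nabla_{\pi_i} u_i$ (unbiasedness follows because the payment $p_{i,t}$ in Eq.~\eqref{eq:rewards} is an unbiased estimator of the population co-report quantity, as noted after Lemma~\ref{lemma:reward}). Using the Bregman/Fenchel-coupling function $D_{\mathrm{KL}}(\pi^* \| \pi_t)$ as a Lyapunov candidate, I would derive the one-step inequality
\begin{equation*}
\mathbb{E}\big[D_{\mathrm{KL}}(\pi^* \| \pi_{t+1}) \mid \mathcal{F}_t\big] \leq D_{\mathrm{KL}}(\pi^* \| \pi_t) - \eta_t \langle F(\pi_t), \pi_t - \pi^*\rangle + \tfrac{1}{2}\eta_t^2 \big(M_1^2 + \sigma^2\big),
\end{equation*}
where the cross term is nonpositive by monotonicity and the last term aggregates the bounded-gradient constant from Assumption~\ref{ass:no regret}(Part~2) and the bounded variance of $\hat g_t$ (bounded since reports are $\{0,1\}$-valued and $K_t$ finite).

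Then I would close the argument with the Robbins--Siegmund almost-supermartingale convergence theorem: since $\sum_t \eta_t^2 < \infty$ and $\sum_t \eta_t = \infty$ for $\eta_t = O(1/t^p)$ with $p \in (\tfrac12, 1)$, the theorem yields that $D_{\mathrm{KL}}(\pi^* \| \pi_t)$ converges almost surely and that $\sum_t \eta_t \langle F(\pi_t), \pi_t - \pi^*\rangle < \infty$ a.s.; the latter plus $\sum_t \eta_t = \infty$ forces $\liminf_t \langle F(\pi_t), \pi_t - \pi^*\rangle = 0$, and strict monotonicity upgrades this to $\pi_t \to \pi^*$ along a subsequence, which combined with the a.s.\ convergence of the Lyapunov function gives full last-iterate convergence $\pi_t \to \pi^*$ almost surely. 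I expect the main obstacle to be two intertwined technical points: (i) justifying that the iterates remain within the local neighborhood $\Pi_{\text{local}}$ where concavity and monotonicity hold --- this needs either a projection step, a sufficiently small initial KL radius, or a stopping-time localization argument --- and (ii) establishing \emph{strict} (not merely plain) monotonicity of $F$ uniformly on that neighborhood, since plain monotonicity alone gives only ergodic/average convergence, whereas last-iterate convergence genuinely requires the strict variational inequality (or an equivalent ``sharpness''/error-bound condition) to kill oscillation of the last iterate.
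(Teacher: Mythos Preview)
Your proposal is correct and follows essentially the same route as the paper: a Bregman (KL) divergence Lyapunov function, the one-step mirror-descent descent inequality, monotonicity of the pseudo-gradient, and the Robbins--Siegmund lemma together with the step-size conditions $\sum_t \eta_t = \infty$, $\sum_t \eta_t^2 < \infty$ for $p\in(\tfrac12,1)$. The only minor divergence is in the endgame: the paper asserts \emph{strong} monotonicity directly ($\langle \nabla u(\pi_t), \pi_t - \pi^*\rangle \ge \mu\|\pi_t - \pi^*\|^2$) and concludes $\|\pi_t - \pi^*\|\to 0$ in one line, whereas you work through a $\liminf$-plus-subsequence argument from strict monotonicity; your version is arguably more honest about the distinction, and the obstacles you flag (confinement to $\Pi_{\text{local}}$, strict vs.\ strong monotonicity) are precisely the points the paper's proof passes over without justification.
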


The proof of Theorem~\ref{Theo: convergence}, provided in Appendix~\ref{Proof: convergence}, builds on the Robbins–Siegmund Lemma. It is also inspired by the analysis in \cite{duvocelle2023multiagent}, though our setting is more specific—focusing on static games with perfect feedback. Theorem~\ref{Theo: convergence} establishes last-iterate convergence to the Nash equilibrium: the actual policies used by each discriminator at the end of training converge to the truthful equilibrium. This is in contrast to classical no-regret learning, which only ensures good average performance over time. Last-iterate convergence is particularly valuable in practice, as it guarantees that the learned policies are not just good on average but are inherently stable and reliable \cite{cai2023doubly,duvocelle2023multiagent}. In our case, this means PEG reliably leads to consistent and truthful evaluations, effectively enabling smaller language models to coordinate and reach human-aligned consensus without any supervised fine-tuning or distillation.

\section{Experiments}
\label{sec:experiments}
We evaluate PEG on question-answering (QA) tasks. In this setup, a generator LLM agent produces answers, while a group of discriminator LLM agents independently evaluate each response, serving as peers to promote both accuracy and consistency. Code for all experiments is available at \url{https://github.com/toz015/neurips2025-repo}.

\subsection{Experiment Setup}
\textbf{Models.}
The main experiment employs the following models: DeepSeek-R1-Distill-Qwen-7B (deepseek-Qwen-7b), deepseek-ai/deepseek-llm-7b-chat (deepseek-Llama-7b), and Qwen/Qwen2.5-7B-Instruct (OQwen-7b) as discriminators, with Qwen/Qwen2.5-7B-Instruct also serving as the generator. We consider Gemma-7B \citep{team2024gemma}, Mistral-7B \citep{Jiang2023Mistral7}, Ai-Yi-9B \cite{young2024yi} and OpenChat-7B \cite{wang2023openchat} as candidate discriminators. Unless otherwise specified, we set the learning rate $\eta = 0.1$ for all experiments. The PEG mechanism between discriminators is run for 10 iterations for 8 tasks. Discussions on different choices of learning rates and number of iterations are provided in Appendix~\ref{sec: iterations} and \ref{sec: learning rate}.

\textbf{Prompts.} We evaluate our models with zero-shoting following the format described in \cite{hendrycks2020measuring}. By default, conditioning the $P_{LM}$ on $(x, \text{correct})$ corresponds to the standard zero-shot prompt. Conditioning on $(x, \text{incorrect})$ uses the same structure, except that the phrase “Answer:” is replaced with “Incorrect Answer:” in the prompt \ref{prompt}.

\textbf{Baselines.}
We incorporate several baseline methods, each representing a different approach to generating and selecting responses. To evaluate against our method, we apply each baseline to obtain responses from the discriminators, and then compare their outputs using majority voting \cite{wang2022self}.
\begin{itemize}
\item \textbf{Generative Ranking (G)}: A standard baseline that ranks candidate answers by the probability
$P_{LM}(y \mid x, \text{correct})$ and selects the top candidate \cite{touvron2023llama}. 
\item \textbf{Discriminative Ranking (D)}: This method employs a discriminator $\pi_D$ to estimate $P(\text{correct} \mid x, y)$ and ranks responses accordingly \cite{jacob2023consensus}.
\item \textbf{Mutual Information Ranking (MI)}: This method reweights each candidate by the product of the forward and reverse likelihoods, $P_{LM}(y \mid x, \text{correct}) \cdot P_{LM}(\text{correct} \mid x, y)$ \cite{li2016mutual}.

\item \textbf{Equilibrium Ranking Discriminator (ER-D)}: 
Based on the Consensus Game framework of \cite{jacob2023consensus}, this method formulates the interaction between a generator and a discriminator as a signaling game. The discriminator iteratively updates its estimates to maximize agreement with the generator. Each query–candidate pair $(x, y)$ is reweighted by $\pi_D^*(\text{correct} \mid x, y)$, encouraging consistency in the final ranking.
\end{itemize}

\textbf{Datasets.}
We conduct our evaluations using four diverse datasets: ARC-Easy, ARC-Challenge, Massive Multitask Language Understanding (MMLU) and Graduate-Level Google-Proof Q\&A Benchmark (GPQA) \cite{hendrycks2020measuring, clark2018think, rein2024gpqa}. Details of the datasets are in Appendix \ref{Appendix:datasets}. Each dataset presents unique challenges, allowing us to test PEG under various knowledge domains. 

\subsection{Experiment Results}

\begin{table*}[t]
\caption{Accuracy (\%) of majority vote answers across benchmark datasets for each method. \textbf{Bold} indicates the best performance in each row.}
\vspace{0.85\baselineskip}
\centering
\centering
\footnotesize
\setlength{\tabcolsep}{6pt} 
\renewcommand{\arraystretch}{1.1} 
\resizebox{0.5\textwidth}{!}{%
\begin{tabular}{cccccc}
\toprule
\textbf{Dataset} & \textbf{G} & \textbf{D} & \textbf{MI} & \textbf{ER-D} & \textbf{PEG} \\
\midrule
ARC-Easy         & 88.19 & 84.18 & 88.61 & 88.57 & \textbf{91.78} \\
ARC-Challenge    & 77.01 & 70.68 & 78.03 & 77.52 & \textbf{87.01} \\
MMLU             & 59.98 & 50.75 & 59.85 & 59.66 & \textbf{70.73} \\
GPQA-Main        & 18.08 &  9.15 & 16.29 & 16.52 & \textbf{22.54} \\
\bottomrule
\end{tabular}
}
\label{Table1}
\end{table*}

\textbf{PEG Improves Accuracy.} 
Results in Table~\ref{Table1} show that PEG consistently outperforms all baselines across the evaluated datasets. Notably, it achieves more than a 10\% improvement in accuracy on the most challenging benchmarks, ARC-Challenge and MMLU, compared to the strongest baseline. This performance gain is due to two key factors. First, PEG leverages the complementary strengths and diverse reasoning capabilities of multiple LLM discriminators. We provide an illustrative example in Appendix~\ref{fig:sample_result_initial} to further validate that initial divergent outputs will become more consistent and accurate after applying PEG. Second, unlike agreement-based methods such as MI and ER-D, PEG promotes truthful reporting through mutual information signals, which more effectively elicit the latent capabilities of the models. Table~\ref{Table2} further supports this, showing that each individual discriminator consistently improves in accuracy following policy updates under PEG. 

\textbf{PEG Enables Coordination Among Heterogeneous Agents.} Despite substantial differences in architecture and baseline accuracy, Table~\ref{Table2} shows that all discriminators benefit from PEG. Remarkably, even the weakest model, OQwen-7B, which initially has the lowest accuracy among all agents, becomes the most accurate after participating in PEG. This suggests that PEG encourages cross-agent learning, where each model learns to align with truthful, high-confidence signals provided by its peers. 
Figure~\ref{fig:model_accuracy} reports the accuracy of each individual discriminator, as well as the majority vote, after applying PEG. The majority vote achieves a notable accuracy gain over any single model, validating the benefits of collaborative learning among heterogeneous agents. 

\begin{table*}[t]
\caption{
Accuracy (\%) of each model before (D) and after applying the PEG mechanism (PEG) across four benchmark datasets. \textbf{Bold} highlights the best PEG result per dataset.}
\vspace{0.85\baselineskip}
\centering
\footnotesize
\setlength{\tabcolsep}{6pt} 
\renewcommand{\arraystretch}{1.1}
\resizebox{0.8\textwidth}{!}{%
\begin{tabular}{ccccccccc}
\toprule
\multirow{2}{*}{\textbf{Model}} & \multicolumn{2}{c}{\textbf{ARC-Easy}} & \multicolumn{2}{c}{\textbf{ARC-Challenge}} & \multicolumn{2}{c}{\textbf{MMLU}} & \multicolumn{2}{c}{\textbf{GPQA}} \\
              & \textbf{D} & \textbf{PEG} & \textbf{D} & \textbf{PEG} & \textbf{D} & \textbf{PEG} & \textbf{D} & \textbf{PEG} \\
\midrule
OQwen-7B      & 90.43 & 90.89 & 82.91 & 85.04 & 62.72 & 68.19 & 16.07 & 22.77 \\
deepseek-Qwen-7B            & 70.48 & \textbf{91.73} & 57.18 & \textbf{86.15} & 42.38 & \textbf{69.04} & 15.18 & 20.09 \\
deepseek-LLaMA-7B    & 76.30 & 91.44 & 59.83 & 85.98 & 46.23 & 68.71 & 17.41 & \textbf{23.66} \\
\bottomrule
\end{tabular}
}

\label{Table2}
\end{table*}

\begin{figure}[ht]
    \centering
    \includegraphics[width=0.5\linewidth]{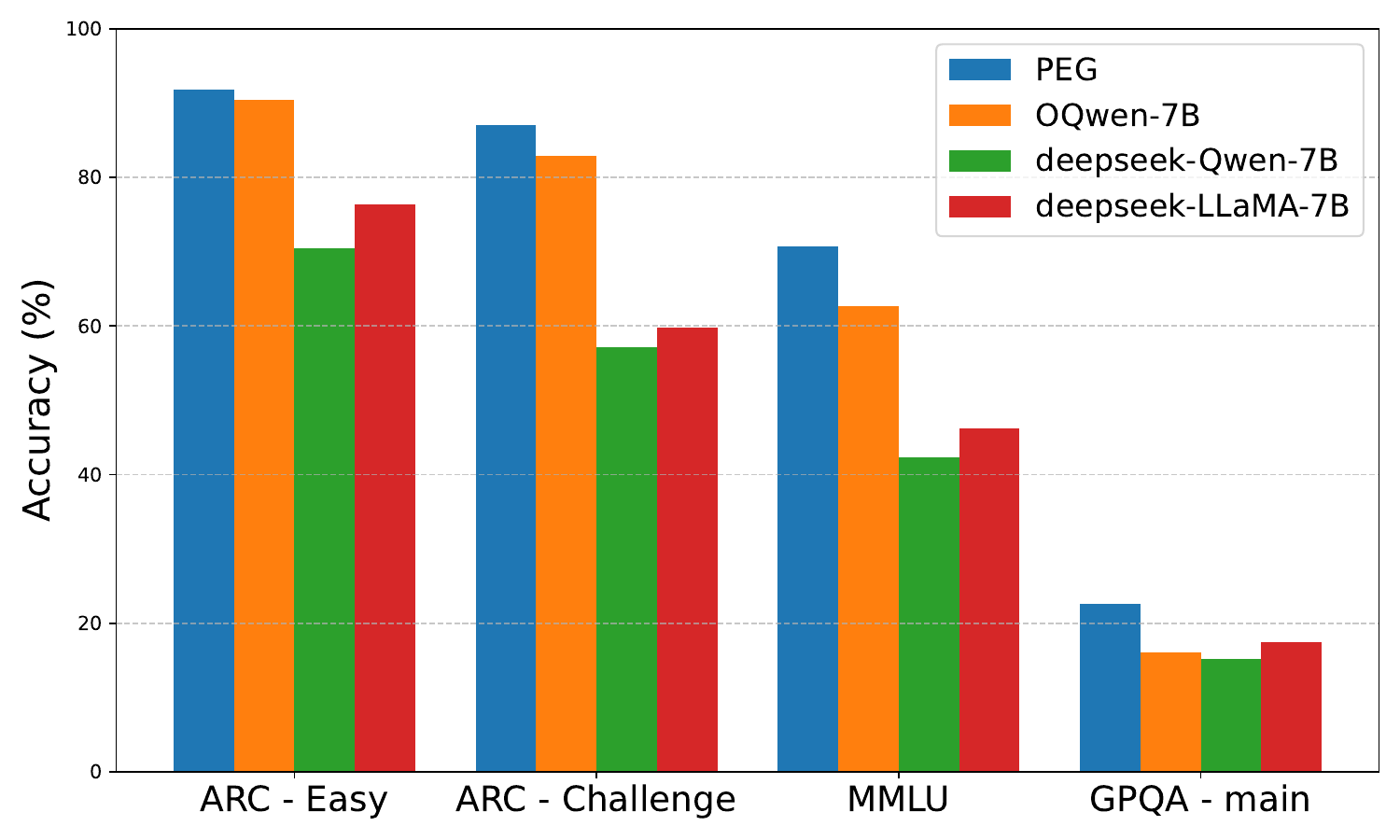}
\caption{Accuracy comparison between original model outputs (D) and PEG majority vote answers.}
    \label{fig:model_accuracy}
\end{figure}


\begin{table}[!h]
\caption{Accuracy (\%) of individual discriminators before and after applying PEG in the 5-discriminator setting.}
\label{tab:peg_5_disc}
\vspace{0.75\baselineskip}
\centering
\resizebox{\textwidth}{!}{
\begin{tabular}{cccccc}
\toprule
\textbf{5 Discriminators} & \textbf{OQwen-7B} & \textbf{deepseek-Qwen-7B} & \textbf{deepseek-Llama2-7B} & \textbf{Gemma-7B} & \textbf{Mistral-7B} \\
\midrule
\textbf{Original}  & 82.91 & 57.18 & 59.83 & 69.32 & 70.26 \\
\textbf{After PEG} & 86.84 & 84.27 & 83.76 & 84.27 & 82.74 \\
\bottomrule
\end{tabular}
}
\end{table}

\textbf{PEG with Varying Number of Discriminators.} We further evaluate PEG on the ARC-Challenge dataset with extended discriminator settings: 5-discriminators (adding Gemma-7B \citep{team2024gemma} and Mistral-7B \citep{Jiang2023Mistral7}) and 7-discriminators (further adding Ai-Yi-9B \citep{young2024yi} and OpenChat-7B \citep{wang2023openchat}). The results consistently highlight the strong impact of PEG on both individual and collective performance. Notably, the weakest models with the lowest initial accuracy benefit the most from PEG. For example, in Tables~\ref{tab:peg_5_disc} and \ref{tab:peg_7_disc_full}, Qwen-7B and LLaMA2-7B gain over 20\% improvement after applying PEG, whereas initially stronger models such as OQwen-7B and Ai-Yi-9B exhibit only marginal gains. Nevertheless, all models, regardless of their initial performance, converge toward coordinated outcomes after PEG. These results demonstrate that agents indeed learn from one another and achieve coordination through PEG. Moreover, when varying the number of discriminators, PEG consistently outperforms both initial discriminator majority vote D and ER-D, especially with 3 and 5 discriminators, where it achieves over 10\% improvement, as shown in Table~\ref{tab:overall_comparison}.

\begin{table}[!h]
\caption{Accuracy (\%) of individual discriminators before and after applying PEG in the 7-discriminator setting.}
\label{tab:peg_7_disc_full}
\vspace{0.75\baselineskip}
\centering
\resizebox{1.0\textwidth}{!}{
\begin{tabular}{cccccccc}
\toprule
\textbf{7 Discriminators} & \textbf{OQwen-7B} & \textbf{Qwen-7B} & \textbf{Llama2-7B} & \textbf{Gemma-7B} & \textbf{Mistral-7B} & \textbf{Ai-Yi-9B} & \textbf{OpenChat-7B} \\
\midrule
\textbf{Original}  & 82.91 & 57.18 & 59.83 & 69.32 & 70.26 & 81.28 & 79.06 \\
\textbf{After PEG} & 83.85 & 75.30 & 73.08 & 77.61 & 79.06 & 81.62 & 82.99 \\
\bottomrule
\end{tabular}
}
\end{table}

\begin{table}[!h]
\caption{Overall accuracy (\%) comparison of initial discriminator majority vote D, ER-D and PEG under 3-, 5-, and 7-discriminator settings. \textbf{Bold} indicates the best performance in each row.}
\label{tab:overall_comparison}
\vspace{0.75\baselineskip}
\centering
\resizebox{.4\textwidth}{!}{
\begin{tabular}{cccc}
\toprule
\textbf{Setting} & \textbf{D} & \textbf{ER-D} & \textbf{PEG} \\
\midrule
3 Discriminators & 70.68 & 77.52 & \textbf{87.01} \\
5 Discriminators & 71.71 & 76.32 & \textbf{86.75} \\
7 Discriminators & 76.50 & 81.54 & \textbf{81.97} \\
\bottomrule
\end{tabular}}
\end{table}

Finally, we observe a slight decrease in accuracy when expanding to seven discriminators. This can be attributed to a mild violation of the conditional independence assumption (Assumption~\ref{assp: CI} in Appendix~\ref{proof:incentive}), which ensures that the mutual information–based utility remains informative and non-degenerate. Intuitively, when discriminators are highly similar, their outputs become redundant, i.e., observing one provides little new information beyond the others. As a result, the mutual-information-based utility degenerates to zero, so truthfulness no longer maximizes the utility. Consequently, the learning dynamics may converge to a suboptimal equilibrium where discriminators are not incentive-compatible in \eqref{eq:IC}, resulting in a reduction in accuracy.
Overall, our results indicate that PEG remains robust with up to five discriminators; see, Table~\ref{tab:overall_comparison}. As a practical guideline, we recommend using three to five heterogeneous LLMs, which provide a strong balance between performance, stability, and computational efficiency. Expanding beyond this should be done cautiously and only when sufficient diversity and independence among LLMs can be ensured.

\section{Conclusions}
\label{sec:conclusions}
In this paper, we propose a training-free, game-theoretic framework for aligning LLMs through a multi-agent peer elicitation game. Through mutual evaluations among agents, our peer elicitation game facilitates interactions between a generator and multiple discriminators in a way that provably incentivizes truthful behavior. We theoretically show that truthful reporting is a dominant strategy for each discriminator. Furthermore, using online mirror descent, each agent achieves sublinear regret, ensuring that its average performance approaches that of the best fixed truthful strategy. The agents' strategies also converge in the last iterate to a truthful Nash equilibrium.
Empirically, our framework significantly improves factual accuracy across a range of benchmarks and performs competitively with much larger models, highlighting a practical direction for deploying lightweight models in resource-constrained environments.

There are several promising directions for future research based on PEG. One is to extend PEG to high-stakes settings such as medical decision support, scientific fact verification, and policy-relevant summarization, where truthful and consistent outputs are essential for safety and reliability. Another is to incorporate concepts from game theory and economics, such as reputation systems, repeated interactions, and budget-aware mechanisms, to further enhance alignment and robustness among LLM agents, particularly in open-ended or adversarial environments.

\begin{ack}
We would like to thank the area chair and four anonymous referees for constructive suggestions
that improve the paper. X. Dai was supported in part by NIH grant R01DK142026, a Merck Research Award, and a Hellman Fellowship Award.


\end{ack}

\newpage
\bibliographystyle{acm}
{
\bibliography{peg}

\begin{thebibliography}{10}

\bibitem{achiam2023gpt}
{\sc Achiam, J., Adler, S., Agarwal, S., Ahmad, L., Akkaya, I., Aleman, F.~L.,
  Almeida, D., Altenschmidt, J., Altman, S., Anadkat, S., et~al.}
\newblock Gpt-4 technical report.
\newblock {\em arXiv preprint arXiv:2303.08774\/} (2023).

\bibitem{agarwal2020peer}
{\sc Agarwal, A., Mandal, D., Parkes, D.~C., and Shah, N.}
\newblock Peer prediction with heterogeneous users.
\newblock {\em ACM Transactions on Economics and Computation 8}, 1 (2020),
  1--34.

\bibitem{azaria2023internal}
{\sc Azaria, A., and Mitchell, T.}
\newblock The internal state of an {LLM} knows when it's lying.
\newblock {\em arXiv preprint arXiv:2304.13734\/} (2023).

\bibitem{bai2024hallucination}
{\sc Bai, Z., Wang, P., Xiao, T., He, T., Han, Z., Zhang, Z., and Shou, M.~Z.}
\newblock Hallucination of multimodal large language models: A survey.
\newblock {\em arXiv preprint arXiv:2404.18930\/} (2024).

\bibitem{bubeck2012regret}
{\sc Bubeck, S., Cesa-Bianchi, N., et~al.}
\newblock Regret analysis of stochastic and nonstochastic multi-armed bandit
  problems.
\newblock {\em Foundations and Trends{\textregistered} in Machine Learning 5},
  1 (2012), 1--122.

\bibitem{cai2023doubly}
{\sc Cai, Y., and Zheng, W.}
\newblock Doubly optimal no-regret learning in monotone games.
\newblock In {\em International Conference on Machine Learning\/} (2023), PMLR,
  pp.~3507--3524.

\bibitem{carrillo2021consensus}
{\sc Carrillo, J.~A., Jin, S., Li, L., and Zhu, Y.}
\newblock A consensus-based global optimization method for high dimensional
  machine learning problems.
\newblock {\em ESAIM: Control, Optimisation and Calculus of Variations 27\/}
  (2021), S5.

\bibitem{casgrain2022deep}
{\sc Casgrain, P., Ning, B., and Jaimungal, S.}
\newblock Deep {Q}-learning for {Nash} equilibria: {Nash-DQN}.
\newblock {\em Applied Mathematical Finance 29}, 1 (2022), 62--78.

\bibitem{casperopen}
{\sc Casper, S., Davies, X., Shi, C., Gilbert, T.~K., Scheurer, J., Rando, J.,
  Freedman, R., Korbak, T., Lindner, D., Freire, P., et~al.}
\newblock Open problems and fundamental limitations of reinforcement learning
  from human feedback.
\newblock {\em Transactions on Machine Learning Research\/} (2023).

\bibitem{chan2023chateval}
{\sc Chan, C.-M., Chen, W., Su, Y., Yu, J., Xue, W., Zhang, S., Fu, J., and
  Liu, Z.}
\newblock Chateval: Towards better llm-based evaluators through multi-agent
  debate.
\newblock In {\em The Twelfth International Conference on Learning
  Representations\/} (2024).

\bibitem{chen2024decoding}
{\sc Chen, S., Hagrass, O., and Klusowski, J.~M.}
\newblock Decoding game: On minimax optimality of heuristic text generation
  strategies.
\newblock {\em arXiv preprint arXiv:2410.03968\/} (2024).

\bibitem{chen2023close}
{\sc Chen, Y., Yuan, L., Cui, G., Liu, Z., and Ji, H.}
\newblock A close look into the calibration of pre-trained language models.
\newblock In {\em Proceedings of the 61st Annual Meeting of the Association for
  Computational Linguistics (Volume 1: Long Papers)\/} (2023), pp.~1343--1367.

\bibitem{cheng2024self}
{\sc Cheng, P., Hu, T., Xu, H., Zhang, Z., Dai, Y., Han, L., Li, X., et~al.}
\newblock Self-playing adversarial language game enhances llm reasoning.
\newblock {\em Advances in Neural Information Processing Systems 37\/} (2024),
  126515--126543.

\bibitem{clark2018think}
{\sc Clark, P., Cowhey, I., Etzioni, O., Khot, T., Sabharwal, A., Schoenick,
  C., and Tafjord, O.}
\newblock Think you have solved question answering? try arc, the ai2 reasoning
  challenge.
\newblock {\em arXiv preprint arXiv:1803.05457\/} (2018).

\bibitem{du2023improving}
{\sc Du, Y., Li, S., Torralba, A., Tenenbaum, J.~B., and Mordatch, I.}
\newblock Improving factuality and reasoning in language models through
  multiagent debate.
\newblock In {\em Forty-first International Conference on Machine Learning\/}
  (2023).

\bibitem{duvocelle2023multiagent}
{\sc Duvocelle, B., Mertikopoulos, P., Staudigl, M., and Vermeulen, D.}
\newblock Multiagent online learning in time-varying games.
\newblock {\em Mathematics of Operations Research 48}, 2 (2023), 914--941.

\bibitem{elazar2021measuring}
{\sc Elazar, Y., Kassner, N., Ravfogel, S., Ravichander, A., Hovy, E.,
  Sch{\"u}tze, H., and Goldberg, Y.}
\newblock Measuring and improving consistency in pretrained language models.
\newblock {\em Transactions of the Association for Computational Linguistics
  9\/} (2021), 1012--1031.

\bibitem{gao2018online}
{\sc Gao, X., Li, X., and Zhang, S.}
\newblock Online learning with non-convex losses and non-stationary regret.
\newblock In {\em International Conference on Artificial Intelligence and
  Statistics\/} (2018), PMLR, pp.~235--243.

\bibitem{hanprototypical}
{\sc Han, Z., Hao, Y., Dong, L., Sun, Y., and Wei, F.}
\newblock Prototypical calibration for few-shot learning of language models.
\newblock In {\em The Eleventh International Conference on Learning
  Representations\/} (2023).

\bibitem{hazan2016introduction}
{\sc Hazan, E.}
\newblock Introduction to online convex optimization.
\newblock {\em Foundations and Trends{\textregistered} in Optimization 2}, 3-4
  (2016), 157--325.

\bibitem{hendrycks2020measuring}
{\sc Hendrycks, D., Burns, C., Basart, S., Zou, A., Mazeika, M., Song, D., and
  Steinhardt, J.}
\newblock Measuring massive multitask language understanding.
\newblock In {\em International Conference on Learning Representations\/}
  (2020).

\bibitem{hoi2021online}
{\sc Hoi, S.~C., Sahoo, D., Lu, J., and Zhao, P.}
\newblock Online learning: A comprehensive survey.
\newblock {\em Neurocomputing 459\/} (2021), 249--289.

\bibitem{holt2004nash}
{\sc Holt, C.~A., and Roth, A.~E.}
\newblock The nash equilibrium: A perspective.
\newblock {\em Proceedings of the National Academy of Sciences 101}, 12 (2004),
  3999--4002.

\bibitem{hu2003nash}
{\sc Hu, J., and Wellman, M.~P.}
\newblock Nash q-learning for general-sum stochastic games.
\newblock {\em Journal of Machine Learning Research 4}, Nov (2003), 1039--1069.

\bibitem{huang2025survey}
{\sc Huang, L., Yu, W., Ma, W., Zhong, W., Feng, Z., Wang, H., Chen, Q., Peng,
  W., Feng, X., Qin, B., et~al.}
\newblock A survey on hallucination in large language models: Principles,
  taxonomy, challenges, and open questions.
\newblock {\em ACM Transactions on Information Systems 43}, 2 (2025), 1--55.

\bibitem{huang2024survey}
{\sc Huang, X., Ruan, W., Huang, W., Jin, G., Dong, Y., Wu, C., Bensalem, S.,
  Mu, R., Qi, Y., Zhao, X., et~al.}
\newblock A survey of safety and trustworthiness of large language models
  through the lens of verification and validation.
\newblock {\em Artificial Intelligence Review 57}, 7 (2024), 175.

\bibitem{jacob2023consensus}
{\sc Jacob, A.~P., Shen, Y., Farina, G., and Andreas, J.}
\newblock The consensus game: Language model generation via equilibrium search.
\newblock In {\em The Twelfth International Conference on Learning
  Representations\/} (2023).

\bibitem{jafari2001no}
{\sc Jafari, A., Greenwald, A., Gondek, D., and Ercal, G.}
\newblock On no-regret learning, fictitious play, and nash equilibrium.
\newblock In {\em ICML\/} (2001), vol.~1, pp.~226--233.

\bibitem{Jiang2023Mistral7}
{\sc Jiang, A.~Q., Sablayrolles, A., Mensch, A., Bamford, C., Chaplot, D.~S.,
  de~Las~Casas, D., Bressand, F., Lengyel, G., Lample, G., Saulnier, L.,
  Lavaud, L.~R., Lachaux, M.-A., Stock, P., Scao, T.~L., Lavril, T., Wang, T.,
  Lacroix, T., and Sayed, W.~E.}
\newblock Mistral 7b.
\newblock {\em ArXiv abs/2310.06825\/} (2023).

\bibitem{jones2023efficient}
{\sc Jones, M., Nguyen, H., and Nguyen, T.}
\newblock An efficient algorithm for fair multi-agent multi-armed bandit with
  low regret.
\newblock In {\em Proceedings of the AAAI Conference on Artificial
  Intelligence\/} (2023).

\bibitem{kaplan2020scaling}
{\sc Kaplan, J., McCandlish, S., Henighan, T., Brown, T.~B., Chess, B., Child,
  R., Gray, S., Radford, A., Wu, J., and Amodei, D.}
\newblock Scaling laws for neural language models.
\newblock {\em arXiv preprint arXiv:2001.08361\/} (2020).

\bibitem{koike2024outfox}
{\sc Koike, R., Kaneko, M., and Okazaki, N.}
\newblock Outfox: Llm-generated essay detection through in-context learning
  with adversarially generated examples.
\newblock In {\em Proceedings of the AAAI Conference on Artificial
  Intelligence\/} (2024).

\bibitem{kong2020dominantly}
{\sc Kong, Y.}
\newblock Dominantly truthful multi-task peer prediction with a constant number
  of tasks.
\newblock In {\em Proceedings of the Fourteenth Annual ACM-SIAM Symposium on
  Discrete Algorithms\/} (2020), SIAM, pp.~2398--2411.

\bibitem{kreps1989nash}
{\sc Kreps, D.~M.}
\newblock Nash equilibrium.
\newblock In {\em Game Theory}. Springer, 1989, pp.~167--177.

\bibitem{li2016mutual}
{\sc Li, J., and Jurafsky, D.}
\newblock Mutual information and diverse decoding improve neural machine
  translation.
\newblock {\em arXiv preprint arXiv:1601.00372\/} (2016).

\bibitem{liang-etal-2024-encouraging}
{\sc Liang, T., He, Z., Jiao, W., Wang, X., Wang, Y., Wang, R., Yang, Y., Shi,
  S., and Tu, Z.}
\newblock Encouraging divergent thinking in large language models through
  multi-agent debate.
\newblock In {\em Proceedings of the 2024 Conference on Empirical Methods in
  Natural Language Processing\/} (Nov. 2024), Y.~Al-Onaizan, M.~Bansal, and
  Y.-N. Chen, Eds., Association for Computational Linguistics,
  pp.~17889--17904.

\bibitem{lightman2023let}
{\sc Lightman, H., Kosaraju, V., Burda, Y., Edwards, H., Baker, B., Lee, T.,
  Leike, J., Schulman, J., Sutskever, I., and Cobbe, K.}
\newblock Let's verify step by step.
\newblock In {\em The Twelfth International Conference on Learning
  Representations\/} (2024).

\bibitem{luppi2022synergistic}
{\sc Luppi, A.~I., Mediano, P.~A., Rosas, F.~E., Holland, N., Fryer, T.~D.,
  O’Brien, J.~T., Rowe, J.~B., Menon, D.~K., Bor, D., and Stamatakis, E.~A.}
\newblock A synergistic core for human brain evolution and cognition.
\newblock {\em Nature Neuroscience 25}, 6 (2022), 771--782.

\bibitem{miller2005eliciting}
{\sc Miller, N., Resnick, P., and Zeckhauser, R.}
\newblock Eliciting informative feedback: The peer-prediction method.
\newblock {\em Management Science 51}, 9 (2005), 1359--1373.

\bibitem{mitchell2022enhancing}
{\sc Mitchell, E., Noh, J., Li, S., Armstrong, W., Agarwal, A., Liu, P., Finn,
  C., and Manning, C.~D.}
\newblock Enhancing self-consistency and performance of pre-trained language
  models through natural language inference.
\newblock In {\em Proceedings of the 2022 Conference on Empirical Methods in
  Natural Language Processing\/} (2022), pp.~1754--1768.

\bibitem{munos2023nash}
{\sc Munos, R., Valko, M., Calandriello, D., Azar, M.~G., Rowland, M., Guo,
  Z.~D., Tang, Y., Geist, M., Mesnard, T., Fiegel, C., et~al.}
\newblock Nash learning from human feedback.
\newblock In {\em International Conference on Machine Learning\/} (2024), PMLR,
  pp.~36743--36768.

\bibitem{myerson1979incentive}
{\sc Myerson, R.~B.}
\newblock Incentive compatibility and the bargaining problem.
\newblock {\em Econometrica: journal of the Econometric Society\/} (1979),
  61--73.

\bibitem{prelec2004bayesian}
{\sc Prelec, D.}
\newblock A {Bayesian} truth serum for subjective data.
\newblock {\em Science 306}, 5695 (2004), 462--466.

\bibitem{qian-etal-2024-chatdev}
{\sc Qian, C., Liu, W., Liu, H., Chen, N., Dang, Y., Li, J., Yang, C., Chen,
  W., Su, Y., Cong, X., Xu, J., Li, D., Liu, Z., and Sun, M.}
\newblock {C}hat{D}ev: Communicative agents for software development.
\newblock In {\em Proceedings of the 62nd Annual Meeting of the Association for
  Computational Linguistics (Volume 1: Long Papers)\/} (Bangkok, Thailand, Aug.
  2024), L.-W. Ku, A.~Martins, and V.~Srikumar, Eds., Association for
  Computational Linguistics, pp.~15174--15186.

\bibitem{qu2020scalable}
{\sc Qu, G., Wierman, A., and Li, N.}
\newblock Scalable reinforcement learning of localized policies for multi-agent
  networked systems.
\newblock In {\em Learning for Dynamics and Control\/} (2020), PMLR,
  pp.~256--266.

\bibitem{radanovic2014incentives}
{\sc Radanovic, G., and Faltings, B.}
\newblock Incentives for truthful information elicitation of continuous
  signals.
\newblock In {\em Proceedings of the AAAI Conference on Artificial
  Intelligence\/} (2014).

\bibitem{rafailov2023direct}
{\sc Rafailov, R., Sharma, A., Mitchell, E., Manning, C.~D., Ermon, S., and
  Finn, C.}
\newblock Direct preference optimization: Your language model is secretly a
  reward model.
\newblock {\em Advances in Neural Information Processing Systems 36\/} (2023),
  53728--53741.

\bibitem{rein2024gpqa}
{\sc Rein, D., Hou, B.~L., Stickland, A.~C., Petty, J., Pang, R.~Y., Dirani,
  J., Michael, J., and Bowman, S.~R.}
\newblock Gpqa: A graduate-level google-proof q\&a benchmark.
\newblock In {\em First Conference on Language Modeling\/} (2024).

\bibitem{schoenebeck2023two}
{\sc Schoenebeck, G., and Yu, F.-Y.}
\newblock Two strongly truthful mechanisms for three heterogeneous agents
  answering one question.
\newblock {\em ACM Transactions on Economics and Computation 10}, 4 (2023),
  1--26.

\bibitem{schulman2015trust}
{\sc Schulman, J., Levine, S., Abbeel, P., Jordan, M., and Moritz, P.}
\newblock Trust region policy optimization.
\newblock In {\em International Conference on Machine Learning\/} (2015), PMLR,
  pp.~1889--1897.

\bibitem{shalev2012online}
{\sc Shalev-Shwartz, S.}
\newblock Online learning and online convex optimization.
\newblock {\em Foundations and Trends in Machine Learning 4}, 2 (2012),
  107--194.

\bibitem{sun2023aligning}
{\sc Sun, Z., Shen, S., Cao, S., Liu, H., Li, C., Shen, Y., Gan, C., Gui, L.,
  Wang, Y.-X., Yang, Y., et~al.}
\newblock Aligning large multimodal models with factually augmented rlhf.
\newblock In {\em Findings of the Association for Computational Linguistics ACL
  2024\/} (2024), pp.~13088--13110.

\bibitem{team2024gemma}
{\sc Team, G., Mesnard, T., Hardin, C., Dadashi, R., Bhupatiraju, S., Pathak,
  S., Sifre, L., Rivi{\`e}re, M., Kale, M.~S., Love, J., et~al.}
\newblock Gemma: Open models based on gemini research and technology.
\newblock {\em arXiv preprint arXiv:2403.08295\/} (2024).

\bibitem{tonmoy2024comprehensive}
{\sc Tonmoy, S., Zaman, S., Jain, V., Rani, A., Rawte, V., Chadha, A., and Das,
  A.}
\newblock A comprehensive survey of hallucination mitigation techniques in
  large language models.
\newblock {\em arXiv preprint arXiv:2401.01313 6\/} (2024).

\bibitem{tononi1994measure}
{\sc Tononi, G., Sporns, O., and Edelman, G.~M.}
\newblock A measure for brain complexity: relating functional segregation and
  integration in the nervous system.
\newblock {\em Proceedings of the National Academy of Sciences 91}, 11 (1994),
  5033--5037.

\bibitem{touvron2023llama}
{\sc Touvron, H., Lavril, T., Izacard, G., Martinet, X., Lachaux, M.-A.,
  Lacroix, T., Rozi{\`e}re, B., Goyal, N., Hambro, E., Azhar, F., et~al.}
\newblock Llama: Open and efficient foundation language models.
\newblock {\em arXiv preprint arXiv:2302.13971\/} (2023).

\bibitem{wang2023openchat}
{\sc Wang, G., Cheng, S., Zhan, X., Li, X., Song, S., and Liu, Y.}
\newblock Openchat: Advancing open-source language models with mixed-quality
  data.
\newblock {\em arXiv preprint arXiv:2309.11235\/} (2023).

\bibitem{wang2024magnetic}
{\sc Wang, M., Ma, C., Chen, Q., Meng, L., Han, Y., Xiao, J., Zhang, Z., Huo,
  J., Su, W.~J., and Yang, Y.}
\newblock Magnetic preference optimization: Achieving last-iterate convergence
  for language model alignment.
\newblock {\em arXiv preprint arXiv:2410.16714\/} (2024).

\bibitem{wang2025adaptive}
{\sc Wang, T., Jiao, X., Zhu, Y., Chen, Z., He, Y., Chu, X., Gao, J., Wang, Y.,
  and Ma, L.}
\newblock Adaptive activation steering: A tuning-free llm truthfulness
  improvement method for diverse hallucinations categories.
\newblock In {\em Proceedings of the ACM on Web Conference 2025\/} (2025),
  pp.~2562--2578.

\bibitem{wang2002reinforcement}
{\sc Wang, X., and Sandholm, T.}
\newblock Reinforcement learning to play an optimal nash equilibrium in team
  markov games.
\newblock {\em Advances in Neural Information Processing Systems 15\/} (2002).

\bibitem{wang2022self}
{\sc Wang, X., Wei, J., Schuurmans, D., Le, Q.~V., Chi, E.~H., Narang, S.,
  Chowdhery, A., and Zhou, D.}
\newblock Self-consistency improves chain of thought reasoning in language
  models.
\newblock In {\em The Eleventh International Conference on Learning
  Representations\/} (2022).

\bibitem{wei2022chain}
{\sc Wei, J., Wang, X., Schuurmans, D., Bosma, M., Xia, F., Chi, E., Le, Q.~V.,
  Zhou, D., et~al.}
\newblock Chain-of-thought prompting elicits reasoning in large language
  models.
\newblock {\em Advances in Neural Information Processing Systems 35\/} (2022),
  24824--24837.

\bibitem{witkowski2012robust}
{\sc Witkowski, J., and Parkes, D.}
\newblock A robust bayesian truth serum for small populations.
\newblock In {\em Proceedings of the AAAI Conference on Artificial
  Intelligence\/} (2012).

\bibitem{woolley2010evidence}
{\sc Woolley, A.~W., Chabris, C.~F., Pentland, A., Hashmi, N., and Malone,
  T.~W.}
\newblock Evidence for a collective intelligence factor in the performance of
  human groups.
\newblock {\em Science 330}, 6004 (2010), 686--688.

\bibitem{yoon2025multiplayer}
{\sc Yoon, T., Choudhury, S., and Loizou, N.}
\newblock Multiplayer federated learning: Reaching equilibrium with less
  communication.
\newblock {\em arXiv preprint arXiv:2501.08263\/} (2025).

\bibitem{young2024yi}
{\sc Young, A., Chen, B., Li, C., Huang, C., Zhang, G., Zhang, G., Wang, G.,
  Li, H., Zhu, J., Chen, J., et~al.}
\newblock Yi: Open foundation models by 01. ai.
\newblock {\em arXiv preprint arXiv:2403.04652\/} (2024).

\bibitem{yu2024vln}
{\sc Yu, B., Liu, Y., Han, L., Kasaei, H., Li, T., and Cao, M.}
\newblock Vln-game: Vision-language equilibrium search for zero-shot semantic
  navigation.
\newblock {\em arXiv preprint arXiv:2411.11609\/} (2024).

\bibitem{zhang2014elicitability}
{\sc Zhang, P., and Chen, Y.}
\newblock Elicitability and knowledge-free elicitation with peer prediction.
\newblock In {\em Proceedings of the 2014 International Conference on
  Autonomous Agents and Multi-Agent Systems\/} (2014), pp.~245--252.

\bibitem{zhang2023global}
{\sc Zhang, Y., Qu, G., Xu, P., Lin, Y., Chen, Z., and Wierman, A.}
\newblock Global convergence of localized policy iteration in networked
  multi-agent reinforcement learning.
\newblock {\em Proceedings of the ACM on Measurement and Analysis of Computing
  Systems 7}, 1 (2023), 1--51.

\end{thebibliography}
}

\newpage 
\appendix
\section*{Appendix}
Appendix \ref{proofs} contains the proofs of the theoretical results presented in this paper. Specifically, Appendix \ref{proof:incentive} presents the proof of Lemma \ref{lemma:reward}, Appendix \ref{Appendix: no regret} provides the proof of Theorem \ref{theo: no regret}, and Appendix \ref{Proof: convergence} includes the proof of Theorem \ref{Theo: convergence}. Appendix \ref{appendix:peg algorithm} provides the pseudocode of the PEG algorithm. Appendix \ref{experiments details} offers additional experimental details related to Section \ref{sec:experiments}. 
\section{Proofs}
\label{proofs}
\subsection{Proof of Lemma \ref{lemma:reward}} \label{proof:incentive}
Formally, we consider a binary-choice setting, where each task consists of two possible outcomes, denoted as $\{0, 1\}$. There are $n$ agents. Each agent $i$ is assigned $K$ binary-choice tasks, where for each task $k$, agent $i$ receives a private signal $c^k_i \in \{0, 1\}$. These private signals for all agents are drawn from a joint unknown prior distribution $U_{[n]}^k \in \Delta_{[n]}^2$, where $\Delta_{[n]}^2$ represents the set of all measurable distributions over $\{0, 1\}^n$.
For the same task, the private signals of different agents are correlated.
For different tasks, the private signals of the same agent independent.
In our setting, discriminators do not incur effort to acquire their private signals.

For a multi-task peer evaluation mechanism, the agents do not know the specific realization of the prior distribution before receiving the private signals. After receiving the private signal $c^k_i$, each agent is required to report $r^k_i$, which may or may not reflect their true signal. The mechanism is designed to incentivize agents to truthfully report their signals.

\begin{assumption}
\label{assump:similar_tasks}
[A Priori Similar Tasks]
We assume that all tasks are drawn from a common unknown prior distribution $U_{[n]}$ such that $U_{[n]}^k = U_{[n]}$ for all tasks $k$. 
\end{assumption}
This assumption states that all tasks are fundamentally similar, implying that the signals across different tasks should follow the same joint distribution. Specifically, the signals $\textbf{c}^k$ for each task $k$ are assumed to be i.i.d..  While traditional single-task peer evaluation typically assumes a homogeneous prior, this multi-task setting allows for heterogeneity in agents' beliefs as long as the tasks themselves are similar.
In our setting, this assumption implies that the problems are similar in the sense that the ground truths follow the same joint distribution within each batch at every round. We enforce this by ensuring that the same set of subjects and the same set of problems are presented in each round.

\begin{definition}[Strategy]
Each agent $i$'s strategy for reporting is a mapping from her private signal $c^k_i$ to a distribution over possible reports $r^k_i$. Formally, a strategy can be represented as a function $S_i^k: \{0, 1\} \to \Delta(\{0, 1\})$, where $S_i^k(c^k_i)$ gives the probability distribution over the possible reports $r^k_i$ conditioned on receiving the private signal $c^k_i$.
\end{definition}

Every strategy $S_i^k$ corresponds to a $2 \times 2$ transition matrix  where $S_i^k(c_i^k, r_i^k)$ is the probability that agent $i$ reports $r_i^k$ given that she receives private signal $c_i^k$.  A strategy is \textit{truthful} if the agent always reports $r^k_i = c^k_i$. Agent $i$ plays a \textit{truthful strategy} if for every task $k$, $S_i^k$ is an identity matrix.

\begin{assumption}[Consistent Strategy]
Each agent $i$ plays the same strategy $S_i$ for all tasks.
\end{assumption}

This assumption is reasonable because agents face structurally identical tasks drawn from the same distribution, with no task-specific information to adapt to. 

\begin{assumption}[Conditional Independence] 
\label{assp: CI}
We assume that agents' private signals $c_1, c_2,...c_n$ are independent conditioning on ground truth. Since agents' strategies are independent, this also implies that agent's reports $\hat{c}_1, \hat{c}_2,... \hat{c}_n$ are independent conditioning on ground truth.
\end{assumption}

In our case, the discriminators (agents) are instantiated independently and process inputs separately, so their reports are naturally conditionally independent given the underlying truth.

\begin{definition}[Informative Peer]
\label{def3}
Agent $i$ and agent $j$ are considered each other's informative peers if the determinant of the joint distribution matrix over their private signals $\textbf{c}_i$ and $\textbf{c}_j$ is non-zero, i.e.,
\(
\det(U_{\textbf{c}_i, \textbf{c}_j}) \neq 0,
\)
where $U_{\textbf{c}_i, \textbf{c}_j}$ represents the joint prior distribution of the signals $\textbf{c}_i$ and $\textbf{c}_j$, and $U_{\textbf{c}_i, \textbf{c}_j}$ is expressed in its matrix form.
\end{definition}

Definition \ref{def3} captures whether two agents are ``informative peers" by examining the structure of their shared information. If their private signals are sufficiently correlated (as captured by a non-zero determinant of the joint distribution matrix), they can serve as reliable references for each other in PEG.




\begin{definition}
\label{def:peg}
\textit{Given two random variables $X, Y$ which have the same support $\mathcal{C}$, we define the determinant mutual information (DMI)} between $X$ and $Y$ as
\[
\text{DMI}(X; Y) = |\det(\mathbf{U}_{X,Y})|.
\]
\end{definition}

\begin{lemma}[Strict Information Monotonicity]
\label{lem:info_mono}
For every two random variables $X, Y$ with the same support $\mathcal{C}$, when $X'$ is less informative than $X$, i.e., $X'$ is independent of $Y$ conditioning on $X$, it holds that
\[
\mathrm{DMI}(X'; Y) \leq \mathrm{DMI}(X; Y).
\]
The inequality is strict when $\det(\mathbf{U}_{X,Y}) \neq 0$ and $\mathbf{U}_{X'|X}$ is not a permutation matrix.
\end{lemma}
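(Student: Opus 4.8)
\textbf{Proof proposal for Lemma~\ref{lem:info_mono} (Strict Information Monotonicity).}

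The plan is to reduce the statement to a factorization of the joint distribution matrix induced by the conditional-independence hypothesis, and then to compute determinants. The key observation is that when $X'$ is independent of $Y$ given $X$, the joint law of $(X',Y)$ factors through $X$: for each pair of values we have $\mathbb{P}(X'=x',Y=y)=\sum_{x}\mathbb{P}(X'=x'\mid X=x)\,\mathbb{P}(X=x,Y=y)$. In matrix form this is exactly $\mathbf{U}_{X',Y}=\mathbf{U}_{X'\mid X}^{\top}\,\mathbf{U}_{X,Y}$, where $\mathbf{U}_{X'\mid X}$ is the (row-stochastic) transition matrix with entries $\mathbb{P}(X'=x'\mid X=x)$. (One should be careful about which index is summed and hence whether a transpose appears; the precise placement does not affect the determinant up to the factor $\det(\mathbf{U}_{X'\mid X})$, which equals $\det$ of its transpose.)

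Given the factorization, multiplicativity of the determinant yields
\[
\mathrm{DMI}(X';Y)=\bigl|\det(\mathbf{U}_{X',Y})\bigr|=\bigl|\det(\mathbf{U}_{X'\mid X})\bigr|\cdot\bigl|\det(\mathbf{U}_{X,Y})\bigr|=\bigl|\det(\mathbf{U}_{X'\mid X})\bigr|\cdot\mathrm{DMI}(X;Y).
\]
So it suffices to show that any row-stochastic square matrix $P$ satisfies $|\det P|\le 1$, with equality iff $P$ is a permutation matrix. The inequality follows from a standard argument: the convex hull of the permutation matrices is the set of doubly stochastic matrices (Birkhoff--von Neumann), so on that set the convex function $|\det(\cdot)|$ attains its maximum at an extreme point, i.e. a permutation matrix, where it equals $1$; more elementarily, for row-stochastic $P$ one can bound $|\det P|$ via Hadamard's inequality using that each row lies in the probability simplex, or directly for the $2\times 2$ case (which is all we need here, since $\mathcal{C}=\{0,1\}$) where $P=\begin{pmatrix}1-a & a\\ b & 1-b\end{pmatrix}$ gives $\det P=1-a-b$, and $|1-a-b|\le 1$ for $a,b\in[0,1]$ with equality iff $(a,b)\in\{(0,0),(1,1)\}$, i.e. $P$ is a permutation matrix. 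Combining, $\mathrm{DMI}(X';Y)\le\mathrm{DMI}(X;Y)$, and when $\det(\mathbf{U}_{X,Y})\ne 0$ and $\mathbf{U}_{X'\mid X}$ is not a permutation matrix we have $|\det(\mathbf{U}_{X'\mid X})|<1$ strictly, forcing a strict inequality.

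The main obstacle is not any single deep step but rather bookkeeping: making the factorization $\mathbf{U}_{X',Y}=\mathbf{U}_{X'\mid X}^{\top}\mathbf{U}_{X,Y}$ precise (conditioning conventions, row- vs.\ column-stochastic, the transpose) and being careful that ``less informative'' as defined (conditional independence of $X'$ and $Y$ given $X$) is exactly what licenses it. I would also state explicitly that we only use the $2\times 2$ case so that the ``$|\det P|\le 1$ for stochastic $P$'' fact can be verified by the one-line computation above rather than invoking Birkhoff--von Neumann, keeping the argument self-contained. A final remark worth including: this lemma is the engine behind the ``information-monotonicity'' principle cited after Lemma~\ref{lemma:reward}, since a non-truthful strategy acts precisely as a non-permutation stochastic post-processing $\mathbf{U}_{X'\mid X}$ of an agent's signal, and hence strictly decreases the determinant-based score.
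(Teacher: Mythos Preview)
Your proposal is correct and is exactly the standard argument: factor $\mathbf{U}_{X',Y}=\mathbf{U}_{X'\mid X}^{\top}\mathbf{U}_{X,Y}$ via the conditional-independence hypothesis, take determinants, and bound $|\det \mathbf{U}_{X'\mid X}|\le 1$ for a row-stochastic matrix with equality only at permutation matrices. The paper does not spell out this proof; it simply cites \cite{kong2020dominantly} (Theorem~5.1), whose argument is precisely the one you give, so your write-up is in fact more complete than the paper's own treatment.

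One small caution on the general-$\mathcal{C}$ aside: the Birkhoff--von Neumann route is for \emph{doubly} stochastic matrices, whereas $\mathbf{U}_{X'\mid X}$ is only row-stochastic; your Hadamard remark is the right fix (each probability row has $\ell_2$-norm at most $1$, with equality iff it is a standard basis vector, and Hadamard then forces a permutation), and your explicit $2\times 2$ computation is all that the binary setting of the paper requires.
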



\begin{lemma}
\label{lem:unbiased_dmi}
Let $\mathbf{M}_\ell^{ij}$ be the co-occurrence matrix formed by agents $i$ and $j$ over the set of tasks $\mathcal{T}_\ell$, and define the score $\det(\mathbf{M}_\ell^{ij})$. Then the expectation of this score is an unbiased estimator of the determinant mutual information between $\hat{X}_i$ and $\hat{X}_j$:
\[
\mathbb{E}_{\hat{X}_i, \hat{X}_j}\left[\det(\mathbf{M}_\ell^{ij})\right] = a_\ell \cdot \det(\mathbf{U}_{\hat{X}_i, \hat{X}_j}),
\]
where $a_\ell = \binom{|\mathcal{T}_\ell|}{|\mathcal{C}|} \cdot |\mathcal{C}|!$ and $\mathbf{U}_{\hat{X}_i, \hat{X}_j}$ is the co-occurrence matrix of the random variables $\hat{X}_i$ and $\hat{X}_j$.
\end{lemma}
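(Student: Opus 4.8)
The plan is to prove Lemma~\ref{lem:unbiased_dmi} by direct computation of the expectation of the determinant, exploiting linearity of expectation together with the conditional independence of the two agents' reports. First I would expand the $2\times 2$ determinant explicitly: writing $\mathbf{M}_\ell^{ij}(c,c') = \sum_{k \in \mathcal{T}_\ell} \mathds{1}\big((\hat{X}_i^k, \hat{X}_j^k) = (c,c')\big)$, the determinant is
\[
\det(\mathbf{M}_\ell^{ij}) = \mathbf{M}_\ell^{ij}(0,0)\,\mathbf{M}_\ell^{ij}(1,1) - \mathbf{M}_\ell^{ij}(0,1)\,\mathbf{M}_\ell^{ij}(1,0).
\]
Each product of entries is a double sum over pairs of tasks $(k, k') \in \mathcal{T}_\ell \times \mathcal{T}_\ell$ of products of indicator variables. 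I would split this double sum into the diagonal part ($k = k'$) and the off-diagonal part ($k \neq k'$). On the diagonal, the two indicators conflict (e.g. $(\hat{X}_i^k,\hat{X}_j^k)$ cannot equal both $(0,0)$ and $(1,1)$), so those terms vanish in both products and contribute nothing. Hence only ordered pairs of \emph{distinct} tasks survive.

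Next I would take expectations term by term. For a fixed ordered pair of distinct tasks $(k,k')$, the i.i.d.\ assumption across tasks (Assumption~\ref{assump:similar_tasks} combined with the consistent-strategy assumption) means the joint report distribution on task $k$ equals $\mathbf{U}_{\hat{X}_i,\hat{X}_j}$ and is independent of task $k'$. Therefore
\[
\mathbb{E}\big[\mathds{1}((\hat{X}_i^k,\hat{X}_j^k)=(0,0))\,\mathds{1}((\hat{X}_i^{k'},\hat{X}_j^{k'})=(1,1))\big] = \mathbf{U}_{\hat{X}_i,\hat{X}_j}(0,0)\,\mathbf{U}_{\hat{X}_i,\hat{X}_j}(1,1),
\]
and similarly for the cross term. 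Summing over all ordered pairs of distinct tasks in $\mathcal{T}_\ell$ — of which there are $|\mathcal{T}_\ell|(|\mathcal{T}_\ell|-1)$ in the binary case — yields
\[
\mathbb{E}\big[\det(\mathbf{M}_\ell^{ij})\big] = |\mathcal{T}_\ell|(|\mathcal{T}_\ell|-1)\Big(\mathbf{U}(0,0)\mathbf{U}(1,1) - \mathbf{U}(0,1)\mathbf{U}(1,0)\Big) = |\mathcal{T}_\ell|(|\mathcal{T}_\ell|-1)\det(\mathbf{U}_{\hat{X}_i,\hat{X}_j}),
\]
which is exactly the claimed identity since $a_\ell = \binom{|\mathcal{T}_\ell|}{2}\cdot 2! = |\mathcal{T}_\ell|(|\mathcal{T}_\ell|-1)$ when $|\mathcal{C}| = 2$. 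For the general-$|\mathcal{C}|$ statement I would replace the $2\times 2$ determinant by its Leibniz expansion $\det(\mathbf{M}) = \sum_{\sigma} \mathrm{sgn}(\sigma) \prod_c \mathbf{M}(c,\sigma(c))$; the surviving terms are those where the $|\mathcal{C}|$ chosen tasks (one per row-column pair $c, \sigma(c)$) are all distinct, giving $\binom{|\mathcal{T}_\ell|}{|\mathcal{C}|}|\mathcal{C}|!$ ordered selections and reproducing $a_\ell \det(\mathbf{U})$.

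The main obstacle is the bookkeeping in the general-$|\mathcal{C}|$ case: one must argue carefully that in the Leibniz expansion of $\mathbb{E}[\det(\mathbf{M}_\ell^{ij})]$ every monomial indexed by a tuple of tasks with a repeated index vanishes. The key observation is that if two of the $|\mathcal{C}|$ tasks coincide, say tasks assigned to distinct row-indices $c_1 \neq c_2$, then for that monomial to be nonzero on a realization we would need $\hat{X}_i$ on that single task to equal both $c_1$ and $c_2$, which is impossible; hence the indicator product — and so its expectation — is zero. Once this vanishing is established, the remaining terms factor across the distinct tasks by the i.i.d.-across-tasks property, each contributing a copy of $\mathbf{U}_{\hat{X}_i,\hat{X}_j}$, and re-summing the signed products over permutations rebuilds $\det(\mathbf{U}_{\hat{X}_i,\hat{X}_j})$ with the stated combinatorial prefactor. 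Note that conditional independence given the ground truth is not actually needed for this lemma — it is used elsewhere (in Lemma~\ref{lem:info_mono} and the final IC argument) — so I would be careful to invoke only the across-task independence here.
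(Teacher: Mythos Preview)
Your proof is correct and is precisely the standard argument: the paper does not spell out a proof of Lemma~\ref{lem:unbiased_dmi} but instead defers to Theorem~5.1 of \cite{kong2020dominantly}, whose proof proceeds exactly as you describe---expand the determinant via Leibniz, observe that any monomial with a repeated task index is identically zero because it would force $\hat{X}_i$ to take two distinct values on the same task, and then use across-task independence to factor the surviving terms and recover $a_\ell \det(\mathbf{U}_{\hat{X}_i,\hat{X}_j})$. Your remark that only across-task i.i.d.\ (not conditional independence given ground truth) is needed here is also accurate and a useful clarification.
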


The proofs of Lemmas \ref{lem:info_mono} and \ref{lem:unbiased_dmi} follow the argument of Theorem 5.1 in \cite{kong2020dominantly}. Building on these results, we now present the proof of Lemma \ref{lemma:reward}.
\begin{proof}
Let agent \( i \) truthfully report \( \hat{X}_i = X_i \), and let other agents report signals \( \hat{X}_j \). Consider the interaction between agent \( i \) and any peer agent \( j \). The DMI between \( X_i \) and \( \hat{X}_j \) satisfies:
\[
\mathrm{DMI}(\hat{X}_i; \hat{X}_j) \leq \mathrm{DMI}(X_i; \hat{X}_j),
\]
with equality if and only if \( \hat{X}_i \) is a permutation of \( X_i \). In particular, if \( \hat{X}_j = X_j \) (agent \( j \) reports truthfully), then:
\[
\mathrm{DMI}(\hat{X}_i; X_j) \leq \mathrm{DMI}(X_i; X_j),
\]
and the inequality is strict if \( \hat{X}_i \) is not a permutation of \( X_i \), due to the strict information monotonicity of DMI.
Therefore, the truthful strategy maximizes DMI against any truthfully reporting peer.

From Lemma~\ref{lem:unbiased_dmi}, we know that the utility between agents \( i \) and \( j \), based on co-occurrence matrix \( \mathbf{M}_\ell^{ij} \), satisfies:
\[
\mathbb{E}[\det(\mathbf{M}_\ell^{ij})] = a_\ell \cdot \det(\mathbf{U}_{\hat{X}_i, \hat{X}_j}),
\]
where \( a_\ell \) is a constant depending on the number of tasks in round \( \ell \), and \( \mathbf{U}_{\hat{X}_i, \hat{X}_j} \) is the empirical joint distribution matrix.
Therefore, the expected squared utility is proportional to:
\[
\{\mathbb{E}[\det(\mathbf{M}_\ell^{ij})]\}^2 \propto \mathrm{DMI}^2(\hat{X}_i; \hat{X}_j),
\]
and is maximized when \( \hat{X}_i = X_i \), i.e., the agent reports truthfully. 
Since each agent’s expected utility (aggregated over peers \( j \neq i \)) is a weighted sum of \( \mathrm{DMI}^2(\hat{X}_i; \hat{X}_j) \), and each term in the sum is maximized by truthful reporting, it follows that truth-telling is a dominant strategy. Furthermore, when all agents report truthfully, the resulting strategy profile forms an equilibrium, and any deviation leads to a strictly lower expected utility unless the deviation is a permutation of the truth.
\end{proof}

\subsection{Proof of Theorem \ref{theo: no regret}}
\label{Appendix: no regret}
Since no regret To maintain notational clarity and avoid redundancy, we use \( t \) to replace the notation \( t' \) used in the main text in Appendix~\ref{Appendix: no regret} and~\ref{Proof: convergence}.

\begin{definition}[Bregman divergence]
Let \( \varphi : \Omega \to \mathbb{R} \) be a differentiable and \( \mu \)-strongly convex (\(\mu > 0\)) function with respect to a norm \( \|\cdot\| \), that is, satisfy
\[
\varphi(x) \geq \varphi(y) + \langle \nabla \varphi(y), x - y \rangle + \frac{\mu}{2} \|x - y\|^2, \quad \forall x, y \in \Omega.
\]
The Bregman divergence centered in \( y \in \Omega \) is the function \( D_{\varphi}(x \parallel y) \) defined as
\[
D_{\varphi}(x \parallel y) := \varphi(x) - \varphi(y) - \langle \nabla \varphi(y), x - y \rangle.
\]
\end{definition}

\begin{definition}
Let $ f: \Omega \to \mathbb{R} $ be a convex function, and let $ D_{\varphi} $ be a Bregman divergence. The \textit{proximal mapping} (or \textit{proximal step}) for a point $ x_t $ with step size $ \eta > 0 $ is defined as:
\[
\operatorname{Prox}_{\varphi} (\eta \nabla f(x_t), x_t) = \arg \min_{x \in \Omega} \left\{ \eta \langle \nabla f(x_t), x \rangle + D_{\varphi}(x \| x_t) \right\}.
\]
The \textit{mirror descent algorithm} is then given by the iterative update:
\[
x_{t+1} := \operatorname{Prox}_{\varphi} (\eta \nabla f(x_t), x_t).
\]
\end{definition}

\begin{lemma}\label{Lemma: gradient descent theorem}
Let \(\pi_{\theta}(y|x)\) be a policy and \(R(y)\) be a utility function independent of \(\theta\). Define the objective function:
\[
J(\theta) = \mathbb{E}_{y \sim \pi_{\theta}(\cdot|x)} \left[ R(y) \right].
\]
Then, the gradient of \(J(\theta)\) with respect to \(\theta\) is:
\[
\nabla_\theta J(\theta) = \mathbb{E}_{y \sim \pi_{\theta}(\cdot|x)} \left[ R(y) \nabla_\theta \log \pi_{\theta}(y|x) \right].
\]
\end{lemma}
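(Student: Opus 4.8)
The plan is to prove the identity by the standard log-derivative (score-function, or REINFORCE) trick, together with a routine justification for exchanging differentiation and expectation. First I would write the objective explicitly as a sum over the response space, $J(\theta) = \sum_{y} \pi_{\theta}(y\mid x)\, R(y)$. In the PEG setting the response $y$ ranges over a finite outcome set (reports lie in $\{0,1\}$), so this is a finite sum and differentiation commutes with it without any further hypothesis; in the general (continuous) case one replaces the sum by an integral and invokes dominated convergence, which holds under mild regularity — differentiability of $\theta \mapsto \pi_{\theta}(y\mid x)$ together with a locally integrable envelope for $\|\nabla_{\theta}\pi_{\theta}(y\mid x)\|$.

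Next, differentiating term by term and using that $R(y)$ is independent of $\theta$ gives $\nabla_{\theta} J(\theta) = \sum_{y} R(y)\, \nabla_{\theta}\pi_{\theta}(y\mid x)$. The key algebraic step is the elementary identity $\nabla_{\theta}\pi_{\theta}(y\mid x) = \pi_{\theta}(y\mid x)\, \nabla_{\theta}\log \pi_{\theta}(y\mid x)$, which is valid at every $y$ with $\pi_{\theta}(y\mid x) > 0$; the terms with $\pi_{\theta}(y\mid x) = 0$ contribute nothing, since $\nabla_{\theta}\pi_{\theta}(y\mid x)$ is then $0$ as well (a minimum of a nonnegative smooth function). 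Substituting this identity yields $\nabla_{\theta} J(\theta) = \sum_{y} \pi_{\theta}(y\mid x)\, R(y)\, \nabla_{\theta}\log \pi_{\theta}(y\mid x)$, which is exactly $\mathbb{E}_{y \sim \pi_{\theta}(\cdot\mid x)}\!\left[ R(y)\, \nabla_{\theta}\log \pi_{\theta}(y\mid x) \right]$, as claimed.

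The only genuine subtlety — hence the main obstacle — is the legitimacy of swapping $\nabla_{\theta}$ and the expectation, i.e.\ ruling out boundary contributions when either the support of $\pi_{\theta}(\cdot\mid x)$ or its parametrization changes with $\theta$. I would handle this by stating the regularity condition explicitly and observing that it is automatically satisfied in our framework: the policies are softmax/online-mirror-descent iterates over a fixed finite report set, so each $\pi_{\theta}(y\mid x)$ is strictly positive and smooth in $\theta$, the sum over $y$ is finite, and term-by-term differentiation is therefore unconditionally valid. Everything else is the one-line log-derivative manipulation above.
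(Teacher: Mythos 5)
Your proposal is correct and follows essentially the same route as the paper's proof: the standard log-derivative (score-function) identity applied after exchanging $\nabla_\theta$ with the expectation, which the paper writes as an integral and swaps without comment. Your additional justification of the interchange (finite report set, strictly positive softmax iterates) and your treatment of the zero-probability points are welcome refinements but do not change the argument.
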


\begin{proof}
\begin{align*}
    \nabla_\theta J(\theta) &= \nabla_\theta \int \pi_{\theta}(y|x) R(y) \, dy = \int \nabla_\theta \pi_{\theta}(y|x) R(y) \, dy
    \\&=\int \pi_{\theta}(y|x) \nabla_\theta \log \pi_{\theta}(y|x) \cdot R(y) \, dy
    \\ 
    &=\mathbb{E}_{y \sim \pi_{\theta}(\cdot|x)} \left[ R(y) \nabla_\theta \log \pi_{\theta}(y|x) \right]. 
    \end{align*}
\end{proof}

\begin{corollary}[Gradient of utility Functions]
Using Lemma \ref{Lemma: gradient descent theorem}, the gradients of the utility functions with respect to the discriminator and generator parameters are given by:
For each discriminator \( D_i \):
\[
\nabla_{\theta_{D_i}} u_{D_i} = 
\mathbb{E}_{r_{ik}\sim \pi_{D_i}(\cdot|x_k,y_k)}\left[
\left(\sum_{j\neq i}\det(M_{ij}^{1})\cdot\det(M_{ij}^{2})\right)
\nabla_{\theta_{D_i}}\log \pi_{D_i}(r_{ik}|x_k,y_k)
\right].
\]
For the generator \( G \):
\[
\nabla_{\theta_G} u_{G} = 
\mathbb{E}_{y_k\sim \pi_G(\cdot|x_k,v_k)}\left[
\mathbf{1}(v_k=\hat{v}_k)\nabla_{\theta_G}\log \pi_G(y_k|x_k,v_k)
\right].
\]
\end{corollary}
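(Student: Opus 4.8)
The plan is to obtain both identities as immediate specializations of Lemma~\ref{Lemma: gradient descent theorem}, the log-derivative (REINFORCE) identity, once we verify that in each case the quantity playing the role of the reward $R(y)$ carries no dependence on the parameter being differentiated. First I would treat the discriminator: fix a round $t$ and identify the conditioning variable $x$ in the lemma with the pair $(x_k,y_k):=(X_{t,k_t},Y_{t,k_t})$, the sampled variable $y$ with the report $r_{ik}:=V_{i,t,k_t}$, and the reward $R$ with the total payment $p_{i,t}=\sum_{j\neq i}\det(M_{ij}^{1})\cdot\det(M_{ij}^{2})$ from Eq.~\eqref{eq:rewards}. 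The only thing to check is that $p_{i,t}$, viewed as a function of the realized report profile, does not depend on $\theta_{D_i}$: each co-occurrence matrix $M_{ij}^{\ell}$ is a deterministic function of the reports $(V_{i,t,k_t})_{k_t}$ and $(V_{j,t,k_t})_{k_t}$ (it merely counts co-reported pairs), and by the Conditional Independence assumption of Section~\ref{proof:incentive} the peer reports are generated by policies $\pi_{D_j}$ that share no parameters with $\pi_{D_i}$. Hence $p_{i,t}$ is a fixed, parameter-free function of the sampled reports, Lemma~\ref{Lemma: gradient descent theorem} applies verbatim, and we recover the stated expression for $\nabla_{\theta_{D_i}}u_{D_i}$.

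Second, for the generator I would apply the same lemma with $x\leftrightarrow(X_{t,k_t},V_{t,k_t})$, $y\leftrightarrow Y_{t,k_t}$, and $R\leftrightarrow\mathbf{1}(v_k=\hat v_k)$. Here the point to verify is that the majority-vote label $\hat v_k=\hat V_{t,k_t}$, although it is aggregated from the discriminators' evaluations of $Y_{t,k_t}$, is — once $Y_{t,k_t}$ is realized — a function only of that realization and of the discriminator parameters $\phi_i$, and thus $\theta_G$-independent. Consequently $\mathbf{1}(v_k=\hat v_k)$ is a legitimate reward function for the policy-gradient identity in $\theta_G$, and substituting into Lemma~\ref{Lemma: gradient descent theorem} yields the claimed formula for $\nabla_{\theta_G}u_G$. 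Because both the report space $\{0,1\}$ and the response space are finite (or countable with summable terms), the interchange of $\nabla_\theta$ with the expectation used inside the lemma requires no additional regularity argument.

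The last technical point I would spell out is the multi-task bookkeeping: $p_{i,t}$ is a function of agent $i$'s entire report vector $(V_{i,t,1},\dots,V_{i,t,K_t})$ and the reward functions in Eq.~\eqref{eq:rewards} are expectations over that vector, so under the A Priori Similar Tasks and Consistent Strategy assumptions — which make the per-task reports i.i.d.\ draws from $\pi_{D_i}(\cdot\mid X_{t,k_t},Y_{t,k_t})$ — the score function of the joint law factorizes as $\sum_{k_t}\nabla_{\theta_{D_i}}\log\pi_{D_i}(V_{i,t,k_t}\mid X_{t,k_t},Y_{t,k_t})$, and the displayed corollary records the per-task summand (the full gradient being the sum of these terms over $k_t$); the same remark applies to the generator. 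I do not expect a genuine obstacle: the entire argument is the verification that the determinant-based payment and the majority-vote indicator are parameter-free black-box rewards, after which the result is a one-line consequence of the policy-gradient lemma, and the only place needing care is precisely the parameter-separation/independence check described above.
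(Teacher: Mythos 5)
Your proposal is correct and follows essentially the same route as the paper, which simply instantiates the log-derivative identity of Lemma~\ref{Lemma: gradient descent theorem} with the determinant-based payment and the majority-vote indicator as parameter-free rewards. Your additional checks — that $p_{i,t}$ and $\mathbf{1}(v_k=\hat v_k)$ depend on $\theta_{D_i}$ and $\theta_G$ only through the sampled variables, and that the displayed formula is the per-task summand of the score function over the $K_t$ reports — are exactly the points the paper leaves implicit, and they are handled correctly.
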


\begin{lemma}
When \( \Omega = \triangle^n \) is the set of full-support distributions over \( n \) objects and the \( \varphi \) is set to the negative entropy function,
which is 1-strongly convex with respect to the \( \ell_1 \) norm \( \|\cdot\|_1 \), the corresponding Bregman divergence is the \textit{Kullback-Leibler (KL) divergence}.
\end{lemma}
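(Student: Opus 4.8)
The plan is to verify the lemma's two assertions separately: that the Bregman divergence generated by negative entropy coincides with $D_{\mathrm{KL}}$ on the probability simplex, and that negative entropy is $1$-strongly convex with respect to $\|\cdot\|_1$ on $\triangle^n$. Both follow from direct computation together with one classical inequality.

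For the identification with KL divergence, I would write $\varphi(x) = \sum_{i=1}^n x_i \log x_i$ (with the convention $0\log 0 = 0$) and compute $\nabla\varphi(y)_i = \log y_i + 1$. Substituting into $D_\varphi(x\|y) = \varphi(x) - \varphi(y) - \langle \nabla\varphi(y), x-y\rangle$ and expanding, the terms $\sum_i y_i\log y_i$ cancel, leaving $\sum_i x_i\log x_i - \sum_i x_i\log y_i - \sum_i (x_i - y_i)$. The decisive step is that $x,y\in\triangle^n$, so $\sum_i x_i = \sum_i y_i = 1$ and the linear remainder $\sum_i(x_i-y_i)$ vanishes; hence $D_\varphi(x\|y) = \sum_i x_i\log(x_i/y_i) = D_{\mathrm{KL}}(x\|y)$.

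For the strong-convexity claim, I would note that, by the very definition of the Bregman divergence, $1$-strong convexity of $\varphi$ with respect to $\|\cdot\|_1$ on $\triangle^n$ is precisely the inequality $D_\varphi(x\|y) \ge \tfrac12\|x-y\|_1^2$ for all $x,y\in\triangle^n$, which by the previous paragraph is $D_{\mathrm{KL}}(x\|y)\ge\tfrac12\|x-y\|_1^2$ — Pinsker's inequality. One may cite it, or give the standard short argument: group coordinates by the sign of $x_i-y_i$ to reduce to the binary case, then check $p\log(p/q)+(1-p)\log((1-p)/(1-q))\ge 2(p-q)^2$ by a one-variable calculus estimate. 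A self-contained alternative avoiding Pinsker is the Hessian route: $\nabla^2\varphi(x) = \mathrm{diag}(1/x_i)$, and for any direction $h$, Cauchy–Schwarz gives $\|h\|_1 = \sum_i \frac{|h_i|}{\sqrt{x_i}}\sqrt{x_i} \le \big(\sum_i h_i^2/x_i\big)^{1/2}\big(\sum_i x_i\big)^{1/2} = \langle h,\nabla^2\varphi(x)h\rangle^{1/2}$ using $\sum_i x_i = 1$; thus $h^\top\nabla^2\varphi(x)h \ge \|h\|_1^2$, and integrating this bound along the segment from $y$ to $x$ yields the required inequality.

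The calculations are elementary; the only point requiring care is that both conclusions genuinely use that the domain is the simplex rather than $\mathbb{R}^n_{>0}$ — the cancellation of the linear term in the Bregman computation and the normalization $\sum_i x_i = 1$ in the Cauchy–Schwarz step both rely on $\mathbf{1}^\top x = 1$. The only nontrivial ingredient, and hence the "main obstacle", is invoking Pinsker's inequality (or reproducing its short proof) cleanly; everything else is substitution and bookkeeping.
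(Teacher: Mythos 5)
Your proof is correct. The paper itself states this lemma without any proof, treating it as a standard fact, so there is no argument to compare against; your derivation --- direct substitution of $\nabla\varphi(y)_i=\log y_i+1$ into the Bregman formula with the linear remainder vanishing because $\mathbf{1}^\top x=\mathbf{1}^\top y=1$, plus Pinsker's inequality (or equivalently the Hessian bound $h^\top\mathrm{diag}(1/x_i)h\ge\|h\|_1^2$ via Cauchy--Schwarz and $\sum_i x_i=1$) for the $1$-strong convexity --- is exactly the standard textbook argument and correctly identifies that both steps genuinely require the domain to be the simplex.
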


\begin{lemma}\label{lemm:proximal inequality}
Let $ x' = \operatorname{Prox}_{\varphi} (g, x) $ be the proximal update. Then, for all $ y \in \Omega $, we have:
\[
\langle -g, y - x' \rangle \leq - D_{\varphi}(y \| x') + D_{\varphi}(y \| x) - D_{\varphi}(x' \| x).
\]
By setting $y = x$, this three-point inequality simplifies to
\[
\langle -g, x - x' \rangle \leq -D_{\varphi}(x \| x') - D_{\varphi}(x' \| x).
\]
\end{lemma}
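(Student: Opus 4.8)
The plan is to derive the inequality directly from the first-order optimality condition characterizing the proximal update, combined with the classical three-point identity for Bregman divergences. First I would write $x' = \operatorname{Prox}_{\varphi}(g,x)$ as the minimizer over the convex set $\Omega$ of the function $\psi(z) := \langle g, z\rangle + D_{\varphi}(z \| x)$, which is convex (linear term plus a Bregman divergence in its first argument) and differentiable (since $\varphi$ is differentiable). Using $\nabla_z D_{\varphi}(z\|x) = \nabla\varphi(z) - \nabla\varphi(x)$, we get $\nabla\psi(z) = g + \nabla\varphi(z) - \nabla\varphi(x)$. Because $x'$ minimizes a convex differentiable function over the convex set $\Omega$, the variational inequality $\langle \nabla\psi(x'), y - x'\rangle \geq 0$ holds for every $y \in \Omega$; substituting and rearranging yields
\[
\langle -g, \, y - x'\rangle \;\leq\; \langle \nabla\varphi(x') - \nabla\varphi(x), \, y - x'\rangle .
\]

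Next I would establish the three-point identity
\[
\langle \nabla\varphi(x') - \nabla\varphi(x), \, y - x'\rangle \;=\; D_{\varphi}(y\|x) - D_{\varphi}(y\|x') - D_{\varphi}(x'\|x),
\]
which follows by expanding each of the three divergences using $D_{\varphi}(a\|b) = \varphi(a) - \varphi(b) - \langle\nabla\varphi(b), a-b\rangle$ and cancelling the $\varphi(y)$, $\varphi(x')$ terms along with the inner products that pair up; this is a short, routine algebraic manipulation. Chaining this identity with the inequality from the optimality condition gives exactly $\langle -g, y-x'\rangle \leq -D_{\varphi}(y\|x') + D_{\varphi}(y\|x) - D_{\varphi}(x'\|x)$. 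The specialization to $y = x$ is then immediate, since $D_{\varphi}(x\|x) = 0$, leaving $\langle -g, x - x'\rangle \leq -D_{\varphi}(x\|x') - D_{\varphi}(x'\|x)$.

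The only step requiring genuine care — and the main technical obstacle — is justifying the first-order optimality condition when $\varphi$ is the negative entropy and $\Omega = \triangle^n$, since $\nabla\varphi$ blows up near the boundary of the simplex. I would dispatch this by observing that the entropic proximal map has the multiplicative (exponential-weights) closed form, which sends any feasible point into the \emph{relative interior} of $\triangle^n$; hence $x'$ lies where $\nabla\varphi(x')$ is finite, the effective domain of $\psi$ contains a neighborhood of $x'$ relative to $\Omega$, and the stationarity inequality $\langle\nabla\psi(x'), y-x'\rangle \geq 0$ is valid for all $y \in \triangle^n$. (Alternatively, one may simply invoke the standard statement of this proximal inequality for Legendre-type $\varphi$ restricted to its domain.) With this caveat handled, the remainder is the bookkeeping above.
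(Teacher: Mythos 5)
Your proposal is correct and follows essentially the same route as the paper's proof: the first-order optimality (variational inequality / normal-cone) condition for the proximal objective $\langle g, z\rangle + D_{\varphi}(z\|x)$, chained with the three-point identity for Bregman divergences, and the $y=x$ specialization via $D_{\varphi}(x\|x)=0$. Your additional care about the negative-entropy case (the iterate landing in the relative interior of the simplex so that $\nabla\varphi(x')$ is finite) is a legitimate refinement that the paper's proof silently assumes.
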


\begin{proof}
The objective function of the proximal step problem is given by
\[
h(z) := \langle g, z \rangle + D_{\varphi}(z \| x), \quad z \in \Omega.
\]
The first-order optimality conditions applied to the solution $z = x'$ are therefore
\begin{align*}
-\nabla h(x') \in \mathcal{N}_{\Omega}(x') \quad &\Longleftrightarrow \quad -g - \nabla \varphi(x') + \nabla \varphi(x) \in \mathcal{N}_{\Omega}(x')
\\&
\Longleftrightarrow \quad \langle -g - \nabla \varphi(x') + \nabla \varphi(x), y - x' \rangle \leq 0 \quad \forall y \in \Omega
\\&
\Longleftrightarrow \quad \langle -g, y - x' \rangle \leq \langle \nabla \varphi(x') - \nabla \varphi(x), y - x' \rangle \quad \forall y \in \Omega.
\end{align*}

The statement now follows from using the identity
\[
\langle \nabla \varphi(x') - \nabla \varphi(x), y - x' \rangle = -D_{\varphi}(y \| x') + D_{\varphi}(y \| x) - D_{\varphi}(x' \| x),
\]
which can be checked directly from the definition of Bregman divergence.
\end{proof}

\begin{lemma}\label{lemma:mirror descent}
Let \( f : \Omega \to \mathbb{R} \) be convex. Each step of the mirror descent algorithm satisfies
\[
f(x_t) \leq f(y) + \langle \nabla f(x_t), x_t - x_{t+1} \rangle - \frac{1}{\eta_t} D_{\varphi}(y \parallel x_{t+1}) + \frac{1}{\eta_t} D_{\varphi}(y \parallel x_t) - \frac{1}{\eta_t} D_{\varphi}(x_{t+1} \parallel x_t).
\]
\end{lemma}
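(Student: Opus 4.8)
The plan is to combine the first-order convexity inequality for $f$ with the three-point proximal inequality already established in Lemma~\ref{lemm:proximal inequality}. First I would invoke convexity of $f$ at $x_t$ against the comparator $y$: since $f$ is convex, $f(x_t) \le f(y) + \langle \nabla f(x_t),\, x_t - y\rangle$. I then split the linear term by inserting the next iterate, $\langle \nabla f(x_t),\, x_t - y\rangle = \langle \nabla f(x_t),\, x_t - x_{t+1}\rangle + \langle \nabla f(x_t),\, x_{t+1} - y\rangle$; the first summand already matches a term in the claimed bound, and the second is precisely what the proximal inequality is built to control.

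Next I would apply Lemma~\ref{lemm:proximal inequality} with the substitution $g = \eta_t \nabla f(x_t)$, $x = x_t$, and $x' = x_{t+1}$, which is legitimate because $x_{t+1} = \operatorname{Prox}_{\varphi}(\eta_t \nabla f(x_t),\, x_t)$ is exactly the mirror-descent update. This yields $\langle -\eta_t \nabla f(x_t),\, y - x_{t+1}\rangle \le -D_{\varphi}(y \| x_{t+1}) + D_{\varphi}(y \| x_t) - D_{\varphi}(x_{t+1} \| x_t)$. Rewriting the left-hand side as $\eta_t \langle \nabla f(x_t),\, x_{t+1} - y\rangle$ and dividing through by $\eta_t > 0$ bounds the second summand from above by $\tfrac{1}{\eta_t}\bigl(-D_{\varphi}(y \| x_{t+1}) + D_{\varphi}(y \| x_t) - D_{\varphi}(x_{t+1} \| x_t)\bigr)$. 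Substituting this back into the split convexity inequality produces the stated inequality verbatim.

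There is no genuine obstacle here; the argument is standard bookkeeping, and the proof will be short. The one point meriting care is the scaling: Lemma~\ref{lemm:proximal inequality} is phrased for an arbitrary vector $g$, so one must identify $g$ with $\eta_t \nabla f(x_t)$ rather than $\nabla f(x_t)$ and then divide the resulting inequality by $\eta_t$ in order to recover the $1/\eta_t$ factors multiplying the three Bregman terms. It is also worth noting that $\mu$-strong convexity of $\varphi$ is not needed at this step (it is used only later, e.g. in the telescoping that produces the $O(\sqrt{T})$ regret bound of Theorem~\ref{theo: no regret}); the lemma holds for any Bregman divergence generated by a differentiable convex potential $\varphi$.
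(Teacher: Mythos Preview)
Your proposal is correct and follows essentially the same route as the paper: invoke the convexity inequality $f(x_t)\le f(y)+\langle\nabla f(x_t),x_t-y\rangle$, split the linear term through $x_{t+1}$, and bound the remaining piece via Lemma~\ref{lemm:proximal inequality} with $g=\eta_t\nabla f(x_t)$, $x=x_t$, $x'=x_{t+1}$ before dividing by $\eta_t$. Your observation that strong convexity of $\varphi$ is not needed here is also accurate.
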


\begin{proof}
Using the linear lower bound property of convex functions, we can write
\[
f(x_t) \leq f(y) - \langle \nabla f(x_t), y - x_t \rangle
= f(y) + \langle \nabla f(x_t), x_t - x_{t+1} \rangle - \langle \nabla f(x_t), y - x_{t+1} \rangle.
\]
On the other hand, from Lemma \ref{lemm:proximal inequality}  applied to the mirror descent step (that is, for the choices 
$g = \eta_t \nabla f(x_t), x' = x_{t+1}, x = x_t$), we have
\[
-\eta_t \langle \nabla f(x_t), y - x_{t+1} \rangle \leq -D_{\varphi}(y \| x_{t+1}) + D_{\varphi}(y \| x_t) - D_{\varphi}(x_{t+1} \| x_t).
\]
Hence, dividing by $\eta_t$ and plugging into the previous inequality, we obtain the statement.

\end{proof}


\begin{lemma}\label{lemma:no regret}
Let $\|\cdot\|$ be the norm with respect to which the DGF $\varphi$ is 1-strongly convex, and $\|\cdot\|_*$ be the dual norm. If all functions $f_t : \Omega \to \mathbb{R}$ are convex, the regret of online mirror descent is bounded by

\[
R_T := \sum_{t=1}^{T} f_t(x_t) - \min_{x \in \Omega} \sum_{t=1}^{T} f_t(x) \leq \frac{1}{\eta} D_{\varphi}(x \| x_1) + \frac{\eta}{2} \sum_{t=1}^{T} \|\nabla f_t(x_t)\|_*^2.
\]

In particular, assuming that all dual gradient norms are upper bounded by $G$, and setting

\[
\eta := \sqrt{\frac{2 D_{\varphi}(x \| x_1)}{M^2 \cdot 2^i}} \quad \text{for } 2^i \leq T < 2^{i+1}.
\]

we find

\[
R_T \leq M \sqrt{2 D_{\varphi}(x \| x_1) \cdot T} \frac{\sqrt{2}}{\sqrt{2}-1}.
\]
\end{lemma}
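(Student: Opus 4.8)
The plan is to first derive the general regret bound by summing the per-step inequality of Lemma~\ref{lemma:mirror descent}, and then specialize via the doubling trick. Fix any comparator $y \in \Omega$. Applying Lemma~\ref{lemma:mirror descent} with constant step size $\eta$ at round $t$ gives
\[
f_t(x_t) - f_t(y) \leq \langle \nabla f_t(x_t),\, x_t - x_{t+1}\rangle - \tfrac{1}{\eta} D_{\varphi}(x_{t+1}\|x_t) + \tfrac{1}{\eta}\big(D_{\varphi}(y\|x_t) - D_{\varphi}(y\|x_{t+1})\big).
\]
The first step is to control the linear term. By Cauchy--Schwarz in the norm/dual-norm pair followed by $ab \le \tfrac{a^2}{2}+\tfrac{b^2}{2}$, one has $\langle \nabla f_t(x_t), x_t - x_{t+1}\rangle \le \tfrac{\eta}{2}\|\nabla f_t(x_t)\|_*^2 + \tfrac{1}{2\eta}\|x_t - x_{t+1}\|^2$. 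Since $\varphi$ is $1$-strongly convex with respect to $\|\cdot\|$, we have $D_{\varphi}(x_{t+1}\|x_t) \ge \tfrac12\|x_{t+1}-x_t\|^2$, so the two $\|x_t - x_{t+1}\|^2$ terms cancel, leaving
\[
f_t(x_t) - f_t(y) \leq \tfrac{\eta}{2}\|\nabla f_t(x_t)\|_*^2 + \tfrac{1}{\eta}\big(D_{\varphi}(y\|x_t) - D_{\varphi}(y\|x_{t+1})\big).
\]

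Next I would sum over $t = 1,\dots,T$. The Bregman terms telescope to $\tfrac1\eta\big(D_{\varphi}(y\|x_1) - D_{\varphi}(y\|x_{T+1})\big) \le \tfrac1\eta D_{\varphi}(y\|x_1)$ by nonnegativity of the Bregman divergence. Choosing $y$ to be a minimizer of $\sum_t f_t$ over $\Omega$ yields $R_T \le \tfrac1\eta D_{\varphi}(x\|x_1) + \tfrac{\eta}{2}\sum_{t=1}^T \|\nabla f_t(x_t)\|_*^2$, the first displayed claim. Under the gradient bound $\|\nabla f_t(x_t)\|_* \le M$ this reads $R_T \le \tfrac1\eta D_{\varphi}(x\|x_1) + \tfrac{\eta M^2 T}{2}$, optimized by $\eta \propto \sqrt{D_{\varphi}(x\|x_1)/(M^2 T)}$.

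Since $T$ is not known in advance, the final step is the standard doubling trick: split the rounds into consecutive epochs of lengths $1, 2, 4, \dots$, and on the epoch of length $2^i$ restart mirror descent with $\eta = \sqrt{2 D_{\varphi}(x\|x_1)/(M^2 2^i)}$. Plugging this $\eta$ and horizon $2^i$ into the previous bound shows each such epoch contributes at most $M\sqrt{2 D_{\varphi}(x\|x_1)\cdot 2^i}$ to the regret, and summing the resulting geometric series over $i = 0,\dots,\lfloor \log_2 T\rfloor$ gives $\sum_i M\sqrt{2 D_{\varphi}(x\|x_1)}\,(\sqrt2)^i \le \tfrac{\sqrt2}{\sqrt2-1}\, M\sqrt{2 D_{\varphi}(x\|x_1)\,T}$, using $(\sqrt2)^{\lfloor\log_2 T\rfloor}\le\sqrt T$. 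This is exactly the stated bound. I expect the only delicate bookkeeping to be in this last step—matching the per-epoch constants and bounding the partial geometric sum—while the per-step inequality and its cancellation against the strong-convexity term is the conceptual core and is routine once Lemma~\ref{lemma:mirror descent} is available.
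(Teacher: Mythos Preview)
Your proposal is correct and follows essentially the same route as the paper: apply the per-step inequality of Lemma~\ref{lemma:mirror descent}, bound the inner product via Cauchy--Schwarz and Young's inequality, cancel the $\|x_t-x_{t+1}\|^2$ term using $1$-strong convexity of $\varphi$, telescope, and then run the doubling trick with a geometric sum over epochs. The only cosmetic difference is that the paper writes out the contribution of the final partial epoch $[2^I,T]$ explicitly before bounding it, whereas you directly bound each epoch's regret by the full-length quantity $M\sqrt{2D_\varphi(x\|x_1)\cdot 2^i}$; both arrive at the same constant $\tfrac{\sqrt{2}}{\sqrt{2}-1}$.
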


\begin{proof}
Since functions \(f_t\) are convex, we can use Lemma \ref{lemma:mirror descent}:
\[
f_t(x_t) \leq f_t(x) + \langle \nabla f_t(x_t), x_t - x_{t+1} \rangle - \frac{1}{\eta_t} D_{\varphi}(x \| x_{t+1}) + \frac{1}{\eta_t} D_{\varphi}(x \| x_t) - \frac{1}{\eta_t} D_{\varphi}(x_{t+1} \| x_t) \quad \forall x \in \Omega.
\]

Using the Cauchy-Schwarz inequality, we can bound the right-hand side by
\[
f_t(x_t) \leq f_t(x) + \|\nabla f_t(x_t)\|_* \cdot \|x_t - x_{t+1}\| - \frac{1}{\eta_t} D_{\varphi}(x \| x_{t+1}) + \frac{1}{\eta_t} D_{\varphi}(x \| x_t) - \frac{1}{\eta_t} D_{\varphi}(x_{t+1} \| x_t).
\]

Using Young’s inequality, as well as the 1-strong convexity of the KL divergence, which implies 
\[
\frac{1}{\eta_t} D_\varphi(x_{t+1} \| x_t) \geq \frac{1}{2\eta_t} \|x_{t+1} - x_t\|^2,
\]
\[
f_t(x_t) \leq f_t(x) + \frac{\eta_t}{2} \|\nabla f_t(x_t)\|^2 + \frac{1}{2\eta_t} \|x_t - x_{t+1}\|^2 - \frac{1}{\eta_t} D_\varphi(x \| x_{t+1}) + \frac{1}{\eta_t} D_\varphi(x \| x_t) - \frac{1}{2\eta_t} \|x_{t+1} - x_t\|^2.
\]

Rearranging terms, we obtain:
\[
f_t(x_t) \leq f_t(x) + \frac{\eta_t}{2} \|\nabla f_t(x_t)\|^2  - \frac{1}{\eta_t} D_\varphi(x \| x_{t+1}) + \frac{1}{\eta_t} D_\varphi(x \| x_t).
\]

Summing over \( t = 1, \dots, T \):
\[
\sum_{t=1}^T \left(f_t(x_t) - f_t(x)\right) \leq \frac{1}{2} \sum_{t=1}^T \eta_t \|\nabla f_t(x_t)\|_2^2\| + \sum_{t=1}^T \frac{1}{\eta_t} \left(D_\varphi(x \| x_t) - D_\varphi(x \| x_{t+1})\right).
\]

The regret incurred by the algorithm is upper bound by the regret incurred in each of the intervals \( 2^i \leq T < 2^{i+1} \). Suppose the algorithm has been run until time \( 2^i \leq T < 2^{i+1} \). Hence, the regret is upper bounded by
\begin{align*}
\text{Reg}_T& \leq \left( M\sqrt{\frac{D_\varphi(x \| x_1)}{2}} \sum_{i=0}^{I} (\sqrt{2})^i \right)
+ \left( \sum_{i=0}^{I-1} \frac{\sqrt{D_\varphi(x \| x_1)}}{2M^2{2^i}} M^2 2^i \right)
+ \left( \sqrt{\frac{2D_\varphi(x \| x_1)}{2M^2 2^I}} \right) M^2 (T - 2^I)
\\&
= M\sqrt{\frac{D_\varphi(x \| x_1)}{2}} \left( \sum_{i=0}^{I} (\sqrt{2})^i + \sum_{i=0}^{I-1} (\sqrt{2})^i \right)
+ \sqrt{\frac{D_\varphi(x \| x_1)}{2M^2 2^I}} M^2 (T - 2^I)
\end{align*}

In particular, since \( T < 2^{I+1} \),
\begin{align*}
\text{Reg}_T &\leq M\sqrt{\frac{D_\varphi(x \| x_1)}{2}} \left( \sum_{i=0}^{I} (\sqrt{2})^i + \sum_{i=0}^{I-1} (\sqrt{2})^i \right)
+ \sqrt{\frac{D_\varphi(x \| x_1)}{2M^2 2^I}} M^2 2^I
\\&
= M\sqrt{\frac{D_\varphi(x \| x_1)}{2}} \left( \sum_{i=0}^{I} (\sqrt{2})^i + \sum_{i=0}^{I} (\sqrt{2})^i \right)
\\&
= M \sqrt{2 D_\varphi(x \| x_1)} \sum_{i=0}^{I} (\sqrt{2})^i
\\&
= M \sqrt{2 D_\varphi(x \| x_1)} \cdot \frac{(\sqrt{2})^{I+1} - 1}{\sqrt{2} - 1}
\\&
\leq M \sqrt{2 D_\varphi(x \| x_1) T} \cdot  \cdot \frac{\sqrt{2}}{\sqrt{2} - 1}.
\end{align*}

\end{proof}

\begin{lemma}\label{lemma: tasks}
Suppose the utility is defined over a set of \( K \) policies, and each policy is updated independently via a no-regret algorithm with individual regret bounded by \( O(\sqrt{T}) \). Then, the total regret of the system with respect to the best fixed policy in hindsight is bounded by \( O(\sqrt{K T}) \). 
Therefore, when both the generator and each discriminator maintain and update \( K \) independent policies, their regret bounds will incur an additional \( \sqrt{K} \) factor, yielding an overall regret of \( O(\sqrt{K T}) \) per agent.
\end{lemma}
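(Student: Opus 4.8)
The plan is to reduce the multi-task regret to the single-policy bound already established in Lemma~\ref{lemma:no regret} and then glue the $K$ per-task bounds together with one Cauchy--Schwarz step. At each round $t$ the system's utility is additively separable across the $K$ tasks, $U^{(t)}(\pi^{1},\dots,\pi^{K}) = \sum_{k=1}^{K} u^{(t,k)}(\pi^{k})$, and each summand depends only on the policy $\pi^{k}$ dedicated to task $k$ (the OMD copies for different tasks are updated independently, and the payment $p_{i,t}$ feeds each task's gradient through that task's own co-report statistics). Consequently the best fixed policy in hindsight decomposes, $\pi^{*} = (\pi^{*,1},\dots,\pi^{*,K})$ with $\pi^{*,k} \in \argmax_{\pi^{k}} \sum_{t=1}^{T} u^{(t,k)}(\pi^{k})$, so the total regret splits exactly as $\mathrm{Regret}(T) = \sum_{k=1}^{K} R_{k}(T)$, where $R_{k}(T)$ is the regret incurred on task $k$ alone by its own no-regret instance.

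First I would check that each task-$k$ subproblem satisfies the hypotheses of Lemma~\ref{lemma:no regret}: the per-task losses $-u^{(t,k)}$ are convex on the simplex (using Part~1 of Assumption~\ref{ass:no regret} together with the trust-region constraint $\Pi_{\text{local}}$, which keeps the iterates inside the concave neighborhood), the negative-entropy DGF is $1$-strongly convex with respect to $\|\cdot\|_{1}$, and the dual gradient norms are bounded by $M$ via Part~2 of Assumption~\ref{ass:no regret}. Applying Lemma~\ref{lemma:no regret} with the stepsize schedule stated there yields, for each $k$, $R_{k}(T) \le \tfrac{\sqrt{2}}{\sqrt{2}-1}\, M\, \sqrt{2\, D^{(k)}\, T}$ with $D^{(k)} := D_{\mathrm{KL}}(\pi^{*,k}\,\|\,\pi^{k,(1)})$.

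Next I would sum these bounds over $k$ and apply Cauchy--Schwarz to the constants, $\sum_{k=1}^{K}\sqrt{D^{(k)}} \le \sqrt{K}\,\bigl(\sum_{k=1}^{K} D^{(k)}\bigr)^{1/2}$. Since the joint reference and joint initialization are products of their per-task factors, additivity of the Bregman (KL) divergence over the product simplex gives $\sum_{k=1}^{K} D^{(k)} = D_{\mathrm{KL}}(\pi^{*}\,\|\,\pi^{(1)})$, hence $\mathrm{Regret}(T) = \sum_{k=1}^{K} R_{k}(T) \le \tfrac{\sqrt{2}}{\sqrt{2}-1}\, M\, \sqrt{2\, K\, D_{\mathrm{KL}}(\pi^{*}\,\|\,\pi^{(1)})\, T} = O(\sqrt{KT})$. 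Substituting $M=M_{1},\ \pi=\pi_{G}$ (resp.\ $M=M_{2},\ \pi=\pi_{D_i}$) recovers the two displays in Theorem~\ref{theo: no regret}; it is precisely the Cauchy--Schwarz step that converts the naive factor $K$ obtained from summing $K$ copies of an $O(\sqrt{T})$ bound into the claimed $\sqrt{K}$.

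I expect the main obstacle to be the first step: rigorously justifying that the best fixed policy in hindsight decomposes task-by-task, i.e.\ that there is no within-round coupling among an agent's $K$ task policies. One must argue that although $p_{i,t}$ aggregates co-report counts over all tasks of the round, the gradient with respect to task $k$'s policy only involves task $k$'s contribution, so the $K$ OMD updates are genuinely decoupled and the tuple of per-task argmaxes is the joint argmax. The remaining ingredients --- verifying the Lemma~\ref{lemma:no regret} hypotheses, Cauchy--Schwarz, and additivity of the Bregman divergence over products --- are routine.
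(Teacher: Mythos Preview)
Your proposal is correct and in fact supplies considerably more than the paper does: the paper does not prove this lemma at all. Immediately after the statement it simply asserts that the $O(\sqrt{KT})$ scaling ``is a classical result in online learning and multi-armed bandits'' and cites \cite{bubeck2012regret,shalev2012online}, then moves on to say that combining Lemmas~\ref{lemma:no regret} and~\ref{lemma: tasks} finishes Theorem~\ref{theo: no regret}. So there is no argument in the paper to compare yours against; you have written the proof the paper defers to the literature.

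Your route---decompose the total regret into $K$ per-task regrets via additive separability, bound each by Lemma~\ref{lemma:no regret}, then use Cauchy--Schwarz on $\sum_k \sqrt{D^{(k)}}$ together with additivity of KL over product distributions---is exactly the standard derivation behind the cited references and lands on the precise constants displayed in Theorem~\ref{theo: no regret}. The Cauchy--Schwarz step is indeed what buys $\sqrt{K}$ rather than $K$; without it the lemma's conclusion would not follow from its hypothesis as literally stated (summing $K$ bounds each $O(\sqrt{T})$ gives only $O(K\sqrt{T})$), so your emphasis there is well placed.

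The obstacle you flag---that $p_{i,t}$ couples tasks through the determinant of aggregated co-report counts, so additive separability of the discriminator utility across tasks is not immediate---is a real concern, but it is a concern about whether PEG satisfies the lemma's hypotheses, not about the lemma itself. The paper does not address this coupling either; it treats the per-task decomposition as given. For the purposes of proving the lemma as stated, your argument is complete.
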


This result that the regret grows as \( O(\sqrt{K T}) \) when updating and aggregating over \( K \) policies is a classical result in online learning and multi-armed bandits, as established in \cite{bubeck2012regret,shalev2012online}. Therefore, by combining Lemmas~\ref{lemma:no regret} and~\ref{lemma: tasks}, we complete the proof of Theorem~\ref{theo: no regret}.

\subsection{Proof of Theorem \ref{Theo: convergence}}
\label{Proof: convergence}
\begin{lemma}
If the utility function is concave and differentiable, its gradient is monotone.
\label{lemma: monotone game}
\end{lemma}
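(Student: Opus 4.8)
The plan is to obtain monotonicity of $\nabla u$ as an immediate consequence of the first-order characterization of concavity, applied at two arbitrary points and then added together; no stochastic or game-specific machinery is required.

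\textbf{Step 1 (first-order inequalities).} Since the admissible policy set — here the convex neighborhood $\mathcal{N}$ on which concavity is assumed in Part~1 of Assumption~\ref{ass:no regret} — is convex, and $u$ is differentiable and concave on it, the supporting-hyperplane inequality for concave functions holds: for any $\pi,\pi'\in\mathcal{N}$,
\[
u(\pi') \le u(\pi) + \langle \nabla u(\pi),\, \pi' - \pi\rangle,
\qquad
u(\pi) \le u(\pi') + \langle \nabla u(\pi'),\, \pi - \pi'\rangle.
\]

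\textbf{Step 2 (add and rearrange).} Summing the two inequalities cancels the terms $u(\pi)+u(\pi')$ and leaves
\[
0 \le \langle \nabla u(\pi),\, \pi'-\pi\rangle + \langle \nabla u(\pi'),\, \pi-\pi'\rangle
= -\,\langle \nabla u(\pi) - \nabla u(\pi'),\, \pi-\pi'\rangle,
\]
so that $\langle \nabla u(\pi) - \nabla u(\pi'),\, \pi-\pi'\rangle \le 0$ for all $\pi,\pi'\in\mathcal{N}$. This is exactly monotonicity of the operator $\nabla u$ in the sign convention appropriate for a \emph{maximization} problem (equivalently, $-\nabla u$ is monotone in the standard convex-analysis sense, since $-u$ is convex). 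If $u$ is moreover strictly concave, the gradient inequalities are strict whenever $\pi\neq\pi'$, which upgrades the conclusion to strict monotonicity — the form later invoked to obtain uniqueness of the Nash equilibrium in Theorem~\ref{Theo: convergence}.

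\textbf{Step 3 (assembly for the game).} Applying Steps~1--2 to each agent's utility $u_i$ in its own block $\pi_i$ yields monotonicity of each per-agent gradient map $\pi_i \mapsto \nabla_{\pi_i} u_i$; this is the local structural property consumed by the subsequent stochastic-approximation argument when controlling the drift of the mirror-ascent iterates toward $\pi^*$. I do not anticipate a substantive obstacle: the only points that need care are (i) keeping the sign conventions consistent — ``monotone'' meaning $\langle \nabla u(\pi)-\nabla u(\pi'),\pi-\pi'\rangle \le 0$ in this concave/maximization setting rather than $\ge 0$ — and (ii) confining every inequality to the neighborhood $\mathcal{N}$ where concavity is assumed, since $u$ need not be globally concave.
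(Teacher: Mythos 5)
Your argument is exactly the paper's: write the first-order concavity inequalities at two points, add them, and rearrange to get $\langle \nabla u(\pi)-\nabla u(\pi'),\,\pi-\pi'\rangle \le 0$. The additional remarks on sign conventions, strictness, and restricting to the neighborhood $\mathcal{N}$ are sensible clarifications but do not change the substance; the proposal is correct and matches the paper's proof.
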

\begin{proof}
By concavity, the first-order conditions for \(u\) at \(x\) and \(y\) give:
    \[
    \begin{aligned}
        u(y) &\leq u(x) + \nabla u(x)^\top (y - x), \\
        u(x) &\leq u(y) + \nabla u(y)^\top (x - y).
    \end{aligned}
    \]
Adding these inequalities:
    \[
    u(y) + u(x) \leq u(x) + u(y) + \nabla u(x)^\top (y - x) + \nabla u(y)^\top (x - y).
    \]
Simplifying:
    \[
    \left( \nabla u(x) - \nabla u(y) \right)^\top (x - y) \leq 0.
    \]
Thus, \(\nabla u\) is monotone:
    \[
\left( \nabla u(x) - \nabla u(y) \right)^\top (x - y) \leq 0 \quad \forall x, y \in \mathcal{X}.
    \]
\end{proof}

\begin{lemma}\label{One-step Bregman Descent}
Let \( \pi_{i,t+1} \) be the update given by mirror descent:
\[
\pi_{i,t+1} = \arg\min_{\pi_i \in \Pi_i} \left\{ \eta_t \langle -\nabla u_i(\pi_t), \pi_i \rangle + D_i(\pi_i, \pi_{i,t}) \right\},
\]
where \( D_i(\cdot, \cdot) \) is the Bregman divergence generated by a \( \mu \)-strongly convex function \( h_i \). Then for any \( \pi_i^* \in \Pi_i \), we have
\[
D_i(\pi_i^*, \pi_{i,t+1}) \le D_i(\pi_i^*, \pi_{i,t}) - \eta_t \langle \nabla V_i(\pi_t), \pi_{i,t} - \pi_i^* \rangle + \frac{L_i \eta_t^2}{2},
\]
where \( L_i > 0 \) is a constant that bounds \( \| \nabla V_i(\pi_t) \|^2 \) due to compactness of \( \Pi_i \).
\end{lemma}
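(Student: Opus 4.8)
This is the standard one-step (prox) inequality for mirror descent, transplanted to the per-agent game dynamics, so the plan is to obtain it from the three-point proximal inequality already recorded as Lemma~\ref{lemm:proximal inequality} together with the $\mu$-strong convexity of the distance-generating function $h_i$. Throughout I would write $V_i := -u_i$, so that the displayed update is exactly the proximal step $\pi_{i,t+1} = \operatorname{Prox}_{h_i}\bigl(\eta_t \nabla V_i(\pi_t),\, \pi_{i,t}\bigr)$ over the feasible set $\Pi_i$, which I take to be convex and compact with $h_i$ differentiable on it (this is the localized trust-region set $\Pi_{\mathrm{local}}$ of Section~\ref{sec:PEG}).

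First I would invoke Lemma~\ref{lemm:proximal inequality} with $\varphi = h_i$, $g = \eta_t \nabla V_i(\pi_t)$, $x = \pi_{i,t}$, $x' = \pi_{i,t+1}$, and test point $y = \pi_i^*$, then rearrange to isolate the left-hand side:
\[
D_i(\pi_i^* \| \pi_{i,t+1}) \le D_i(\pi_i^* \| \pi_{i,t}) - D_i(\pi_{i,t+1} \| \pi_{i,t}) + \eta_t \langle \nabla V_i(\pi_t),\, \pi_i^* - \pi_{i,t+1} \rangle .
\]
Next I would split $\pi_i^* - \pi_{i,t+1} = (\pi_i^* - \pi_{i,t}) + (\pi_{i,t} - \pi_{i,t+1})$: the first piece contributes exactly the ``gap'' term $-\eta_t\langle \nabla V_i(\pi_t),\, \pi_{i,t} - \pi_i^*\rangle$ appearing in the conclusion, and the second piece is a ``drift'' term $\eta_t\langle \nabla V_i(\pi_t),\, \pi_{i,t} - \pi_{i,t+1}\rangle$. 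I would bound the drift by Cauchy--Schwarz in the dual norm $\|\cdot\|_*$ and Young's inequality by $\tfrac{\eta_t^2}{2\mu}\|\nabla V_i(\pi_t)\|_*^2 + \tfrac{\mu}{2}\|\pi_{i,t}-\pi_{i,t+1}\|^2$, then use the $\mu$-strong convexity of $h_i$, which gives $D_i(\pi_{i,t+1}\|\pi_{i,t}) \ge \tfrac{\mu}{2}\|\pi_{i,t+1}-\pi_{i,t}\|^2$, so the two quadratic contributions cancel and only $\tfrac{\eta_t^2}{2\mu}\|\nabla V_i(\pi_t)\|_*^2$ survives. Finally, since $\Pi$ is compact and $\nabla V_i = -\nabla u_i$ is continuous (indeed Lipschitz, by the hypothesis on $\mathcal{G}$), $\sup_{\pi\in\Pi}\|\nabla V_i(\pi)\|_*^2$ is finite; setting $L_i := \mu^{-1}\sup_{\pi\in\Pi}\|\nabla V_i(\pi)\|_*^2$ (which for the negative-entropy/KL choice used here, where $\mu = 1$, literally bounds $\|\nabla V_i\|_*^2$) yields the stated inequality.

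I do not expect a genuine obstacle: this is essentially the textbook mirror-descent descent lemma. The only points needing care are (i) keeping the convention $V_i = -u_i$ straight, so that after the decomposition the gap term emerges with the correct sign to match the statement, and (ii) verifying that the hypotheses needed to apply Lemma~\ref{lemm:proximal inequality}—convex feasible set and differentiable $h_i$, so that the first-order/normal-cone optimality argument in its proof is valid—hold on the localized policy set, i.e.\ that the $\delta$-ball around $\pi^*$ intersected with the probability simplex is convex.
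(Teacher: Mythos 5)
Your proposal is correct and follows essentially the same route as the paper's proof: the paper likewise derives the three-point proximal inequality from the first-order optimality condition (which is exactly the content of Lemma~\ref{lemm:proximal inequality} that you cite), splits off the drift term $\eta_t\langle g_i,\pi_{i,t+1}-\pi_{i,t}\rangle$, and cancels it against $D_i(\pi_{i,t+1},\pi_{i,t})$ via Young's inequality and $\mu$-strong convexity, defining $L_i$ as $\mu^{-1}\sup_\pi\|\nabla u_i(\pi)\|_*^2$ by compactness. Your explicit convention $V_i=-u_i$ also correctly reconciles the sign discrepancy between the lemma's statement and the paper's own concluding inequality.
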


\begin{proof}
Let \( x = \pi_{i,t} \), \( x^+ = \pi_{i,t+1} \), \( z = \pi_i^* \), and \( g_i = \nabla u_i(\pi_t) \). The mirror descent update is:
\[
x^+ = \arg\min_{\pi_i \in \Pi_i} \left\{ -\eta_t \langle g_i, \pi_i \rangle + D_i(\pi_i, x) \right\}.
\]
By the first-order optimality condition for convex minimization over \( \Pi_i \), we have:
\[
\langle -\eta_t g_i + \nabla h_i(x^+) - \nabla h_i(x), z - x^+ \rangle \ge 0,
\]
which gives:
\[
\eta_t \langle g_i, z - x^+ \rangle \le \langle \nabla h_i(x^+) - \nabla h_i(x), z - x^+ \rangle.
\]
Using the three-point identity for Bregman divergence:
\[
\langle \nabla h_i(x^+) - \nabla h_i(x),  z - x^ \rangle = D_i(z, x) - D_i(z, x^+) - D_i(x^+, x),
\]
we substitute and rearrange:
\[
D_i(z, x^+) \le D_i(z, x) - \eta_t \langle g_i, z - x \rangle + \eta_t \langle g_i, x^+ - x \rangle - D_i(x^+, x).
\]
Now apply Young’s inequality:
\[
\eta_t \langle g_i, x^+ - x \rangle \le \frac{1}{2\mu} \| x^+ - x \|^2 + \frac{\mu \eta_t^2}{2} \| g_i \|_*^2.
\]
By strong convexity, \( D_i(x^+, x) \ge \frac{\mu}{2} \| x^+ - x \|^2 \), so:
\[
\eta_t \langle g_i, x^+ - x \rangle \le D_i(x^+, x) + \frac{\eta_t^2 \| g_i \|^2}{2\mu}.
\]
Substituting back, the \( D_i(x^+, x) \) term cancels, and we obtain:
\[
D_i(z, x^+) \le D_i(z, x) - \eta_t \langle g_i, z - x\rangle + \frac{\eta_t^2 \| g_i \|^2}{2\mu}.
\]
Let \( L_i := \frac{1}{\mu} \sup_{\pi \in \mathcal{K}} \| \nabla u_i(\pi) \|^2 \), which is finite due to compactness of \( \Pi_i \). Therefore:
\[
D_i(\pi_i^*, \pi_{i,t+1}) \le D_i(\pi_i^*, \pi_{i,t}) - \eta_t \langle \nabla u_i(\pi_t), \pi_i^* - \pi_{i,t}\rangle + \frac{L_i \eta_t^2}{2}.
\]
\end{proof}

Next we can move to the proof of Theorem \ref{Theo: convergence}.
\begin{proof}
We define an energy function that tracks the distance to the Nash equilibrium using Bregman divergences. For each player \( i \in \mathcal{N} \), let \( D_i \) denote the associated Bregman divergence:
\[
D_i(\pi_i^*, \pi_{i,t}) := h_i(\pi_i^*) - h_i(\pi_{i,t}) - \langle \nabla h_i(\pi_{i,t}), \pi_i^* - \pi_{i,t} \rangle.
\]
We define the total energy function as
\[
E_t := \sum_{i \in \mathcal{N}} D_i(\pi_i^*, \pi_{i,t}).
\]
Each player updates their strategy via mirror descent:
\[
\pi_{i,t+1} = \arg\min_{\pi_i \in \Pi_i} \left\{ \eta_t \langle -\nabla u_i(\pi_t), \pi_i \rangle + D_i(\pi_i, \pi_{i,t}) \right\}.
\]
With Lemma \ref{One-step Bregman Descent}, we have:
\[
D_i(\pi_i^*, \pi_{i,t+1}) \le D_i(\pi_i^*, \pi_{i,t}) - \eta_t \langle \nabla u_i(\pi_t), \pi_{i,t} - \pi_i^* \rangle + \frac{L_i \eta_t^2}{2}.
\]
Summing over all players, we obtain:
\[
E_{t+1} \le E_t - \eta_t \langle \nabla u(\pi_t), \pi_t - \pi^* \rangle + C \eta_t^2,
\]
for some constant \( C > 0 \), where \( \nabla V(\pi_t) = (\nabla u_1(\pi_t), \dots, \nabla u_N(\pi_t)) \).

Because the utility map \( \nabla u \) is strictly monotone, there exists \( \mu > 0 \) such that
\[
\langle \nabla u(\pi_t), \pi_t - \pi^* \rangle \ge \mu \|\pi_t - \pi^*\|^2.
\]
Substituting into the previous inequality yields:
\[
E_{t+1} \le E_t - \mu \eta_t \|\pi_t - \pi^*\|^2 + C \eta_t^2.
\]

This recursion is in the form required by the Robbins--Siegmund Lemma:
\[
E_{t+1} \le E_t - a_t + b_t,
\]
where \( a_t = \mu \eta_t \|\pi_t - \pi^*\|^2 \ge 0 \) and \( b_t = C \eta_t^2 \), with \( \sum_{t=1}^\infty b_t < \infty \) since \( \eta_t = O(1/t^p) \) with \( 2p > 1 \). Therefore, the Robbins--Siegmund Lemma implies that \( E_t \) converges almost surely to a finite random variable \( E_\infty \), and \( \sum_t \eta_t \|\pi_t - \pi^*\|^2 < \infty \). Since \( \sum_t \eta_t^2 < \infty \) and \( \sum_t \eta_t = \infty \),  \( \| \pi_t - \pi^* \| \to 0 \) almost surely.
\end{proof}

\section{PEG Algorithm}
\label{appendix:peg algorithm} 
\subsection{Algorithm}
\begin{algorithm}
\caption{Two-Phase PEG Algorithm with Task Batches}
\begin{algorithmic}[1]
\REQUIRE Dataset of questions $\{x_k\}$ (organized into batches of size 8 in the experiments); initial parameters $\theta_G$ for generator; initial parameters $\{\theta_{D_i}\}_{i=1}^n$ for discriminators; learning rate $\eta$.
\REPEAT
\FOR{each batch}
\STATE Generator samples answers:
        \(
        y_k \sim \pi_G(\cdot | x_k, v_k).
        \)
  \STATE \textbf{(Step 1) 
  Discriminator Updates:}
    \FOR{each discriminators \(D_i\)}
      \STATE \textit{(a) Discriminators provide judgments:}
        \(
        r_{ik} \sim \pi_{D_i}(\cdot |x_{k}, y_{k}).
        \)
      \STATE \textit{(b) Approximate gradients for each $D_i$ using the log-derivative trick:}
        \[
        \nabla_{\theta_{D_i}} u_{D_i} \approx
        \frac{1}{N}\sum_{\text{batches}}
        \Bigg[
        \sum_{j \neq i} \det(M_{i j}^1) \cdot \det(M_{i j}^2)
        \Bigg]
        \nabla_{\theta_{D_i}} \log \pi_{D_i}(r_{ik} | x_{k}, y_{k}).
        \]
      \STATE \textit{(c) OMD update for discriminator policies:}
        \(
        \pi_{D_i}^{(t+1)} \,\propto\, \pi_{D_i}^{(t)} 
        \exp\big(\eta \,\nabla_{\theta_{D_i}} u_{D_i}\big).
        \)
        \ENDFOR
  \STATE \textbf{(Step 2) Generator Update:}
      \STATE \textit{(a) Compute majority vote $\hat{v}_{k}$ for each question $x_{k}$ in the batch.}
      \STATE \textit{(b) Approximate gradient for the generator:}
        \[
        \nabla_{\theta_G} u_G \approx 
        \frac{1}{N}\sum_{\text{batches}}
        \mathbf{1}\big(v_{k} = \hat{v}_{k}\big)
        \nabla_{\theta_G} \log \pi_G(y_{k}| x_{k}, v_{k}).
        \]
      \STATE \textit{(c) PEG update for generator policy:}
        \(
        \pi_G^{(t+1)} \,\propto\, \pi_G^{(t)} 
        \exp\big(\eta \,\nabla_{\theta_G} u_G\big).
        \)
  \ENDFOR
\UNTIL{\textit{all policy have been updated}}
\STATE \textbf{Output:} final parameters $\theta_G$ and $\{\theta_{D_i}\}_{i=1}^n$.
\end{algorithmic}
\end{algorithm}

\subsection{Illustrative example}
\label{subsec:illustrative-example}  
\begin{figure}[H]  
    \centering
    \includegraphics[width=\textwidth]{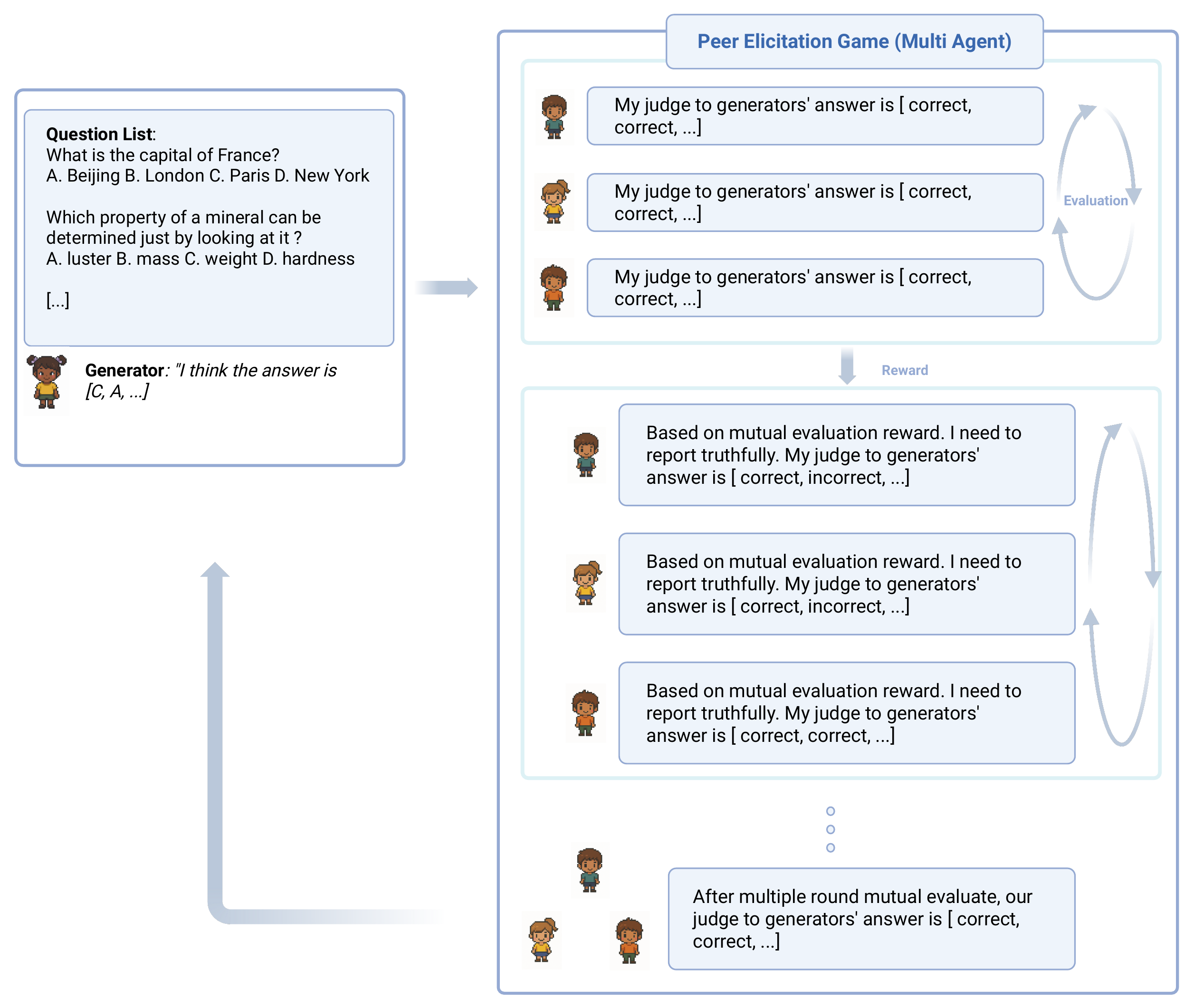}
    \vspace{-0.1in}  
    \caption{
    An illustrative example of PEG's peer evaluation process:
    (1) The generator answers a list of questions.
    (2) Discriminators evaluate these answers, with some providing untruthful reports.
    (3) Determinant-based utilities penalize non-truthful discriminators, incentivizing them to align their future reports with the ground truth.
    }
    \label{fig:peg_example}
\end{figure}
\section{Experiments Details}
\label{experiments details}
\subsection{Prompt}
\label{app:theorem}
\begin{tcolorbox}[generator_box]

The following are multiple choice questions (with answers) about Geology.
Question: Which property of a mineral can be determined just by looking at it?
\begin{itemize}
    \item[A)] luster
    \item[B)] mass
    \item[C)] weight
    \item[D)] hardness
\end{itemize}
 Answer:
\end{tcolorbox}

\begin{tcolorbox}[discriminator_box]
You are an expert evaluator of questions about Geology. 
Determine if the proposed answer is correct. Output ONLY 'A' or 'B'.

Question: Luster is the property of a mineral can be determined just by looking at it.

Proposed Answer: luster

Is this answer correct? Respond ONLY with:
A. Correct
B. Incorrect

Answer:
\end{tcolorbox}

\label{prompt}

\subsection{Datasets}
\label{Appendix:datasets}
Datasets from different domains are used in the experiments. Details are summarized below.

\begin{itemize}
\item \textbf{ARC} dataset \cite{clark2018think} evaluates scientific reasoning, includes both the 'easy' and 'challenge' sets. It consists of 7787 science questions, all nondiagrams, multiple choice (typically 4-way multiple choice).  The experiment applies a zero-shot setting for ARC, assessing how well the model navigates science-related questions requiring logical reasoning rather than memorized knowledge. 
\item \textbf{MMLU}
\cite{hendrycks2020measuring} is a benchmark with questions across humanities, STEM, and social sciences. It requires the model to demonstrate broad general knowledge. For MMLU, we evaluate our models following the format described in setting  \cite{hendrycks2020measuring}.
\item \textbf{GPQA} \cite{rein2024gpqa} is a challenging dataset designed to evaluate LLM capabilities and scalable oversight mechanisms. It consists of 448 multiple choice questions that cover biology, physics, and chemistry. These questions are intentionally designed to be high-quality and extremely difficult. The experiment will apply the zero-shot setting for GPQA.
\end{itemize}

\subsection{Computational Resources}\label{computational resources}
Our experiments utilized the following hardware configurations:
\begin{itemize}
    \item \textbf{Initial Policy Extraction:}
    \begin{itemize}
        \item \textbf{GPU:} A single NVIDIA A100 GPU with 200\,GB of memory.
        \item \textbf{CPU:} AMD EPYC 7542 or 9654 processors.
        \item \textbf{Throughput:} Approximately 2.2 iterations per second (it/s) for policy generation per question.
        \item \textbf{Estimated Runtime:} Varies by dataset size, typically requiring 10 minutes for 1000 questions.
    \end{itemize}
    
    \item \textbf{PEG Algorithm for Policy Update:}
    \begin{itemize}
        \item \textbf{GPU:} NVIDIA RTX 2080 Ti.
        \item \textbf{Runtime:} Significantly faster, all datasets under 20 seconds.
    \end{itemize}
\end{itemize}
\subsection{The impact of batch size}

\begin{figure}[H]
\centering
\scalebox{0.8}{ 
\begin{minipage}{\textwidth}
\begin{subfigure}{0.48\textwidth}
    \includegraphics[width=\linewidth]{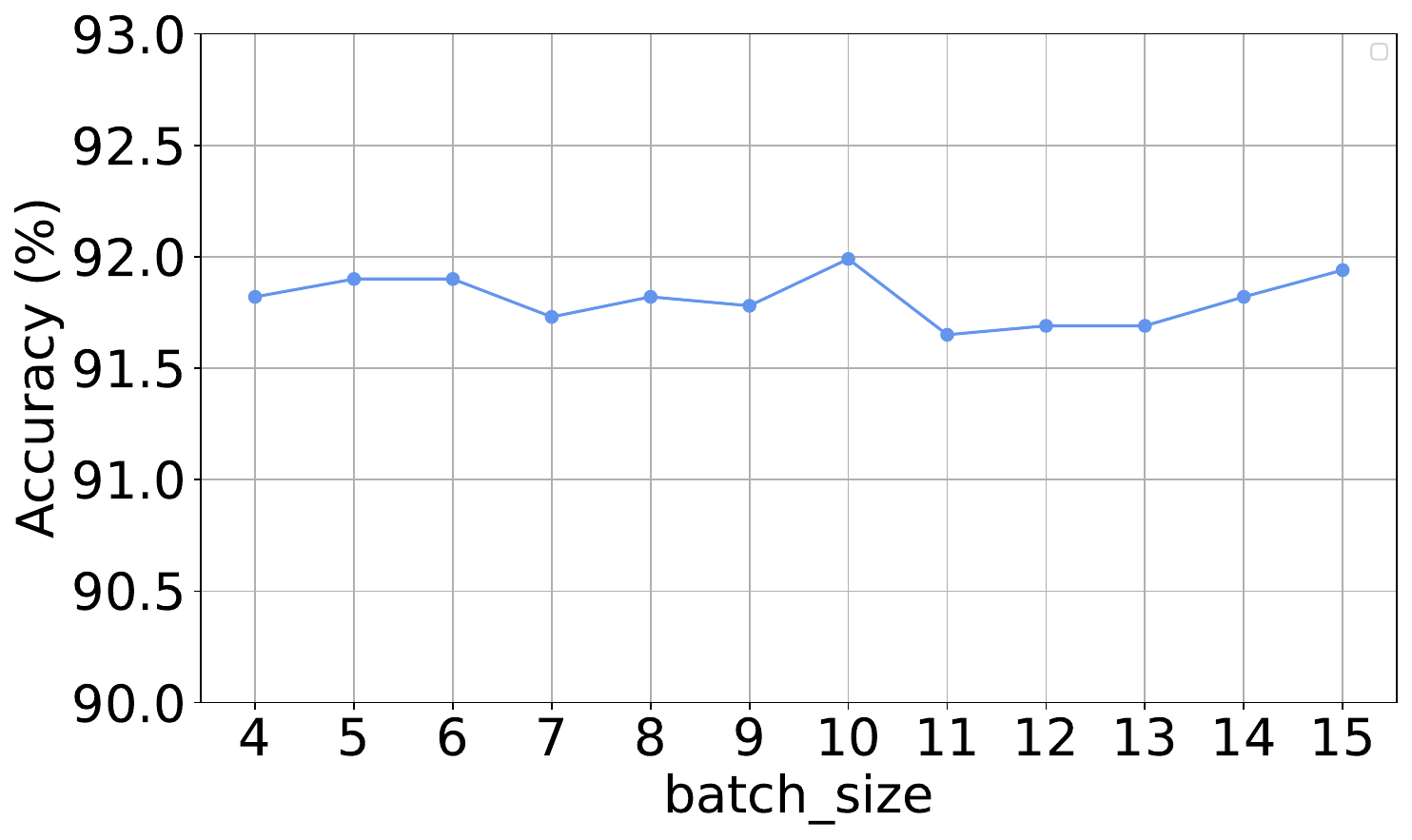}
    \caption{ARC Easy}
\end{subfigure}
\hfill
\begin{subfigure}{0.48\textwidth}
    \includegraphics[width=\linewidth]{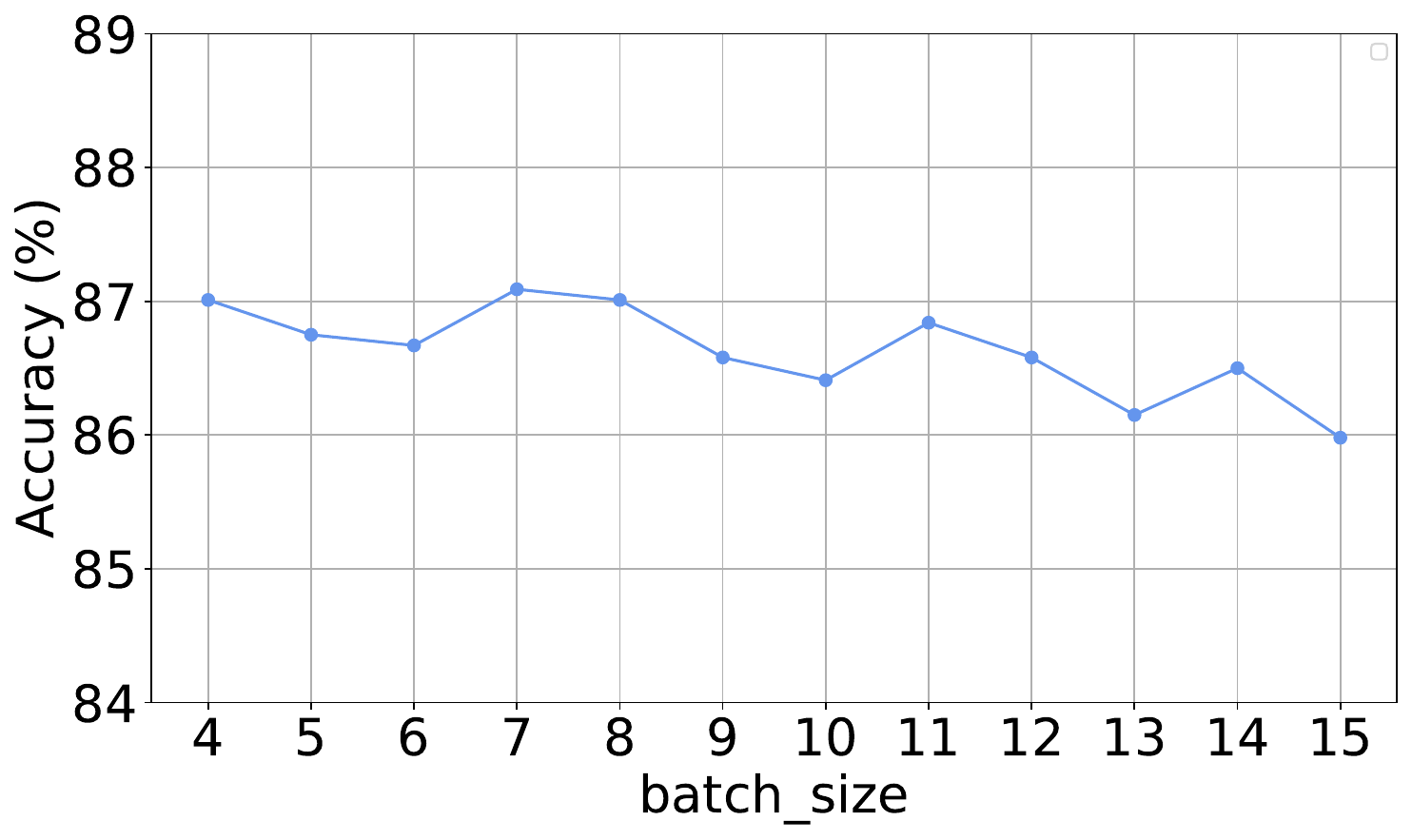}
    \caption{ARC Challenge}
\end{subfigure}

\vspace{0.3cm}

\begin{subfigure}{0.48\textwidth}
    \includegraphics[width=\linewidth]{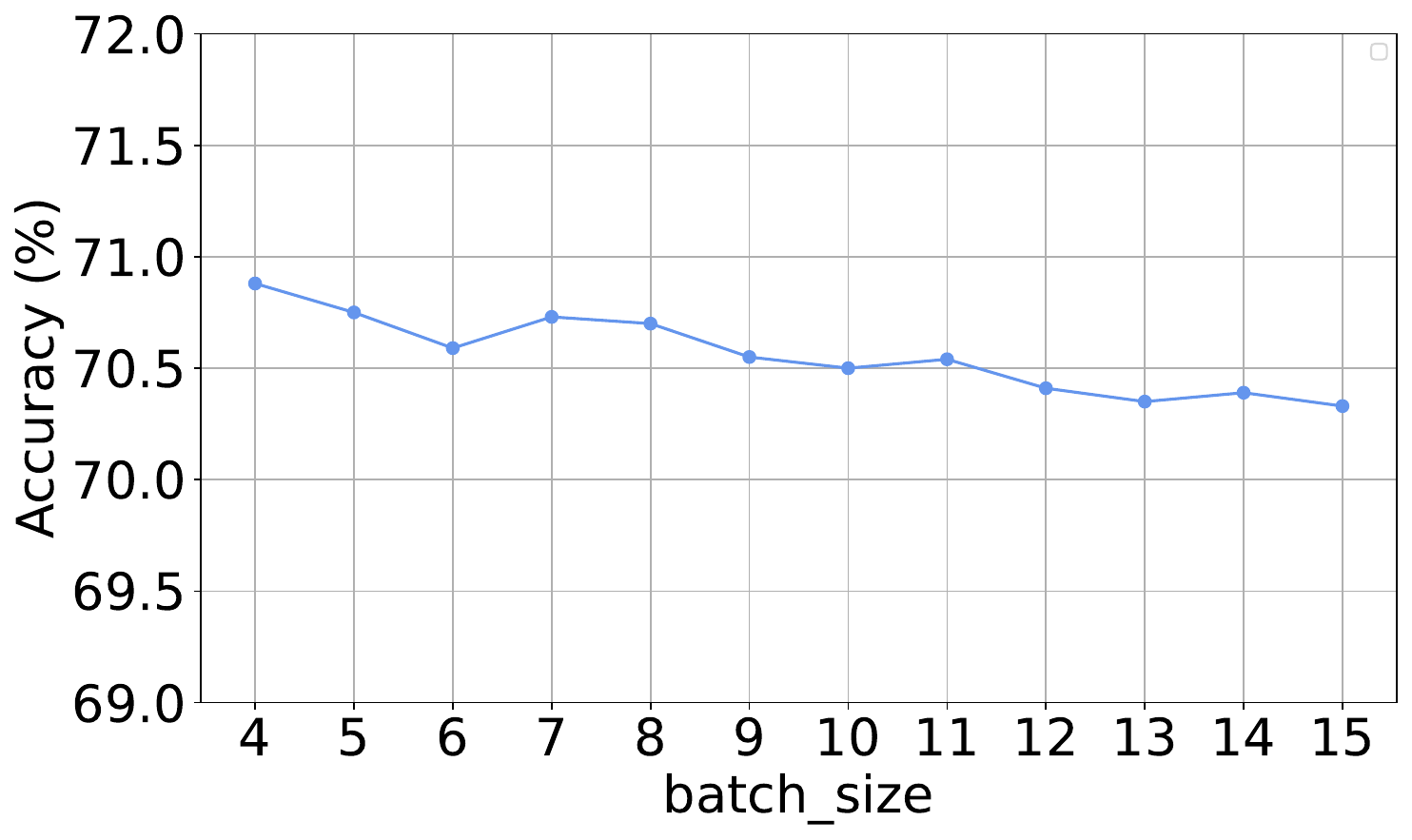}
    \caption{MMLU}
\end{subfigure}
\hfill
\begin{subfigure}{0.48\textwidth}
    \includegraphics[width=\linewidth]{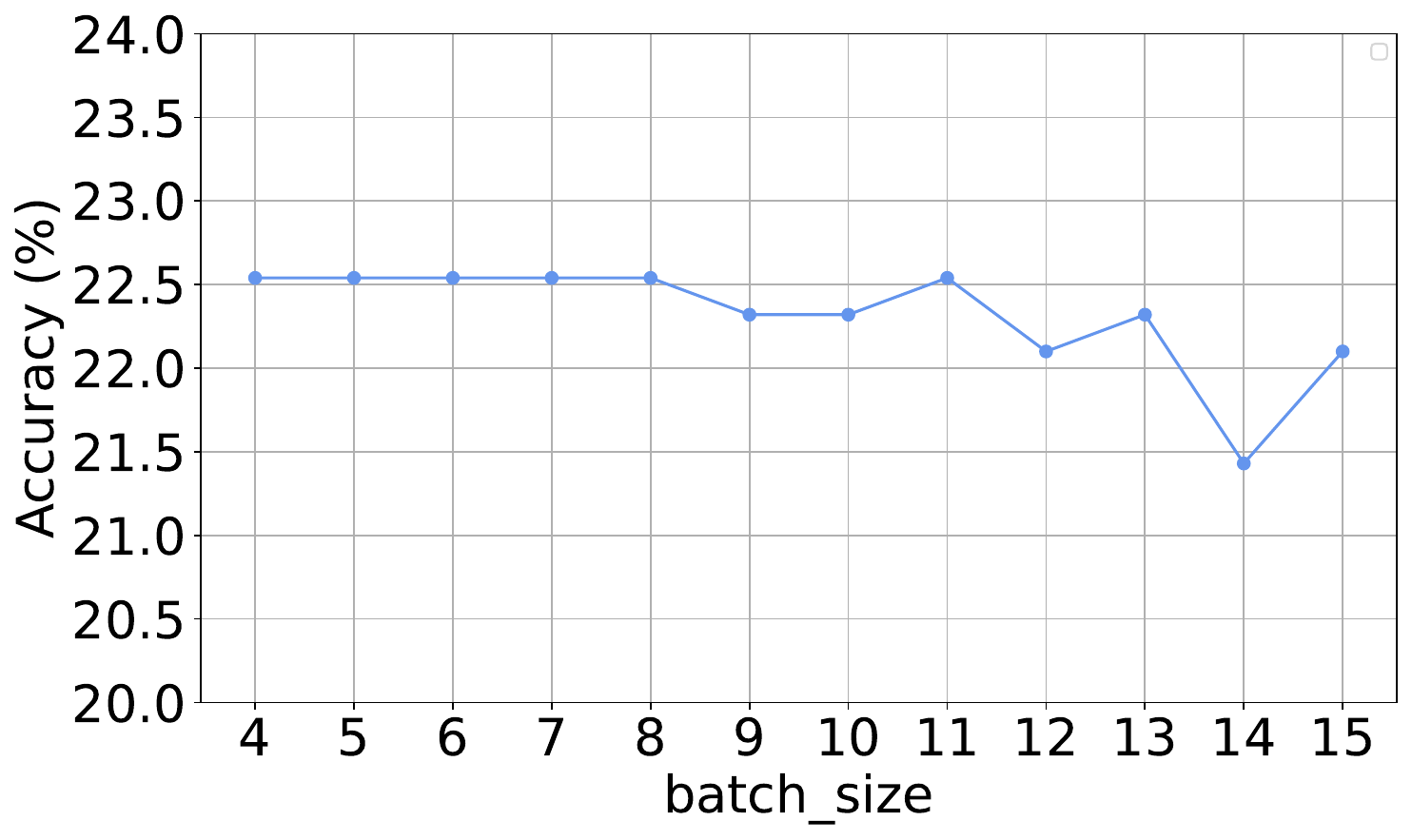}
    \caption{GPQA}
\end{subfigure}
\end{minipage}
}
\caption{Different Batch Size Effect on Majority Vote Accuracy}
\label{fig:batch_impact}
\end{figure}
We conducted experiments on four benchmark datasets by varying the discriminator update batch size from 4 to 15 (with 4 being the minimum required to form a valid batch in this setup) to evaluate its impact on majority vote accuracy. As shown in Figure~\ref{fig:batch_impact}, the accuracy remains relatively stable across batch sizes, though slight fluctuations can be observed depending on the dataset. For ARC-Easy, the accuracy is consistently high and robust to batch size, suggesting stable discriminator learning on simpler, more homogeneous questions. In contrast, the other datasets display more variability and a slight decreasing trend as batch size increases. This performance drop may due to the greater diversity among questions within larger batches, making it harder for the discriminators to agree and learn from a consistent utility signal. These findings align with Assumption~\ref{assump:similar_tasks}, which presumes that tasks within a batch are drawn from a common underlying distribution. Overall, the PEG algorithm demonstrates robustness to batch size variation when the assumption of task similarity holds.

\newpage
\subsection{Sample Outputs}
\begin{figure}[H]
\centering
\begin{minipage}{0.84\textwidth}
    \centering
    \begin{subfigure}{\textwidth}
        \includegraphics[width=\linewidth]{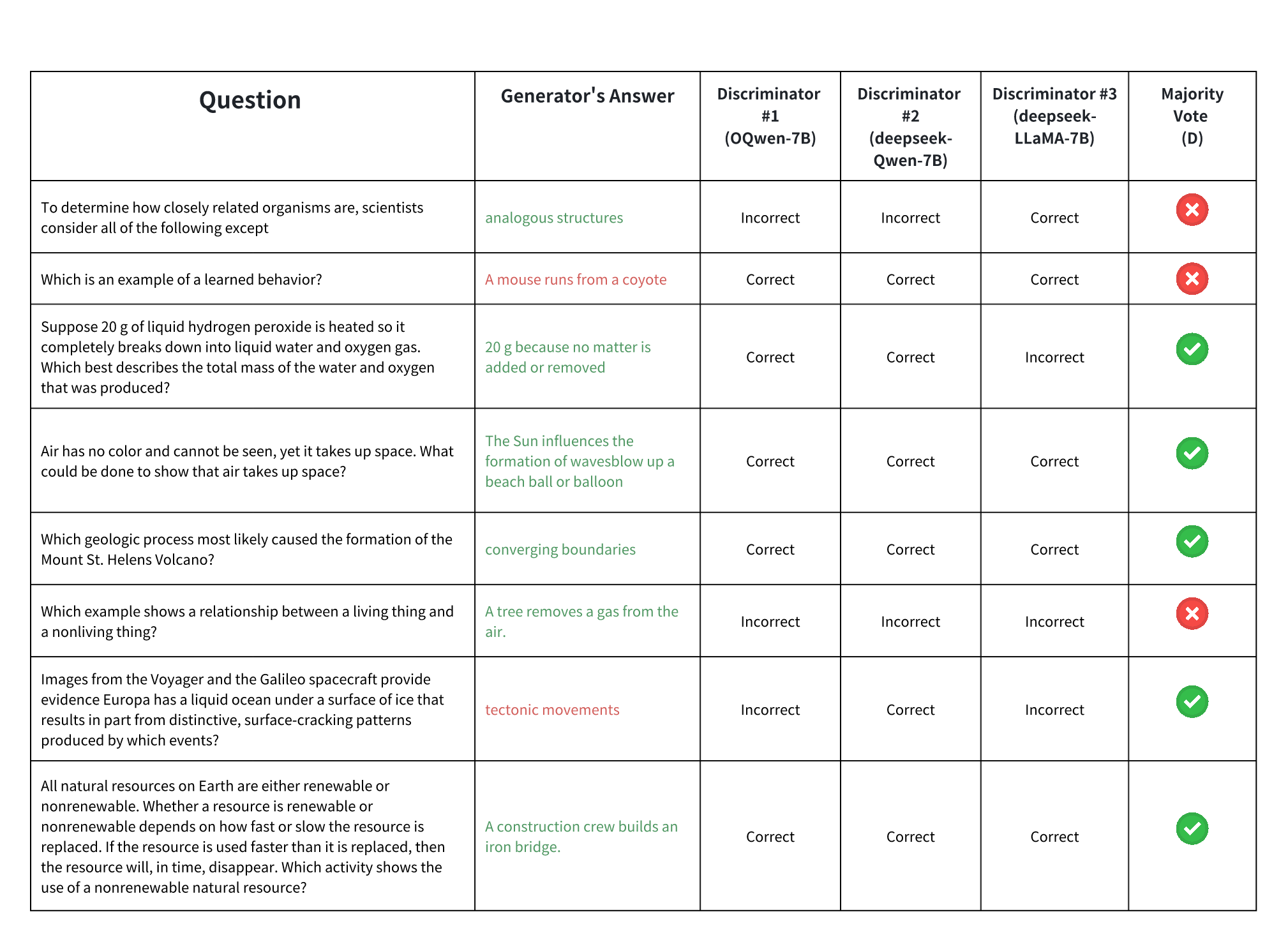}
        \caption{Initial Discriminator Responses}
        \label{fig:sample_result_initial}
    \end{subfigure}
    
    \vspace{0.5cm} 

    \begin{subfigure}{\textwidth}
        \includegraphics[width=\linewidth]{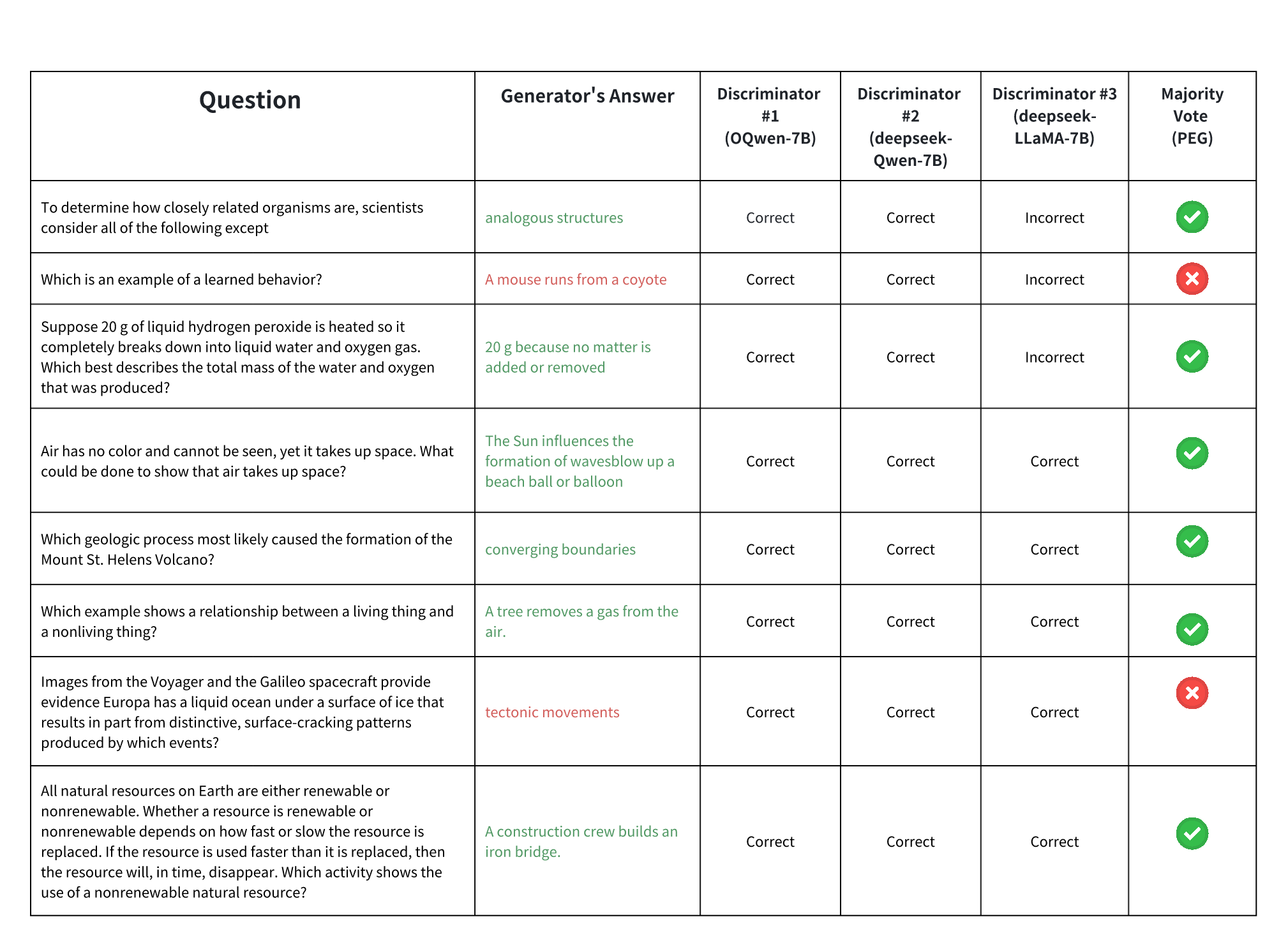}
        \caption{Updated Discriminator Responses via PEG}
        \label{fig:sample_result_updated}
    \end{subfigure}
    \label{fig:sample_result}
\end{minipage}
\caption{A batch example from the ARC-Challenge dataset showing discriminator responses before and after PEG-based policy updates. The top table illustrates the initial disagreement among discriminators, while the bottom table demonstrates improved convergence following utility-based updates. Red text indicates that the generator produced an incorrect answer. \textcolor{red}{Red text} indicates that the generator produced a incorrect answer. A \textcolor{green}{green arrow} highlights cases where the majority vote correctly judged the validity of the answer.}
\label{fig:peg_batch_example}
\end{figure}

Figure\ref{fig:peg_batch_example} shows a batch from the ARC-Challenge dataset that illustrates the discriminator responses before and after PEG-based policy updates. In the top table \ref{fig:sample_result_initial}, we observe noticeable disagreement among the three discriminators, reflecting their differing judgment capabilities across questions. For instance, Discriminator \#3 (deepseek-LLaMA-7B) correctly identifies a case that the other two fail to judge, while in other questions it falls behind the others. After applying PEG \ref{fig:sample_result_updated}, the discriminators show improved agreement. In most cases, they converge to the correct judgment, leading to a higher overall decision accuracy.

\subsection{The impact of number of iterations}\label{sec: iterations}
We conducted experiments on four benchmark datasets with \{10, 20, 30, 40, 50\} iterations using a fixed learning rate of 0.1 and a batch size of 8. From Table \ref{tab:peg_iterations}, we observe that PEG's performance is stable and does not require many iterations to achieve strong performance, with only minimal changes in accuracy and a few questions.  We attribute this result to the fact that our updates operate directly in the output policy space rather than modifying model parameters, allowing for faster convergence within a few iterations.

\begin{table}[ht]
\centering
\caption{Accuracy (\%) of PEG with different numbers of iterations across four benchmark datasets.}
\vspace{0.85\baselineskip}
\begin{tabular}{cccccc}
\toprule
\textbf{Iterations} & \textbf{10} & \textbf{20} & \textbf{30} & \textbf{40} & \textbf{50} \\
\midrule
ARC-Challenge & 87.01 & 87.01 & 87.01 & 87.01 & 87.01 \\
ARC-Easy      & 91.78 & 91.82 & 91.78 & 91.78 & 91.78 \\
MMLU          & 70.78 & 70.81 & 70.81 & 70.79 & 70.81 \\
GPQA          & 22.54 & 22.54 & 22.54 & 22.54 & 22.54 \\
\bottomrule
\end{tabular}
\label{tab:peg_iterations}
\end{table}

\subsection{The impact of learning rate}\label{sec: learning rate}
We conducted experiments on four benchmark datasets using a fixed number of 10 iterations with difference choices of learning rates in Table \ref{tab:peg_learning_rate}. Intuitively, smaller learning rates (e.g., $\leq 0.1$) yield stable performance without compromising convergence speed. In contrast, larger learning rates degrade performance, likely due to the fact that the updates are applied directly to the output distribution, which lies within a bounded space $[0, 1]$. As a result, overly aggressive updates may lead to oscillation or failure to converge.  

\begin{table}[ht]
\centering
\caption{Accuracy (\%) of PEG with different learning rates across four benchmark datasets.}
\vspace{0.85\baselineskip}
\begin{tabular}{cccccc}
\toprule
\textbf{Learning Rate} & \textbf{0.01} & \textbf{0.05} & \textbf{0.1} & \textbf{0.15} & \textbf{0.2} \\
\midrule
ARC-Challenge & 87.01 & 87.01 & 87.01 & 86.50 & 80.68 \\
ARC-Easy      & 91.82 & 91.82 & 91.82 & 91.65 & 88.44 \\
MMLU          & 70.89 & 70.88 & 70.73 & 68.51 & 61.06 \\
GPQA          & 22.54 & 22.54 & 22.54 & 22.32 & 19.42 \\
\bottomrule
\end{tabular}
\label{tab:peg_learning_rate}
\end{table}

\subsection{Impact of degree of initial disagreements}
We conducted further analysis on the effectiveness of our method by categorizing problems in each dataset into different levels of difficulty based on the initial level of disagreement among the discriminators. We report the accuracy for each method across these groups on both ARC-Challenge and MMLU in Table \ref{tab:agreement_disagreement_results}.
Our analysis reveals three key findings. First, PEG consistently improves accuracy across all levels of initial disagreement, suggesting that it enables agents to explore diverse judgments and learn from each other to reach better consensus. Second, a counter-intuitive finding is that cases with initial disagreement among discriminators actually result in higher accuracy compared to those with full agreement. We believe this phenomenon occurs because agreement alone does not ensure correctness: all discriminators may still converge on an incorrect answer. In contrast, disagreement introduces diversity, increasing the likelihood that at least one agent is correct, and serves as a useful signal to trigger further exploration. Third, PEG achieves the highest accuracy across all levels compared to other baselines methods, validating its effectiveness under different settings.

\begin{table}[ht]
\centering
\caption{Accuracy (\%) under different initial agreement settings for ARC-Challenge and MMLU datasets. \textbf{Bold} indicates the best performance in each row.}
\vspace{0.85\baselineskip}
\begin{tabular}{cccccc}
\toprule
\textbf{Dataset} & \textbf{Setting} & \textbf{G} & \textbf{MI} & \textbf{ER-D} & \textbf{PEG} \\
\midrule
\multirow{2}{*}{ARC-Challenge} 
  & Agreement    & 64.31 & 73.85 & 72.79 & \textbf{84.10} \\
  & Disagreement & 76.89 & 81.96 & 82.41 & \textbf{87.94} \\
\midrule
\multirow{2}{*}{MMLU} 
  & Agreement    & 50.54 & 60.11 & 60.17 & \textbf{66.93} \\
  & Disagreement & 57.96 & 65.19 & 65.02 & \textbf{72.20} \\
\bottomrule
\end{tabular}
\label{tab:agreement_disagreement_results}
\end{table}

\section{Potential Social Impact}\label{potential social impact}
Our proposed peer elicitation game contributes to the broader goal of building trustworthy and accountable language models by explicitly incentivizing truthful behavior through multi-agent interaction. By avoiding reliance on supervised fine-tuning and instead leveraging incentive-compatible mechanisms, the method has the potential to enhance factual reliability in resource-constrained or safety-critical settings such as education, healthcare information retrieval, and scientific communication. However, as with any mechanism involving agent-based interactions, care must be taken to ensure transparency in deployment and to prevent gaming of the system by adversarial agents.

\end{document}